\newtheorem{theorem}{Theorem}[section]
\newtheorem{lemma}[theorem]{Lemma}
\DeclareMathOperator*{\Ave}{Ave}
\DeclareMathOperator*{\diag}{diag}
\DeclareMathOperator*{\Hess}{\text{Hess}}
\newcommand{\R}{\mathbb{R}}
\newcommand{\Lagr}{\mathcal{L}}
\newcommand{\one}{{\mathbbm{1}}}
\icmltitlerunning{Measurements of Three-Level Hierarchical Structure in the Outliers in the Spectrum of Deepnet Hessians}
\begin{document}

\twocolumn[
\icmltitle{Measurements of Three-Level Hierarchical Structure\\ in the Outliers in the Spectrum of Deepnet Hessians}

% It is OKAY to include author information, even for blind
% submissions: the style file will automatically remove it for you
% unless you've provided the [accepted] option to the icml2019
% package.

% List of affiliations: The first argument should be a (short)
% identifier you will use later to specify author affiliations
% Academic affiliations should list Department, University, City, Region, Country
% Industry affiliations should list Company, City, Region, Country

% You can specify symbols, otherwise they are numbered in order.
% Ideally, you should not use this facility. Affiliations will be numbered
% in order of appearance and this is the preferred way.
\icmlsetsymbol{equal}{*}

\begin{icmlauthorlist}
\icmlauthor{Vardan Papyan}{stanford}
\end{icmlauthorlist}

\icmlaffiliation{stanford}{Department of Statistics, Stanford University, California 94305, USA}

\icmlcorrespondingauthor{Vardan Papyan}{papyan@stanford.edu}

% You may provide any keywords that you
% find helpful for describing your paper; these are used to populate
% the "keywords" metadata in the PDF but will not be shown in the document
\icmlkeywords{Machine Learning, ICML}

\vskip 0.3in
]

% this must go after the closing bracket ] following \twocolumn[ ...

% This command actually creates the footnote in the first column
% listing the affiliations and the copyright notice.
% The command takes one argument, which is text to display at the start of the footnote.
% The \icmlEqualContribution command is standard text for equal contribution.
% Remove it (just {}) if you do not need this facility.

\printAffiliationsAndNotice{}  % leave blank if no need to mention equal contribution
%\printAffiliationsAndNotice{\icmlEqualContribution} % otherwise use the standard text.

\begin{abstract}
We consider deep classifying neural networks.
We expose a structure in the derivative of the logits with respect to the parameters of the model, which is used to explain the existence of outliers in the spectrum of the Hessian.
Previous works decomposed the Hessian into two components, attributing the outliers to one of them, the so-called Covariance of gradients.
We show this term is not a Covariance but a second moment matrix, i.e., it is influenced by means of gradients.
These means possess an additive two-way structure that is the source of the outliers in the spectrum.
This structure can be used to approximate the principal subspace of the Hessian using certain ``averaging'' operations, avoiding the need for high-dimensional eigenanalysis.
We corroborate this claim across different datasets, architectures and sample sizes.
\end{abstract}

\section{Introduction}
We consider a $C$-class classification problem. We are given a sample of $n$ training examples, $n_c$ in each class, $\bigcup_{c=1}^C \{ (x_{i,c}, y_c) \}_{i=1}^{n_c}$, where $x_{i,c}$ is the $i$-th example in the $c$-th class and $y_c$ is its corresponding one-hot vector. The goal is to predict the labels of unseen data based on the limited examples provided for training. State-of-the-art methods fit a deep neural network, parameterized by a vector of parameters $\theta \in \R^p$, to the training data by minimizing the empirical loss
\begin{equation} \label{eq:loss}
     \Lagr(\theta) = \Ave_{i,c} \{ \ell ( f(x_{i,c}; \theta), y_c) \},
\end{equation}
averaged across the training data through the operator $\Ave_{i,c}$. Here, $f(x_{i,c}; \theta) \in \R^C$ are the logits -- the output of the classifier prior to the softmax layer -- while \mbox{$\ell ( f(x_{i,c}; \theta), y_c ) \in \R^+$} is the cross-entropy loss between the softmax of $f(x_{i,c}; \theta)$ and the one-hot vector $y_c$.

In this work, we investigate the Hessian of the training loss, given by
\begin{equation}
    \Hess(\theta) = \Ave_{i,c} \left\{ \pdv[2]{\ell ( f(x_{i,c}; \theta), y_c)}{\theta} \right\}.
\end{equation}
Using the Gauss-Newton decomposition, the above can be written as a summation of two components
\begin{align}
    & \Hess(\theta) \\
    = & \underbrace{\Ave_{i,c} \left\{ \sum_{c'=1}^C \pdv{\ell ( z, y_c)}{z_{c'}} \Bigg|_{z=f(x_{i,c}; \theta)} \pdv[2]{f_{c'}(x_{i,c}; \theta)}{\theta} \right\}}_H \nonumber \\
    + & \underbrace{\Ave_{i,c} \left\{ \pdv{f(x_{i,c};\theta)}{\theta}^T \pdv[2]{\ell ( z, y_c)}{z} \Bigg|_{z = f(x_{i,c}; \theta)} \pdv{f(x_{i,c};\theta)}{\theta} \right\}}_G, \nonumber
\end{align}
where $f_{c'}(x_{i,c}; \theta)$ is the value in the $c'$-th coordinate of the logits $f(x_{i,c}; \theta)$ (similarly for $z_{c'}$). In what follows, we refer to $c'$ as a \textit{logit coordinate}.

Many works studied the Hessian over the years, both from the theoretical and practical point of view \cite{hochreiter1997flat,keskar2016large,chaudhari2016entropy,dinh2017sharp,hoffer2017train,pennington2017geometry,pennington2018spectrum,jastrzkebski2018relation,yaida2018fluctuation,geiger2018jamming,spigler2018jamming}. Of particular relevance to us are two recent works that studied the spectrum of the Hessian. In \cite{sagun2016eigenvalues,sagun2017empirical} the authors showed on small-scale networks that the spectrum exhibits a `spiked' behavior, with $C$ outliers isolated from a continuous bulk. In \cite{papyan2018full}, the authors corroborated these findings on modern deepnets with tens of millions of parameters, by applying state-of-the-art tools in modern high-dimensional numerical linear algebra to approximate the full spectrum of the Hessian. They showed that the the outliers can be attributed to the $G$ component, while the majority of the energy in the bulk can be attributed to the $H$ component.

In this work, our goal is to shed light on what is the origin of the outliers observed in $G$. We provide two motivations for this question:
\begin{enumerate}
    \item In \cite{gur2018gradient} the authors analyzed the dynamics of stochastic gradient descent (SGD) \textit{as a function of epochs}. They observed that the gradients of SGD live in a small subspace of rank $C$, spanned by the top eigenvectors of the Hessian, and remarked that utilizing this low-dimensional eigenspace could lead to optimization benefits. These same top eigenvectors of the Hessian were attributed to the $G$ term \mbox{in \cite{papyan2018full}}.
    
    In this paper we show that these outliers are caused by a certain structure in the data underlying $G$. Once this underlying structure is known, we can efficiently compute approximations to the principal subspace. The necessary computations are much simpler even than the power method.
    
    \item In \cite{papyan2018full} the authors initiated the investigation of the separation of the top outliers from the bulk \textit{as a function of sample size}.
    
    In this work we make progress on this question directly by investigating the dynamics of the outliers as a function of sample size. We explain the structure causing the outliers and predict their size without performing eigenanalysis, but rather averaging certain quantities. This provides an alternative to eigenanalysis, which might be easier to analyze and might have better theoretical properties.
\end{enumerate}

\subsection{Contributions}
We commence this work with the observation that $G$ is a second moment matrix and not a Covariance -- the difference between the two being that in the latter a mean is not subtracted from each sample. The aforementioned outliers are a direct sequence of this lack of centering operation and can be computed from the means not being subtracted.

We then show that $G = \frac{1}{n} \Delta \Delta^T$. The rows of $\Delta \in \R^{p \times nC}$ correspond to the coordinates in the space of model parameters and the columns of $\Delta$ can be indexed by three indices, $(i,c,c')$: $i$ corresponds to the index of a sample in a certain class, $c$ corresponds to the class, and $c'$ corresponds to a logit coordinate. Given this indexing, each column in $\Delta$ can be denoted by $\delta_{i,c,c'}$. The $i$-th sample in the $c$-th class has $C$ columns in $\Delta$ associated with it, $\{ \delta_{i,c,c'} \}_{c'}$. These correspond to the $C$ logit coordinates. We depict the matrix $\Delta$ and its partitioning in Figure \ref{fig:Delta}.

This indexing naturally partitions the columns in $\Delta$ into $C^2$ groups -- one for each combination of class $c$ and logit coordinate $c'$. Each of these groups can be characterized by a \textit{group} mean $\delta_{c,c'}$ and a Covariance $\Sigma_{c,c'}$, which are computed from of all the columns that fall into it, $\{ \delta_{i,c,c'} \}_i$. The collection of all group means $\{ \delta_{c,c'} \}_{c' \neq c}$ associated with the same class $c$, but different logit coordinates $c'$, can be considered a \textit{cluster}, characterized by its mean $\delta_c$ and Covariance $\Sigma_c$.

\begin{figure}[t!]
    \centering
    \includegraphics[width=0.5\textwidth]{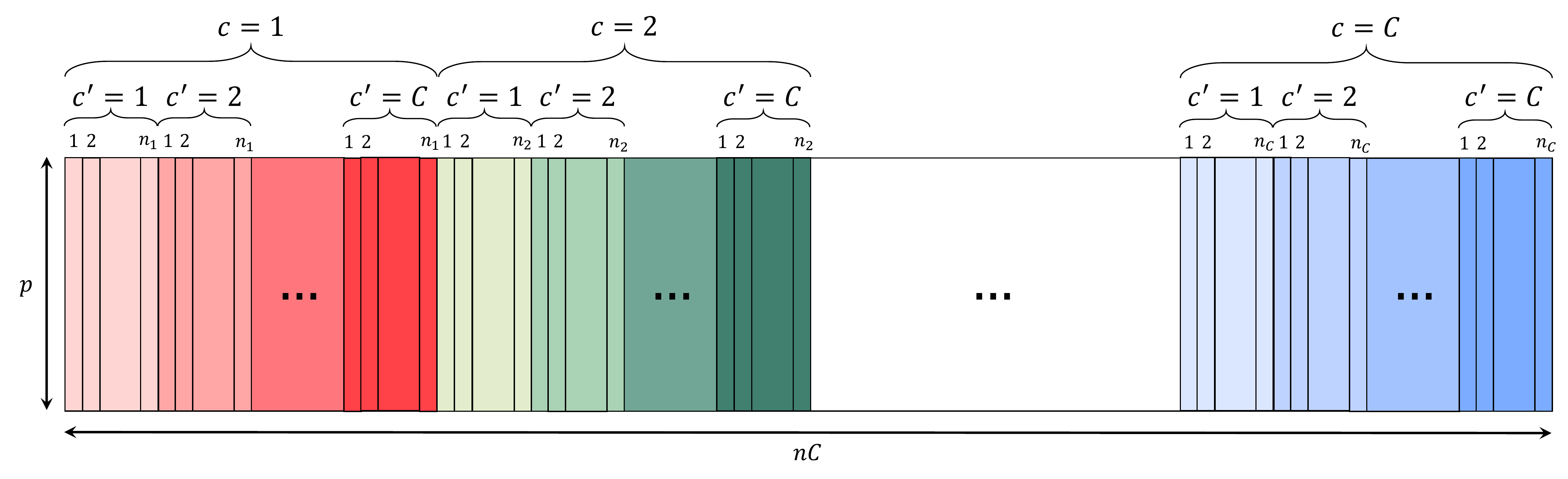}
    \caption{\textit{Partitioning of the columns in $\Delta \in \R^{p \times nC}$.} The columns can be indexed by three indices, $(i,c,c')$: $i$ corresponds to the index of a sample in a certain class, $c$ corresponds to the class, and $c'$ corresponds to a logit coordinate. Given this indexing, each column is denoted by $\delta_{i,c,c'}$.
    } \label{fig:Delta}
\end{figure}

Intuitively, we think of $\{ \delta_{i,c,c'} \}_i$ as members of a \textit{group} with a group mean $\delta_{c,c'}$ and Covariance $\Sigma_{c,c'}$. Moreover, we think of the group means $\{ \delta_{c,c'} \}_{c' \neq c}$ as being themselves members of a \textit{cluster} with a cluster center $\delta_c$ and Covariance $\Sigma_c$. Figure \ref{fig:definitions} illustrates this intuition while summarizing the above-mentioned definitions. This figure also defines other objects ($G_0, \dots, G_3$) that will be introduced in the next sections.

Our main finding in this work is that the top-$C$ outliers in the spectrum of $G$ can be approximated from the eigenvalues of the matrix $G_1 = \Ave_c \{ \delta_c \delta_c^T \}$. Equally, these could be approximated from the Gram of cluster centers $\{ \delta_c \}_c$.

We show that the cluster centers $\{ \delta_c \}_c$ are far apart and the cluster members $\{ \delta_{c,c'} \}_{c' \neq c}$ are tightly scattered around the cluster center. In other words, the \textit{within-cluster variation} is small compared to the \textit{between-cluster variation}. This configuration makes the outliers in $G$ attributable to the Gram of the cluster centers. We illustrate this phenomenon in Figures \ref{fig:tSNE} and \ref{fig:imagenet}, showing t-SNE \cite{maaten2008visualizing} plots of the cluster members $\{ \delta_{c,c'} \}_{c,c'}$ and the cluster centers $\{ \delta_c \}_c$.

We also investigate this phenomenon throughout the epochs of SGD. Figure \ref{fig:imagenet_epochs} shows t-SNE plots of the cluster members $\{ \delta_{c,c'} \}_{c,c'}$ in different epochs. We observe that the cluster members $\{ \delta_{c,c'} \}_{c,c'}$ cluster around the cluster centers $\{ \delta_c \}_c$ only after a certain number of epochs. Prior to that the cluster members $\{ \delta_{c,c'} \}_{c,c'}$ cluster according to the logit coordinate $c'$ and not the true class $c$.

We substantiate our claims, regarding a connection between the outliers in $G$ and the corresponding eigenvalues of the Gram of cluster centers, by testing them empirically across different canonical datasets, contest winning architectures and various training sample sizes. We observe that the top-$C$ outliers in $G$ deviate from their predicted value by a small margin. This phenomenon is well known in the context of Random Matrix Theory (RMT), where the magnitude of such deviations can be computed using dedicated tools.

\begin{figure}[t]
    \centering
    \includegraphics[width=0.35\textwidth]{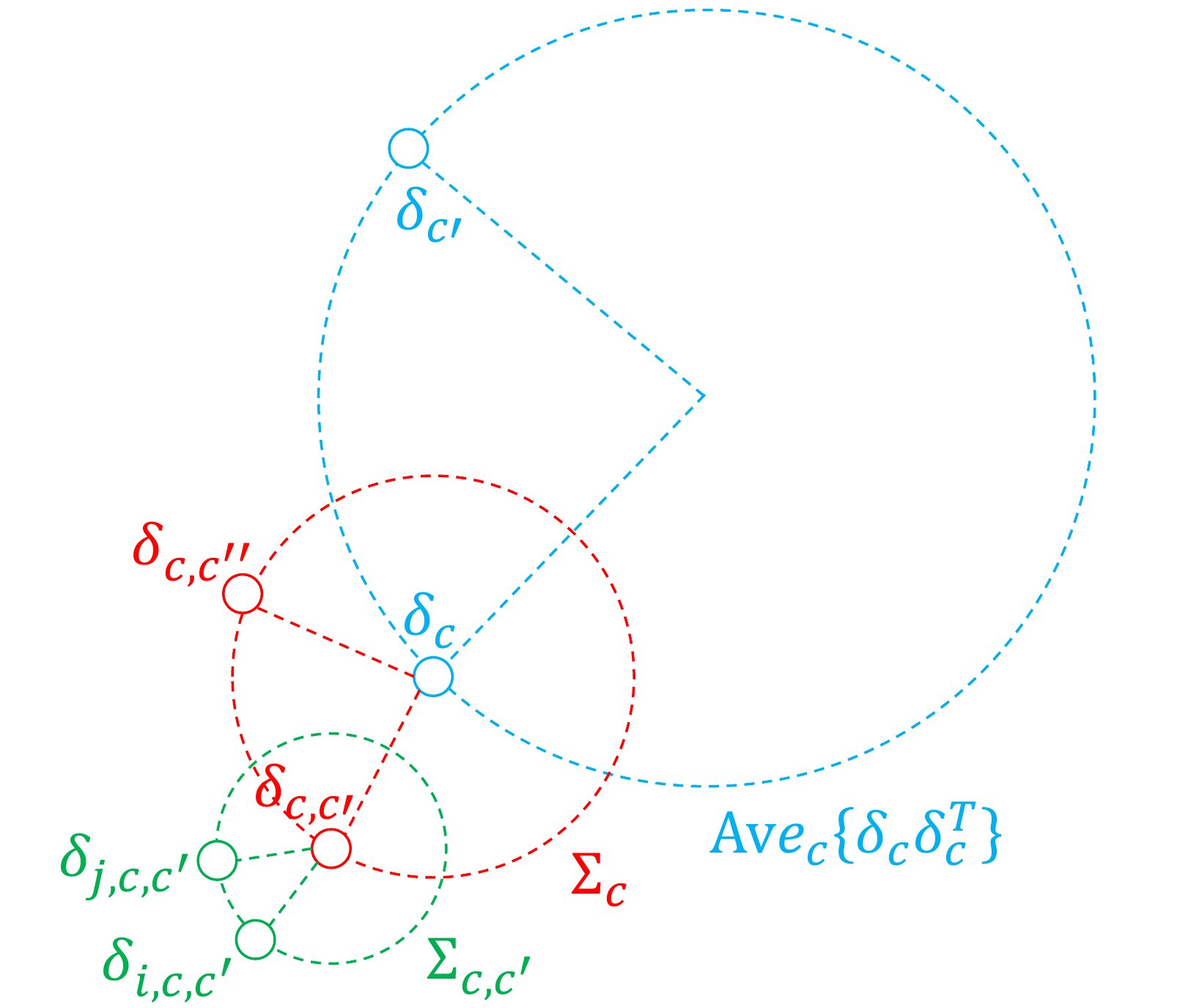}
    \caption{\textit{Three-level hierarchical decomposition of the second moment matrix $G=G_0+G_1+G_2+G_3$.} The coarsest level, depicted in \textbf{{\color{Cerulean}blue}}, is comprised of $\{ \delta_c \}_c$, whose second moment is given by $G_1 = \Ave_c \{ \delta_c \delta_c^T \}$. The top-$C$ outliers observed in the spectrum of $G$ are due to eigenvalues of this matrix. The middle level, depicted in \textbf{{\color{Red}red}}, is composed of $\{ \delta_{c,c'} \}_{c,c'}$. For a certain $c$, $\{ \delta_{c,c'} \}_{c' \neq c}$ are centered around $\delta_c$ and their Covariance is $\Sigma_c$. Averaging this Covariance over all classes results in $G_2 = \Ave_c \{ \Sigma_c \}$. The finest level, depicted in \textbf{{\color{ForestGreen}green}}, includes all the logit derivatives $\{ \delta_{i,c,c'} \}_i$. For a given pair of $c$ and $c'$, $\{ \delta_{i,c,c'} \}_i$ are centered around $\delta_{c,c'}$ and their Covariance is $\Sigma_{c,c'}$. Averaging this Covariance over all pairs of classes gives $G_3 = \frac{1}{C} \sum_{c,c'} \Sigma_{c,c'}$. The elements $\{ \delta_{c,c} \}_c$ are not plotted above, however their location is at the point zero.
    } \label{fig:definitions}
\end{figure}

\begin{figure*}
    \centering
    \begin{subfigure}[t]{0.33\textwidth}
        \centering
        \includegraphics[width=1\textwidth]{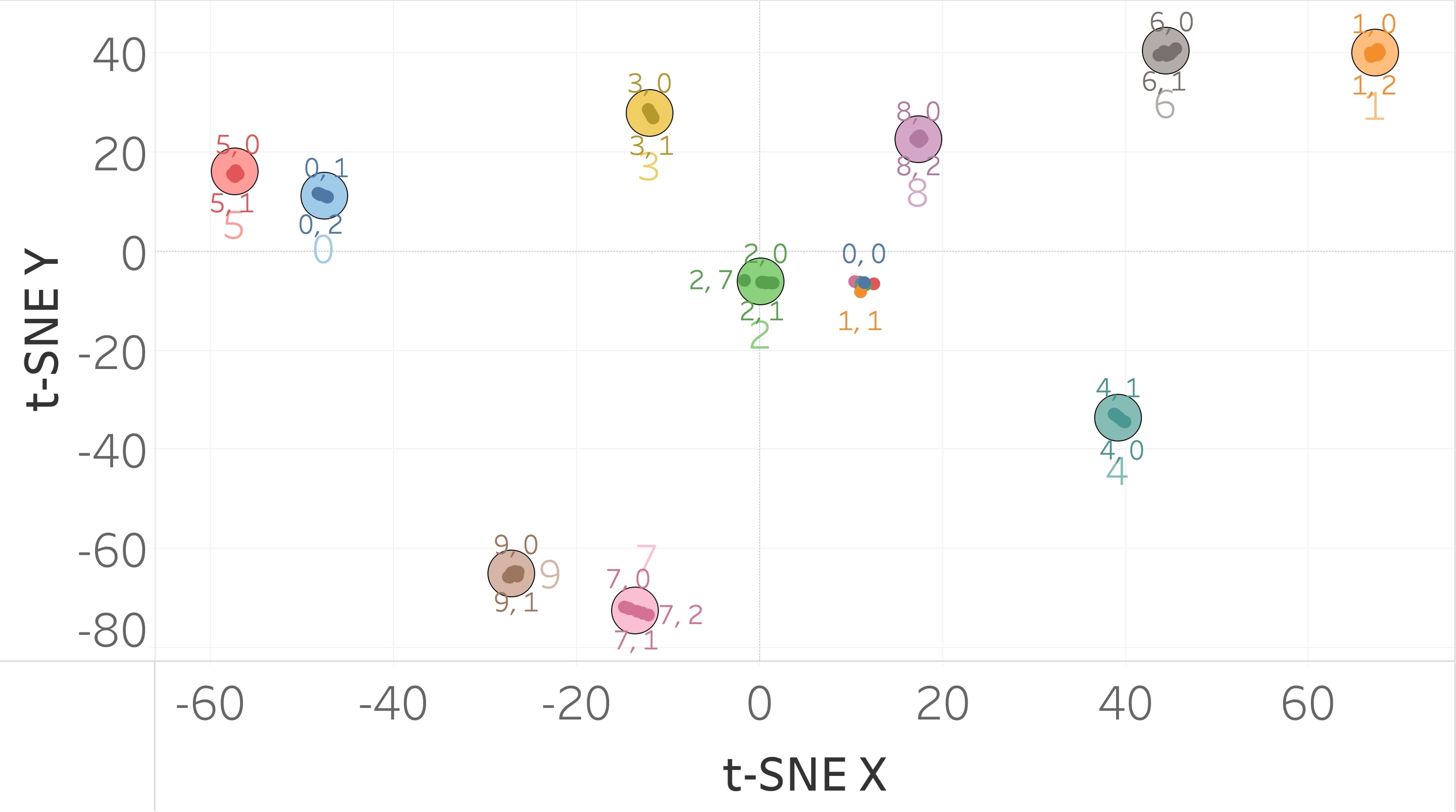}
        \caption{MNIST, 13 examples per class.}
    \end{subfigure}%
    ~
    \begin{subfigure}[t]{0.33\textwidth}
        \centering
        \includegraphics[width=1\textwidth]{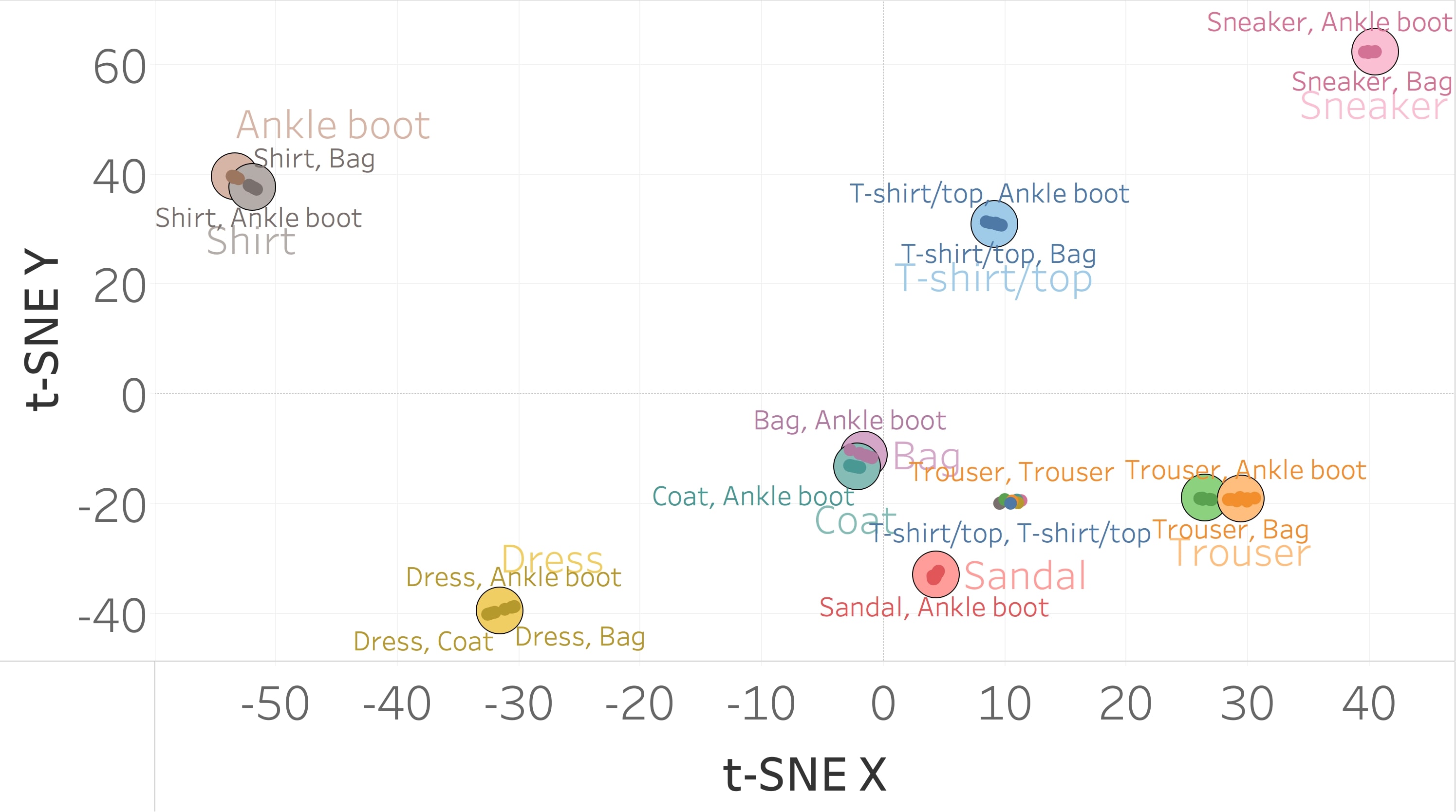}
        \caption{Fashion MNIST, 13 examples per class.}
    \end{subfigure}%
    ~
    \begin{subfigure}[t]{0.33\textwidth}
        \centering
        \includegraphics[width=1\textwidth]{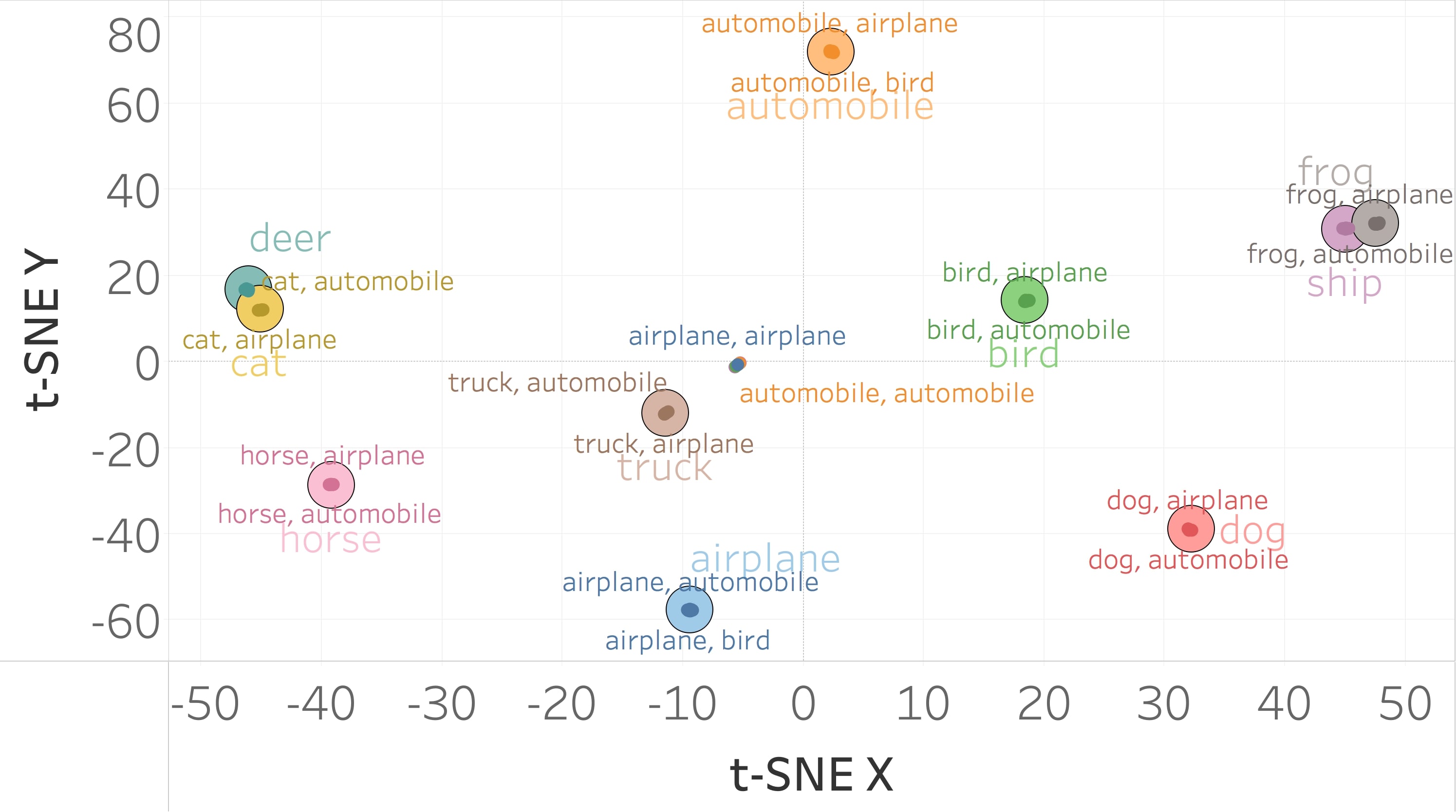}
        \caption{CIFAR10, 13 examples per class.}
    \end{subfigure}
    
    % \vspace{0.25cm}
    
    % \centering
    % \begin{subfigure}[t]{0.33\textwidth}
    %     \centering
    %     \includegraphics[width=1\textwidth]{}
    %     \caption{MNIST, 98 examples per class.}
    % \end{subfigure}%
    % ~
    % \begin{subfigure}[t]{0.33\textwidth}
    %     \centering
    %     \includegraphics[width=1\textwidth]{}
    %     \caption{Fashion MNIST, 98 examples per class.}
    % \end{subfigure}%
    % ~
    % \begin{subfigure}[t]{0.33\textwidth}
    %     \centering
    %     \includegraphics[width=1\textwidth]{}
    %     \caption{CIFAR10, 98 examples per class.}
    % \end{subfigure}
    
    \vspace{0.25cm}
    
    \centering
    \begin{subfigure}[t]{0.33\textwidth}
        \centering
        \includegraphics[width=1\textwidth]{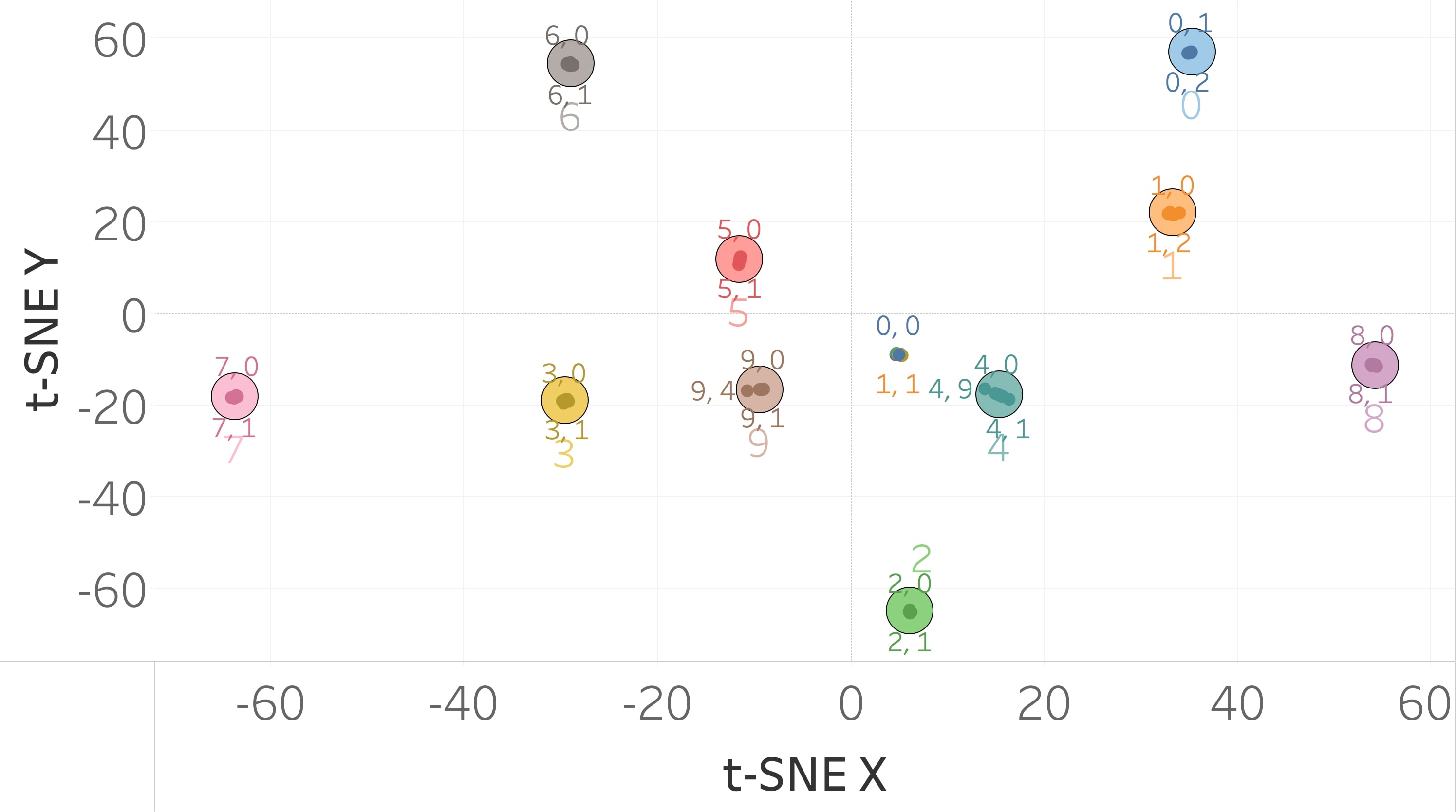}
        \caption{MNIST, 702 examples per class.}
    \end{subfigure}%
    ~
    \begin{subfigure}[t]{0.33\textwidth}
        \centering
        \includegraphics[width=1\textwidth]{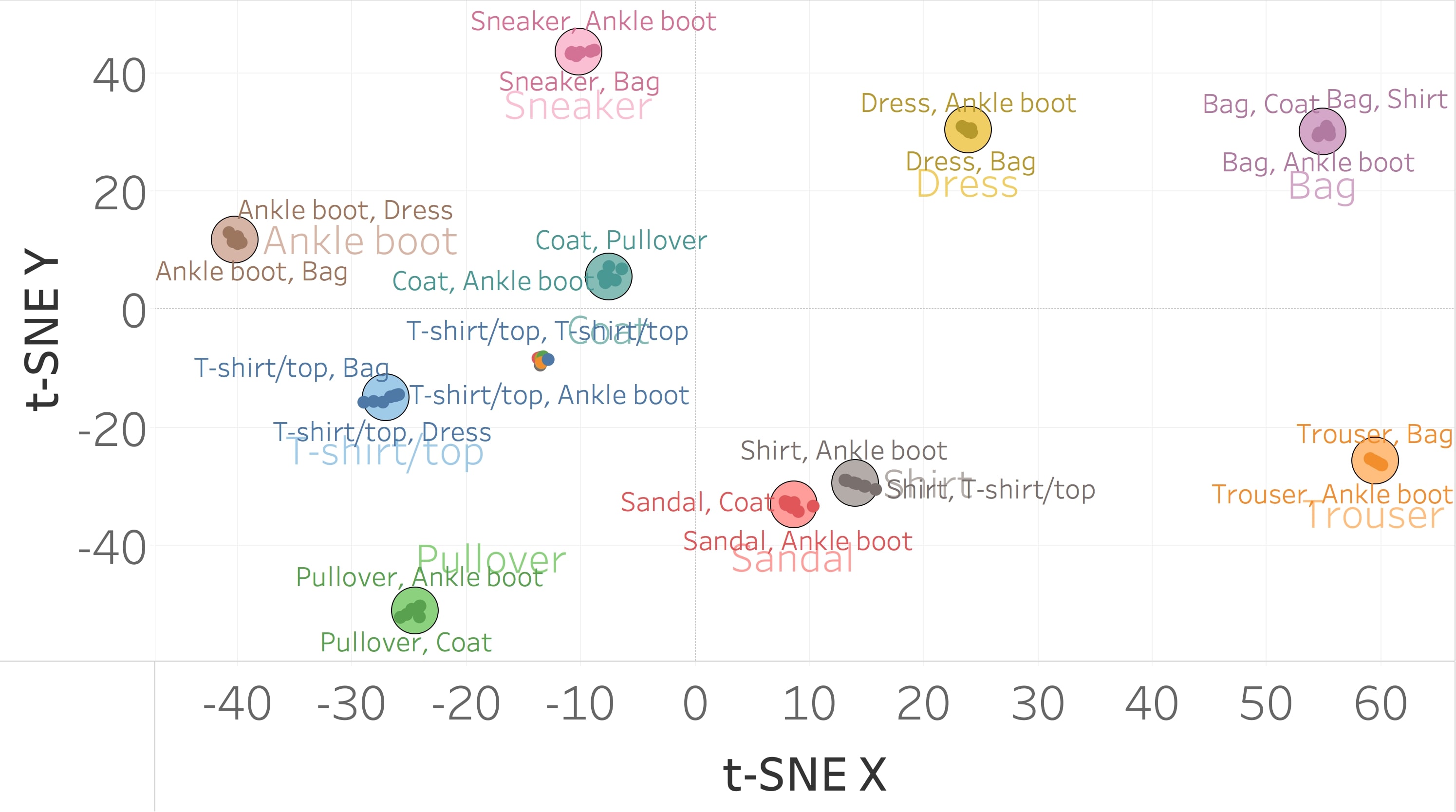}
        \caption{Fashion MNIST, 702 examples per class.}
    \end{subfigure}%
    ~
    \begin{subfigure}[t]{0.33\textwidth}
        \centering
        \includegraphics[width=1\textwidth]{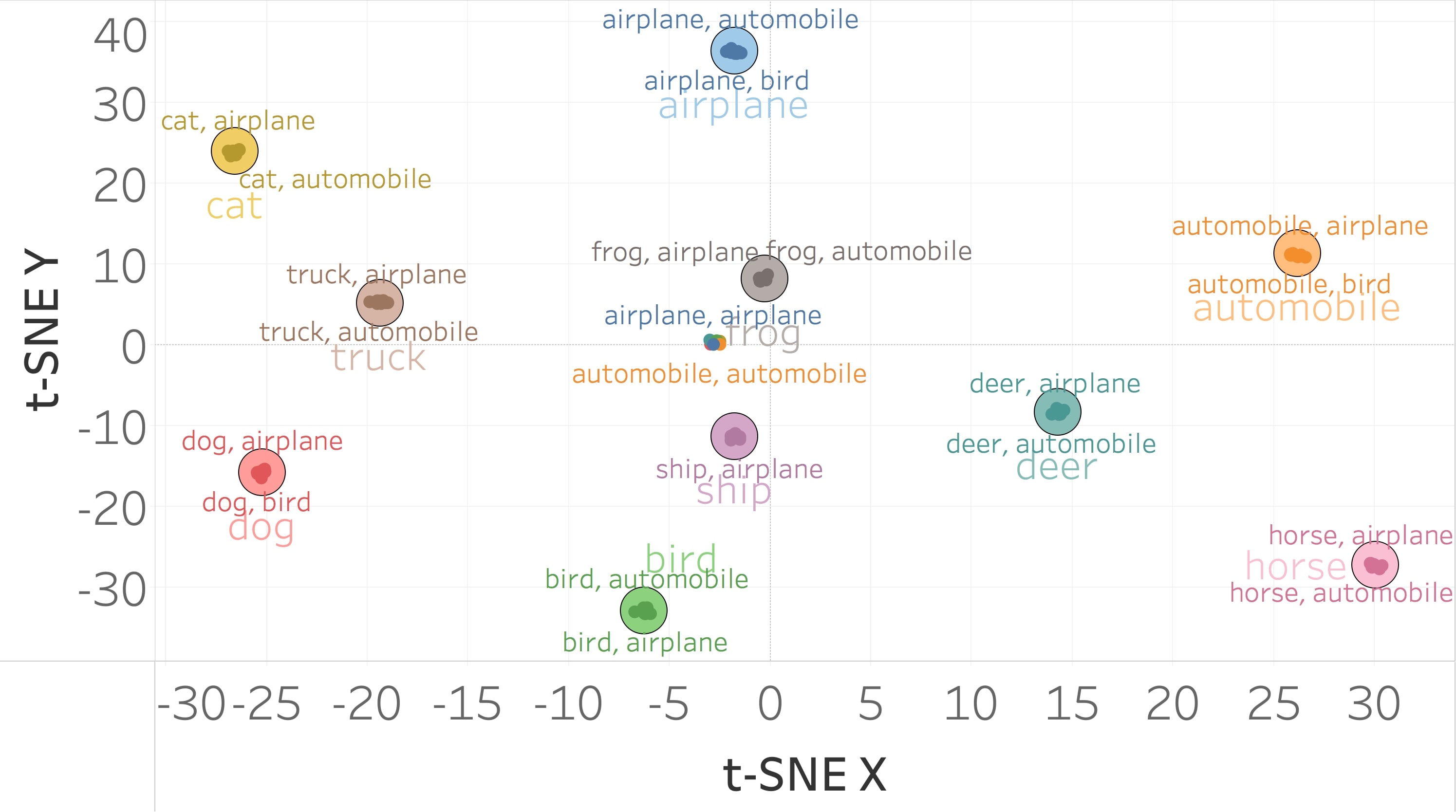}
        \caption{CIFAR10, 702 examples per class.}
    \end{subfigure}
    
    % \vspace{0.25cm}
    
    % \centering
    % \begin{subfigure}[t]{0.33\textwidth}
    %     \centering
    %     \includegraphics[width=1\textwidth]{}
    %     \caption{MNIST, 1874 examples per class.}
    % \end{subfigure}%
    % ~
    % \begin{subfigure}[t]{0.33\textwidth}
    %     \centering
    %     \includegraphics[width=1\textwidth]{}
    %     \caption{Fashion MNIST, 1874 examples per class.}
    % \end{subfigure}%
    % ~
    % \begin{subfigure}[t]{0.33\textwidth}
    %     \centering
    %     \includegraphics[width=1\textwidth]{}
    %     \caption{CIFAR10, 1874 examples per class.}
    % \end{subfigure}
    
    \vspace{0.25cm}
    
    \centering
    \begin{subfigure}[t]{0.33\textwidth}
        \centering
        \includegraphics[width=1\textwidth]{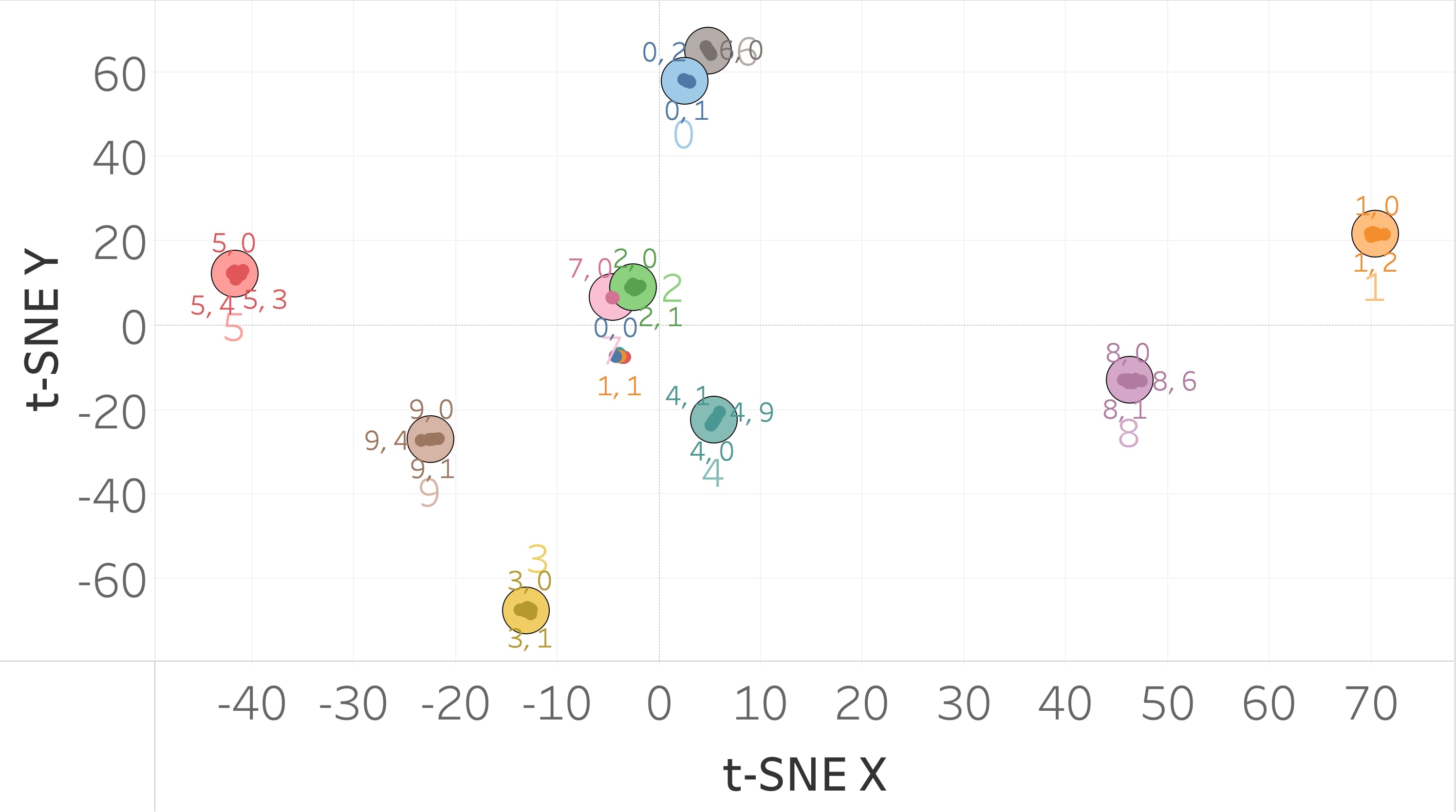}
        \caption{MNIST, 5000 examples per class.}
    \end{subfigure}%
    ~
    \begin{subfigure}[t]{0.33\textwidth}
        \centering
        \includegraphics[width=1\textwidth]{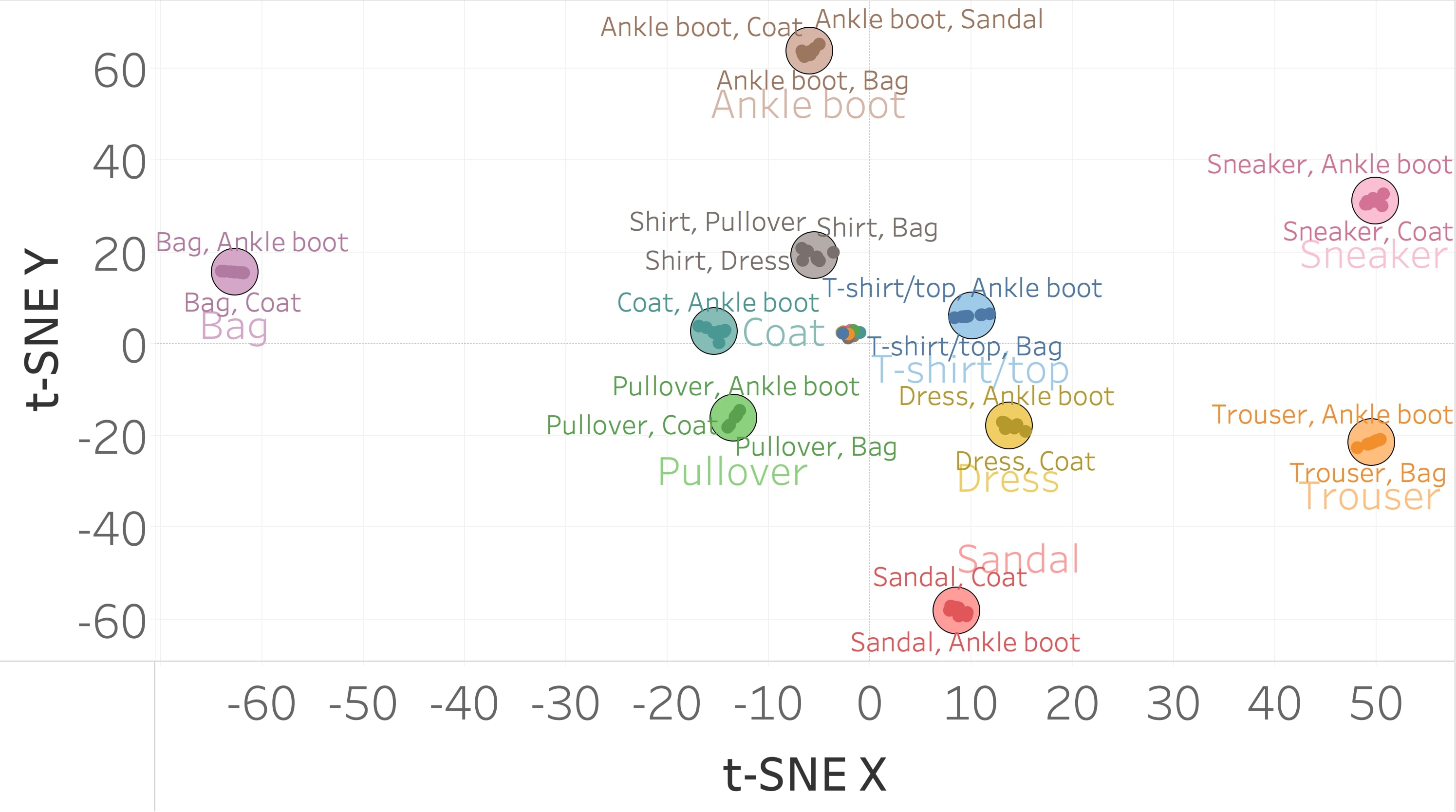}
        \caption{Fashion MNIST, 5000 examples per class.}
    \end{subfigure}%
    ~
    \begin{subfigure}[t]{0.33\textwidth}
        \centering
        \includegraphics[width=1\textwidth]{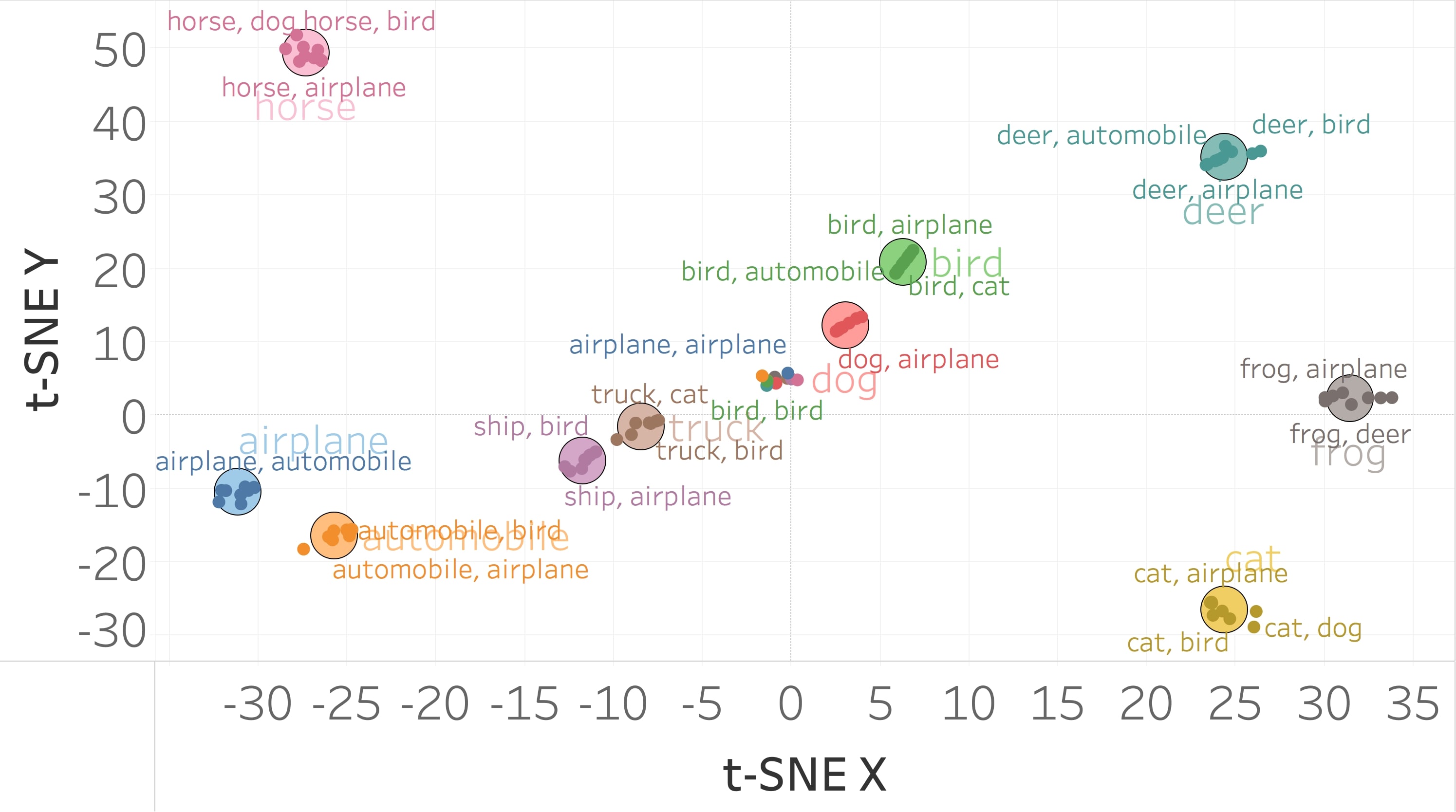}
        \caption{CIFAR10, 5000 examples per class.}
    \end{subfigure}
    \caption{\textit{t-SNE visualization of the hierarchical structure for the DenseNet40 architecture.} Each column of panels corresponds to a different dataset, and each row to a different sample size. Each panel depicts the two-dimensional t-SNE embedding of the cluster members $\{ \delta_{c,c'} \}_{c,c'}$ and the cluster centers $\{ \delta_c \}_c$. All circles are colored according to the class $c$. The $\delta_c$ are marked with large circles and have a label, written in large font, attached to them. The $\delta_{c,c'}$ are marked with small circles and a subset of them also have a label, written in smaller font, attached to them. The label is a concatenation of the two class names corresponding to $c$ and $c'$. This plot asserts the three level hierarchy. At level one we have the cluster centers $\{ \delta_c \}_c$. At level two, next to each cluster center $\delta_c$, we find cluster members $\{ \delta_{c,c'} \}_{c' \neq c}$. Although not plotted, at level three, next to each $\delta_{c,c'}$ we would find $\{ \delta_{i,c,c'} \}_i$. We also observe a cluster which contains all the $\{ \delta_{c,c} \}_c$ ($c=c'$). These points are clustered together because their norms are close to zero, when compared to the other points.
    } \label{fig:tSNE}
\end{figure*}

\begin{figure*}
    \centering
    \begin{subfigure}[t]{0.4\textwidth}
        \centering
        \includegraphics[width=1\textwidth]{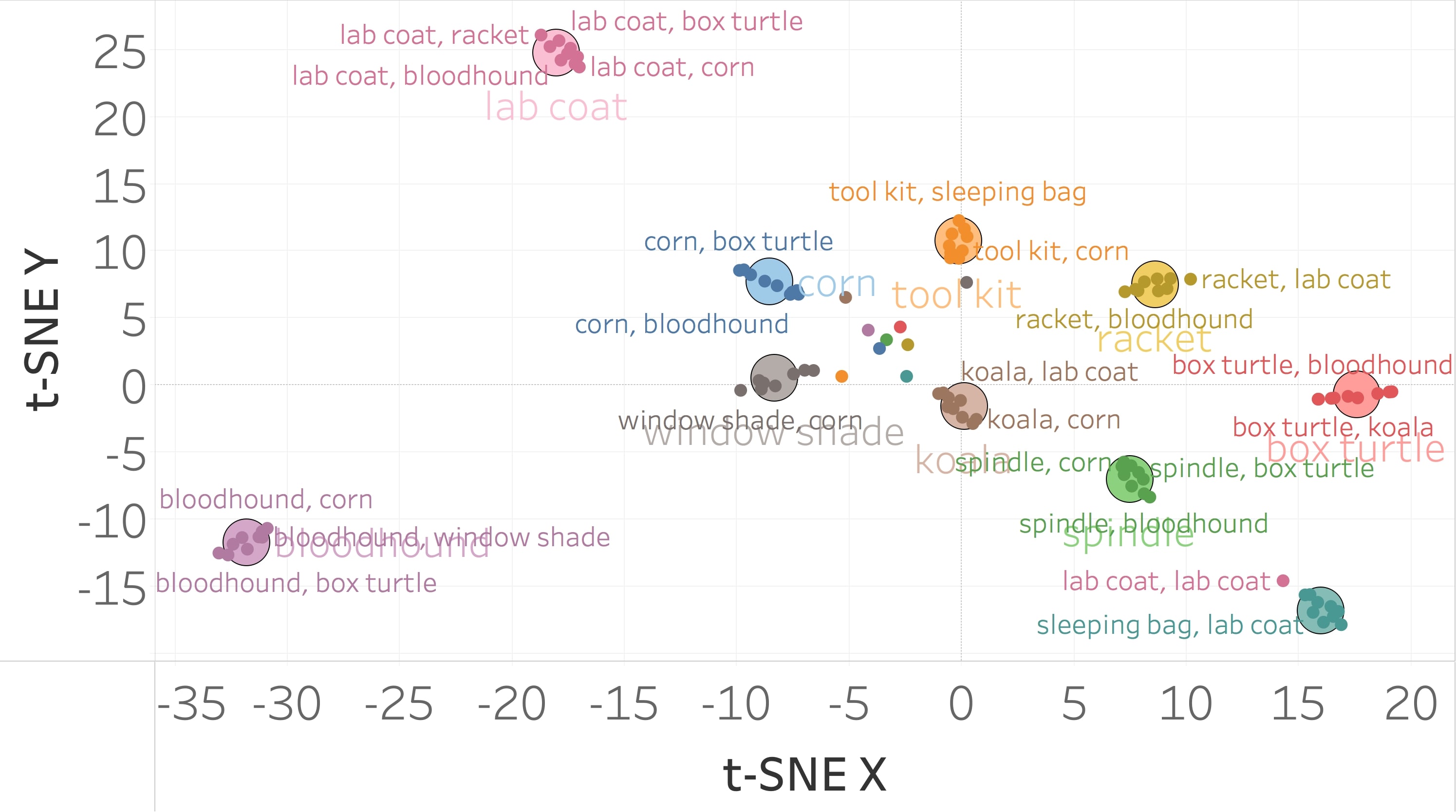}
        \caption{VGG16.}
    \end{subfigure}%
    ~
    \begin{subfigure}[t]{0.4\textwidth}
        \centering
        \includegraphics[width=1\textwidth]{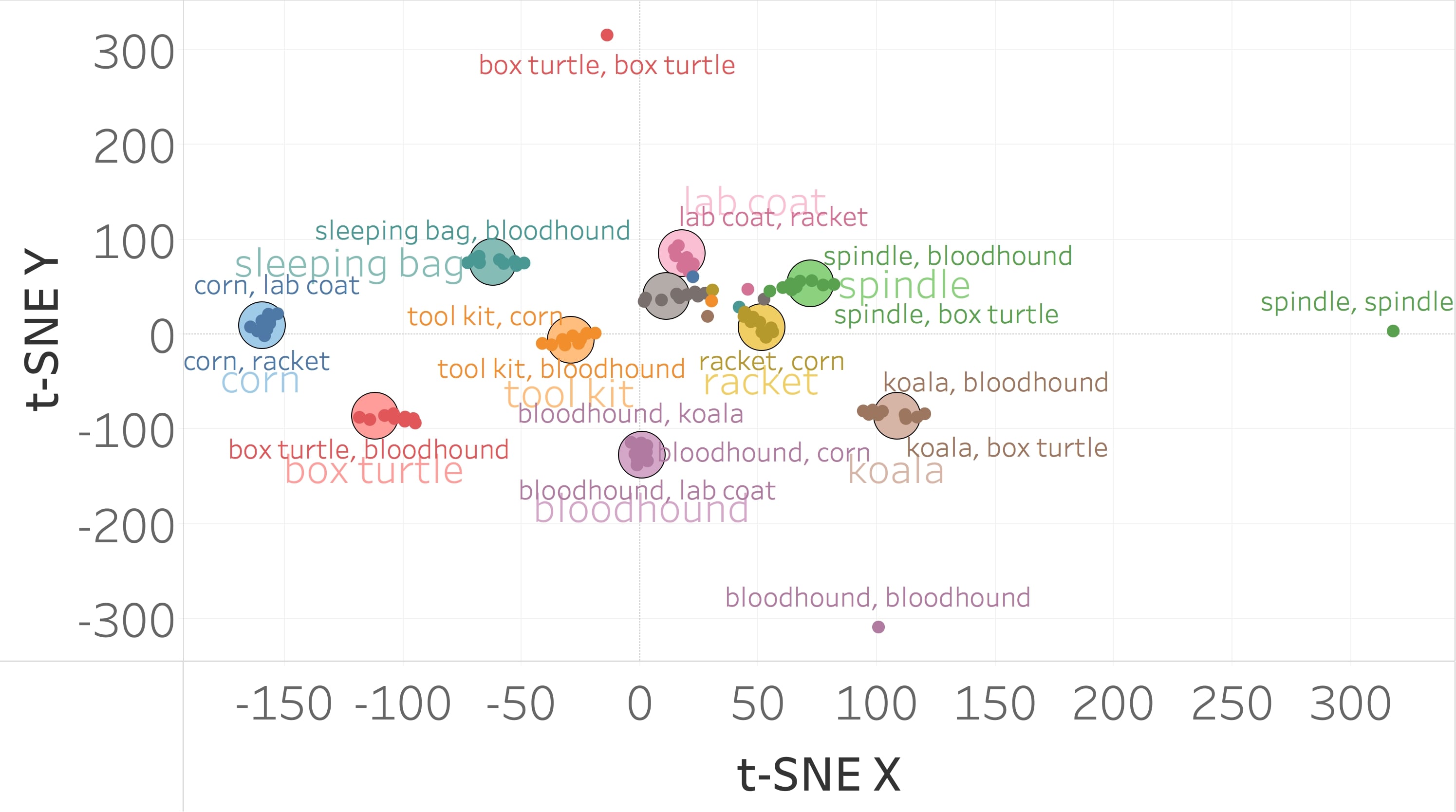}
        \caption{ResNet50.}
    \end{subfigure}
    \caption{\textit{t-SNE visualization of the hierarchical structure in ImageNet.} Each panel depicts the two-dimensional t-SNE embedding of the cluster members and the cluster centers for a different architecture. For further details, see caption of Figure \ref{fig:tSNE}. VGG16 was trained on 600 examples per class and ResNet50 on the full dataset. For visualization purposes, we subset randomly ten classes. By and large, $\delta_{c,c'}$ with fixed $c$ and varying $c'$ cluster around $\delta_c$, except for ``oddballs'' -- which turn out to correspond to $c=c'$. Unlike Figure \ref{fig:tSNE}, these are not tightly clustered together. This is likely due to their norm not being close to zero, caused by a lack of convergence of SGD.} \label{fig:imagenet}
\end{figure*}

\begin{figure*} [t!]
    \centering
    \begin{subfigure}[t]{0.19\textwidth}
        \centering
        \includegraphics[width=1\textwidth]{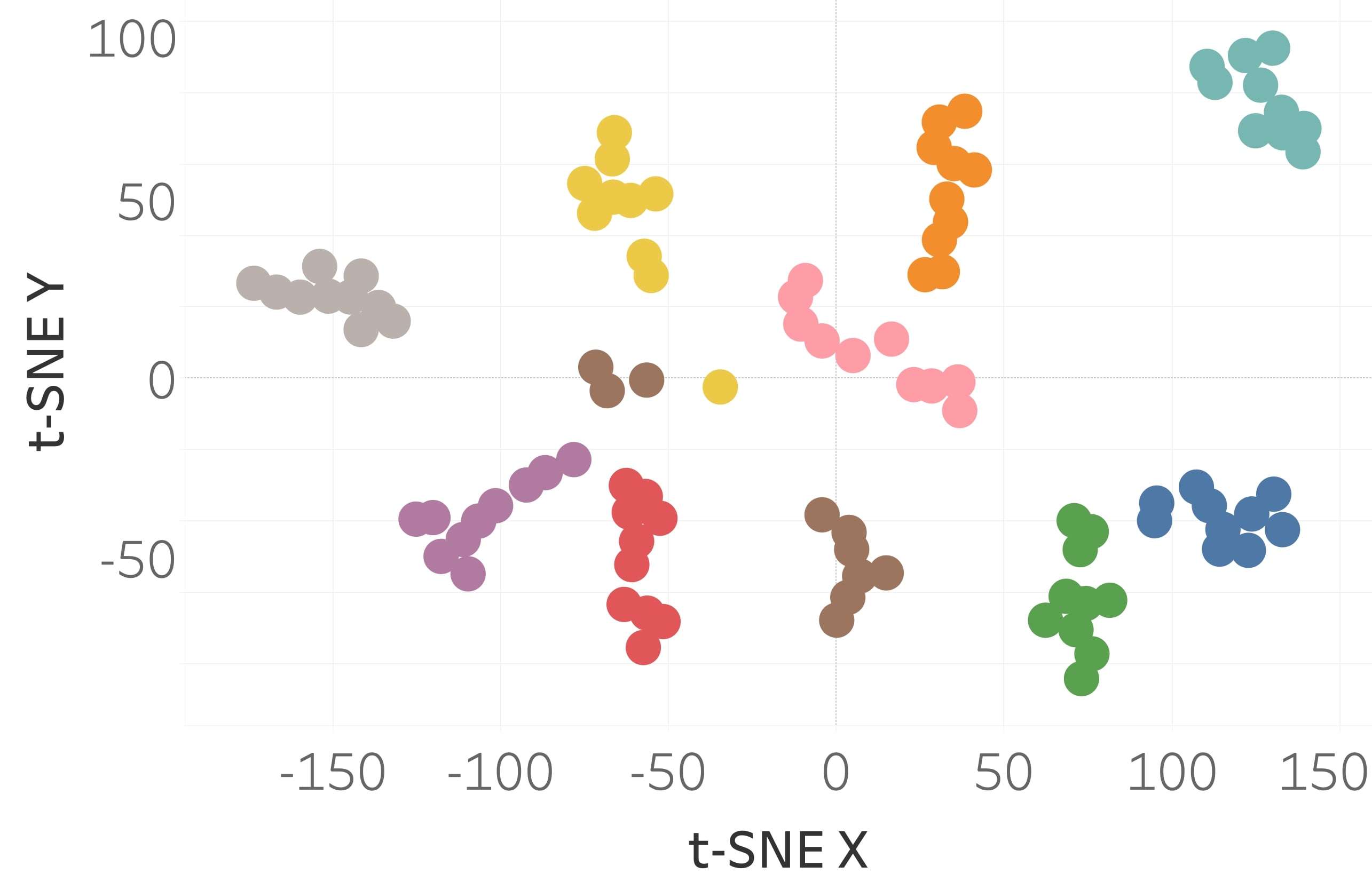}
        \caption{ResNet50, epoch 1, colored according to $c'$.}
    \end{subfigure}%
    ~
    \begin{subfigure}[t]{0.19\textwidth}
        \centering
        \includegraphics[width=1\textwidth]{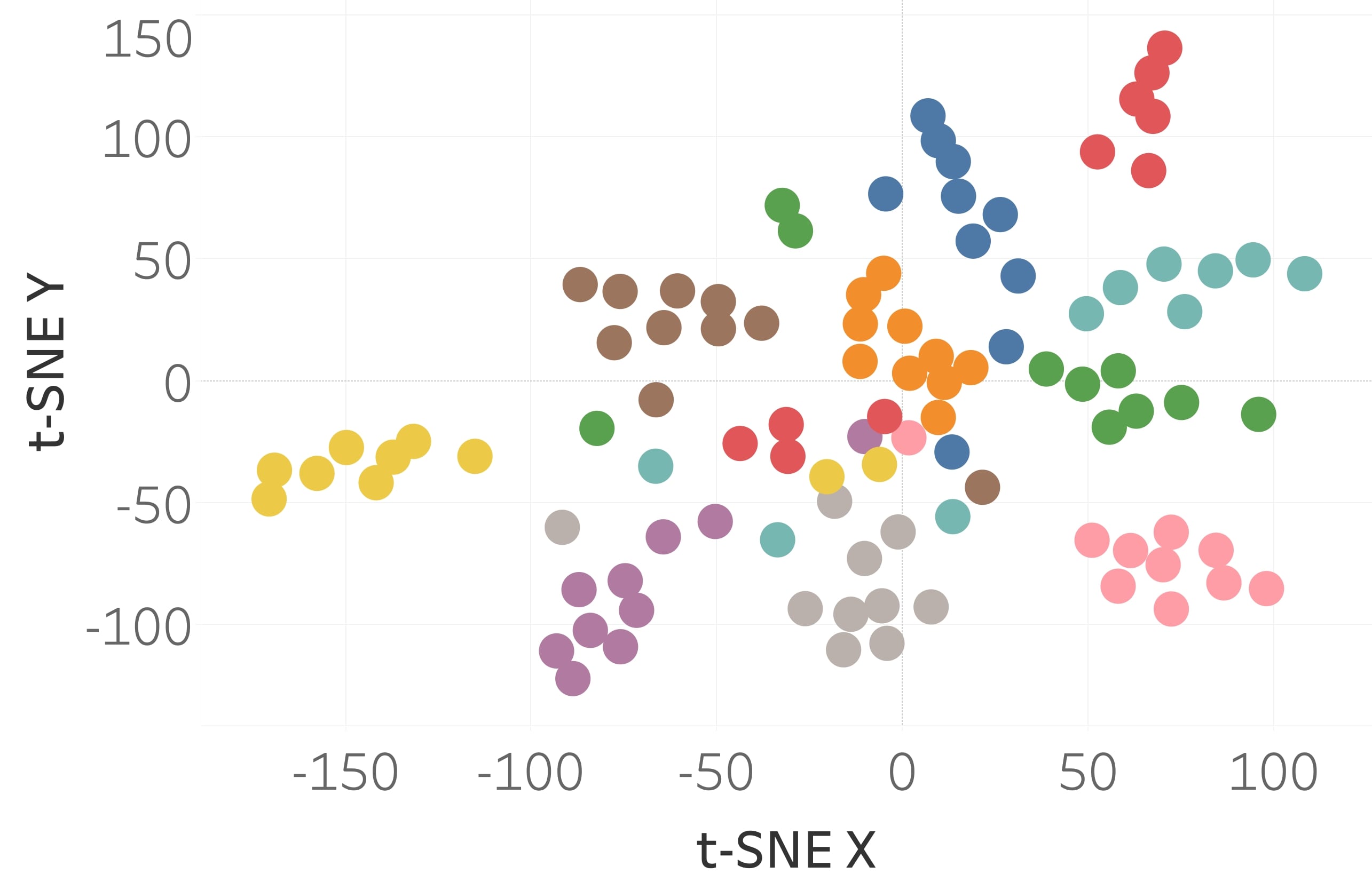}
        \caption{ResNet50, epoch 10, colored according to $c'$.}
    \end{subfigure}%
    ~
    \begin{subfigure}[t]{0.19\textwidth}
        \centering
        \includegraphics[width=1\textwidth]{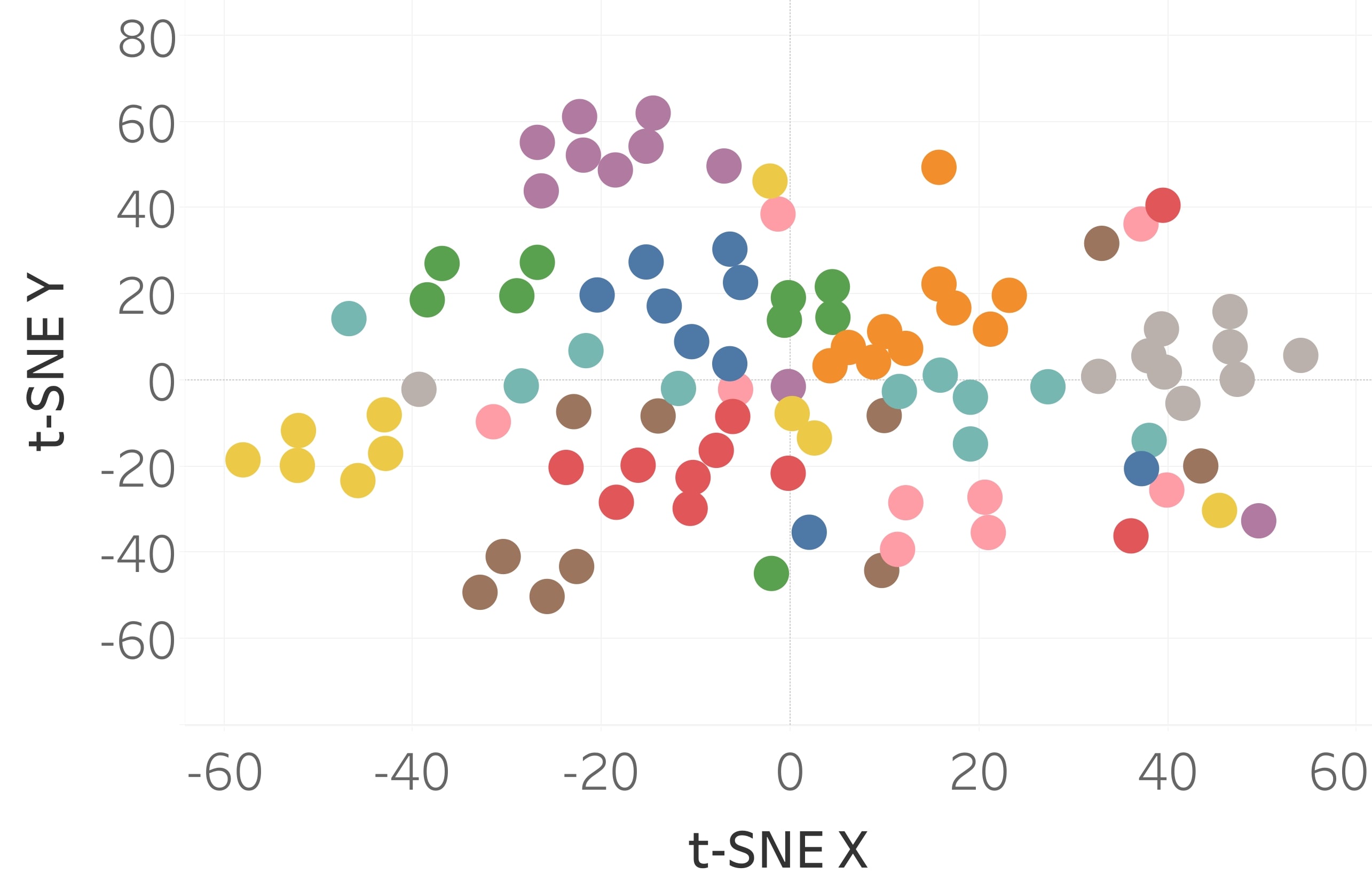}
        \caption{ResNet50, epoch 18, colored according to $c$.}
    \end{subfigure}%
    ~
    \begin{subfigure}[t]{0.19\textwidth}
        \centering
        \includegraphics[width=1\textwidth]{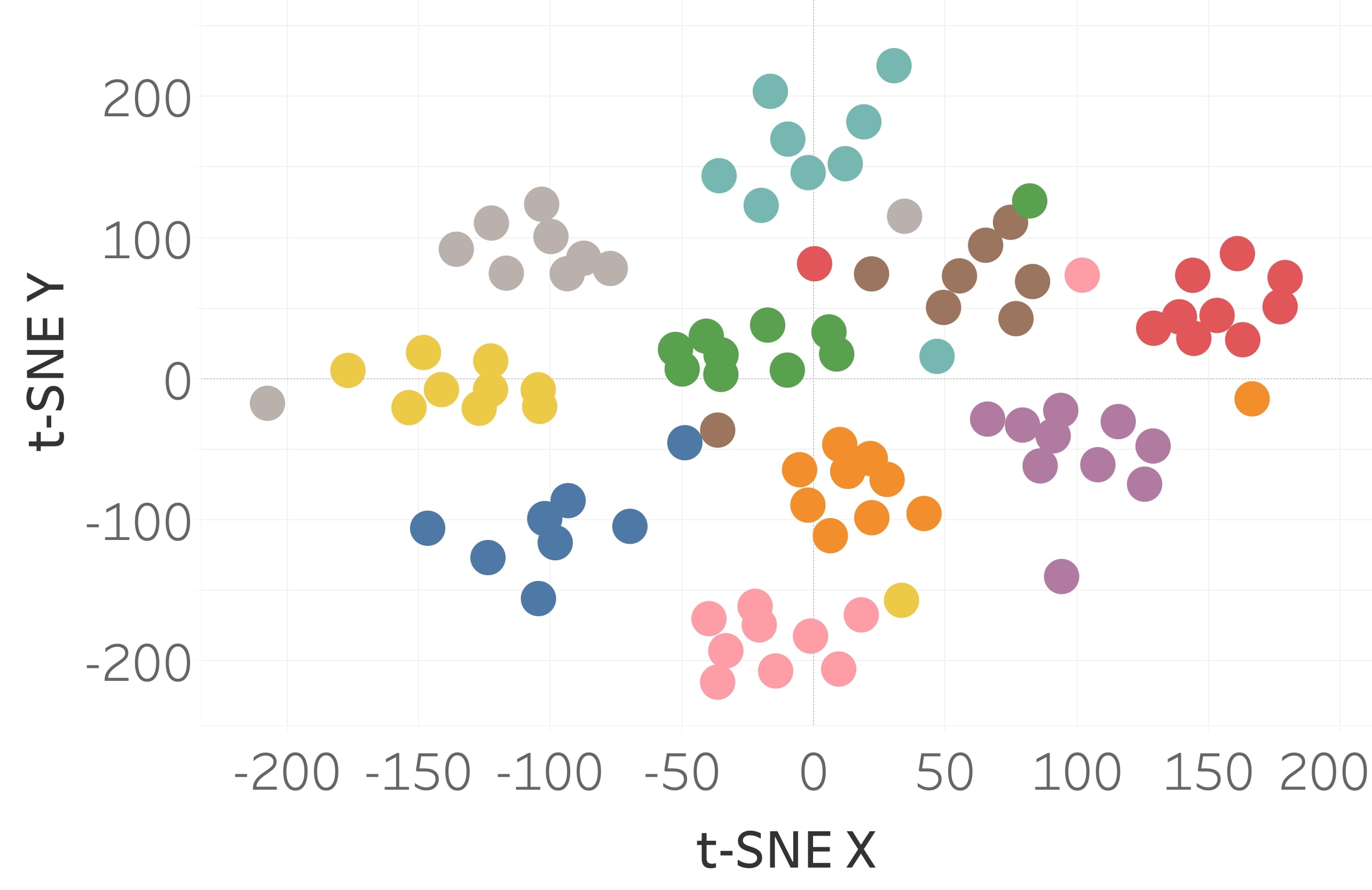}
        \caption{ResNet50, epoch 33, colored according to $c$.}
    \end{subfigure}%
    ~
    \begin{subfigure}[t]{0.19\textwidth}
        \centering
        \includegraphics[width=1\textwidth]{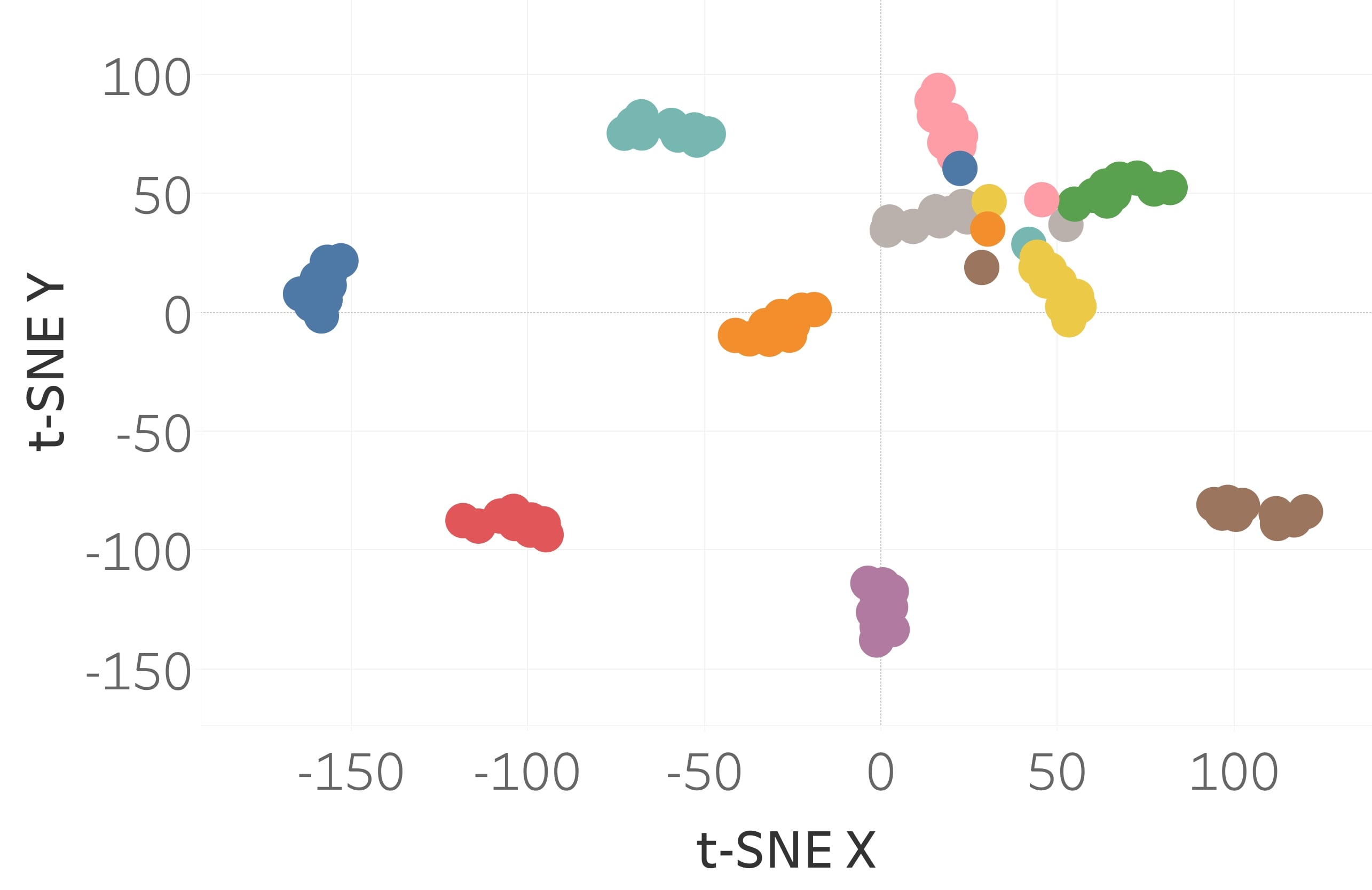}
        \caption{ResNet50, epoch 350, colored according to $c$.}
    \end{subfigure}
    
    \centering
    \begin{subfigure}[t]{0.19\textwidth}
        \centering
        \includegraphics[width=1\textwidth]{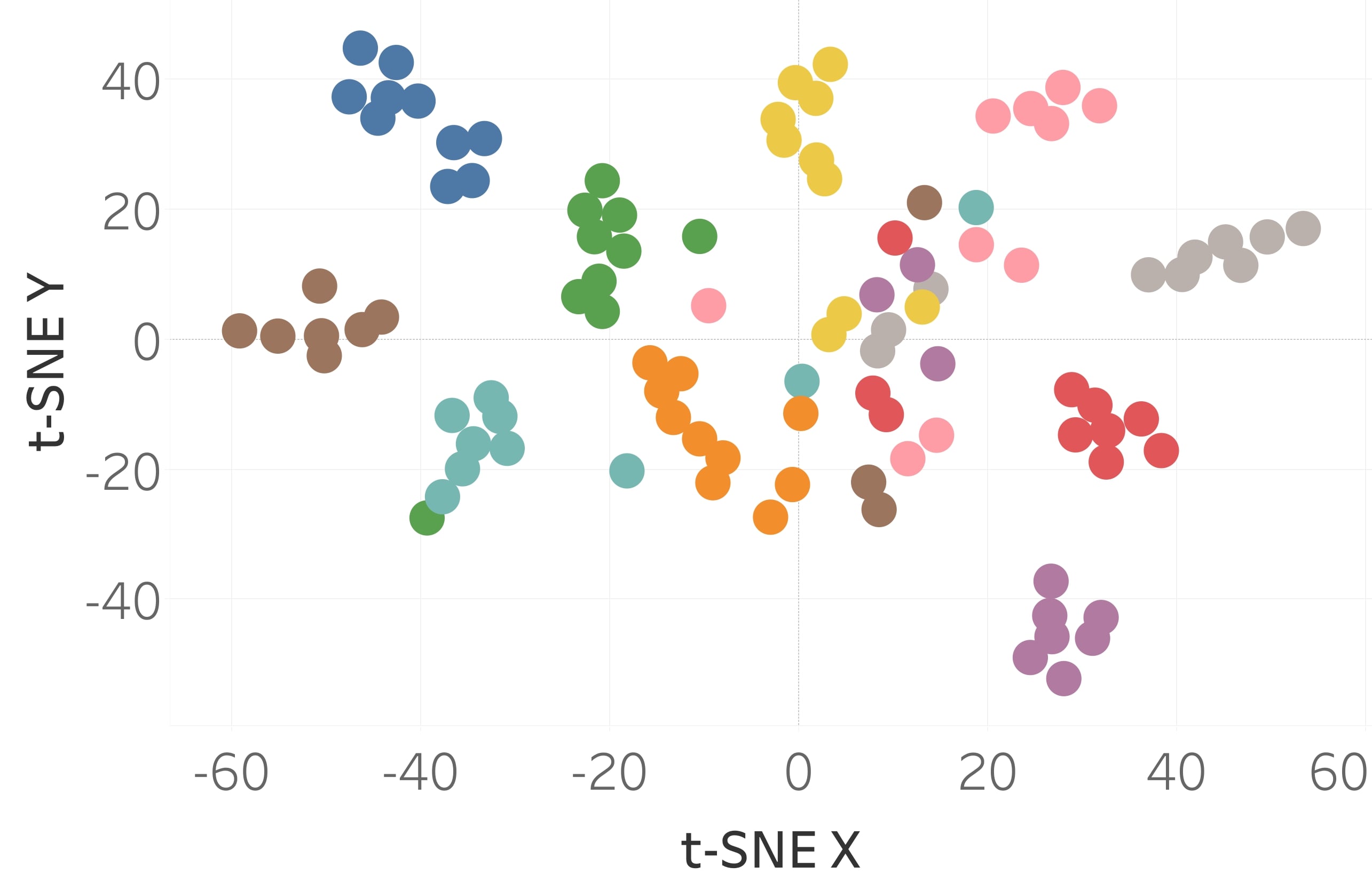}
        \caption{VGG16, epoch 1, colored according to $c'$.}
    \end{subfigure}%
    ~
    \begin{subfigure}[t]{0.19\textwidth}
        \centering
        \includegraphics[width=1\textwidth]{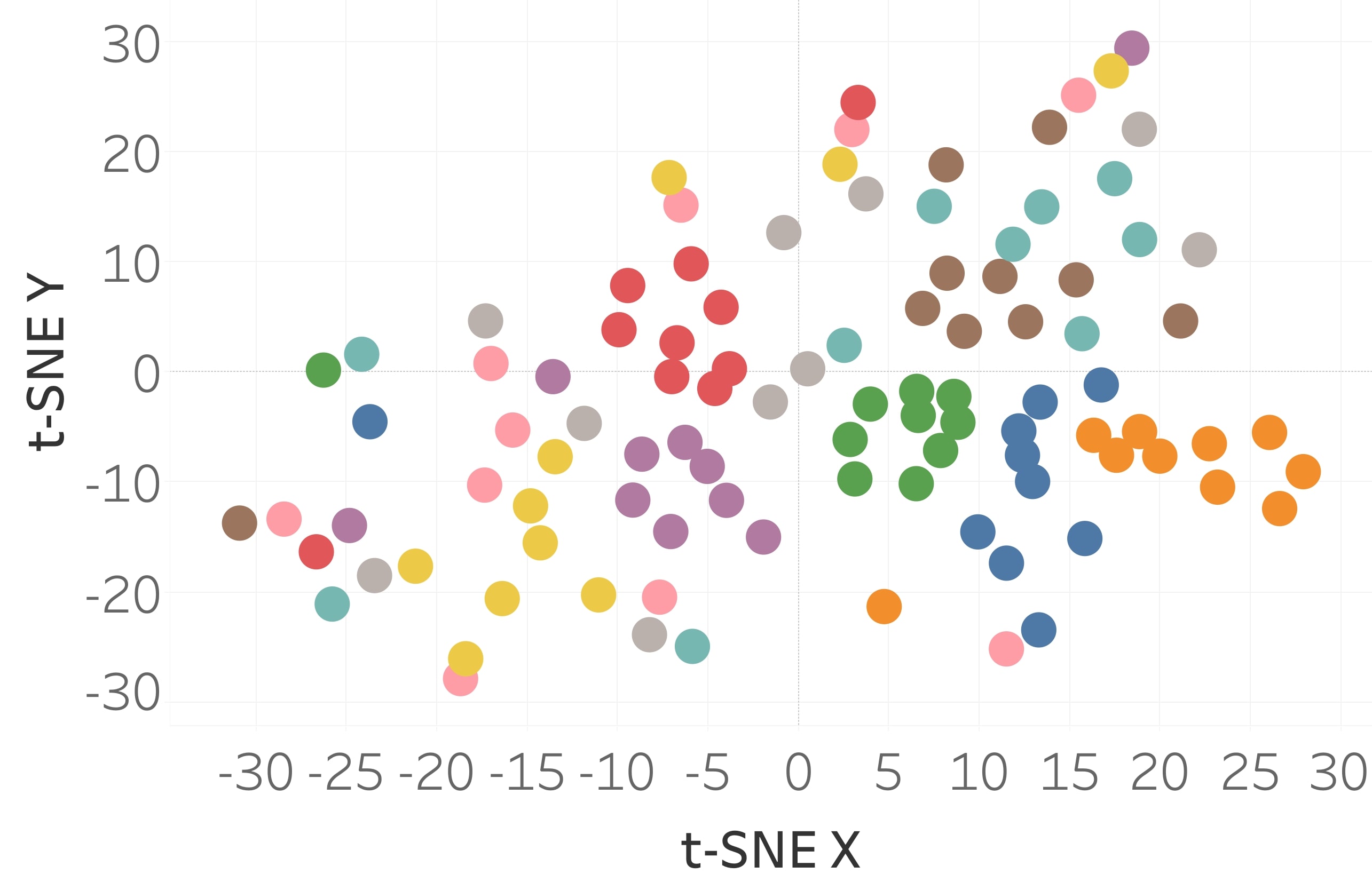}
        \caption{VGG16, epoch 10, colored according to $c$.}
    \end{subfigure}%
    ~
    \begin{subfigure}[t]{0.19\textwidth}
        \centering
        \includegraphics[width=1\textwidth]{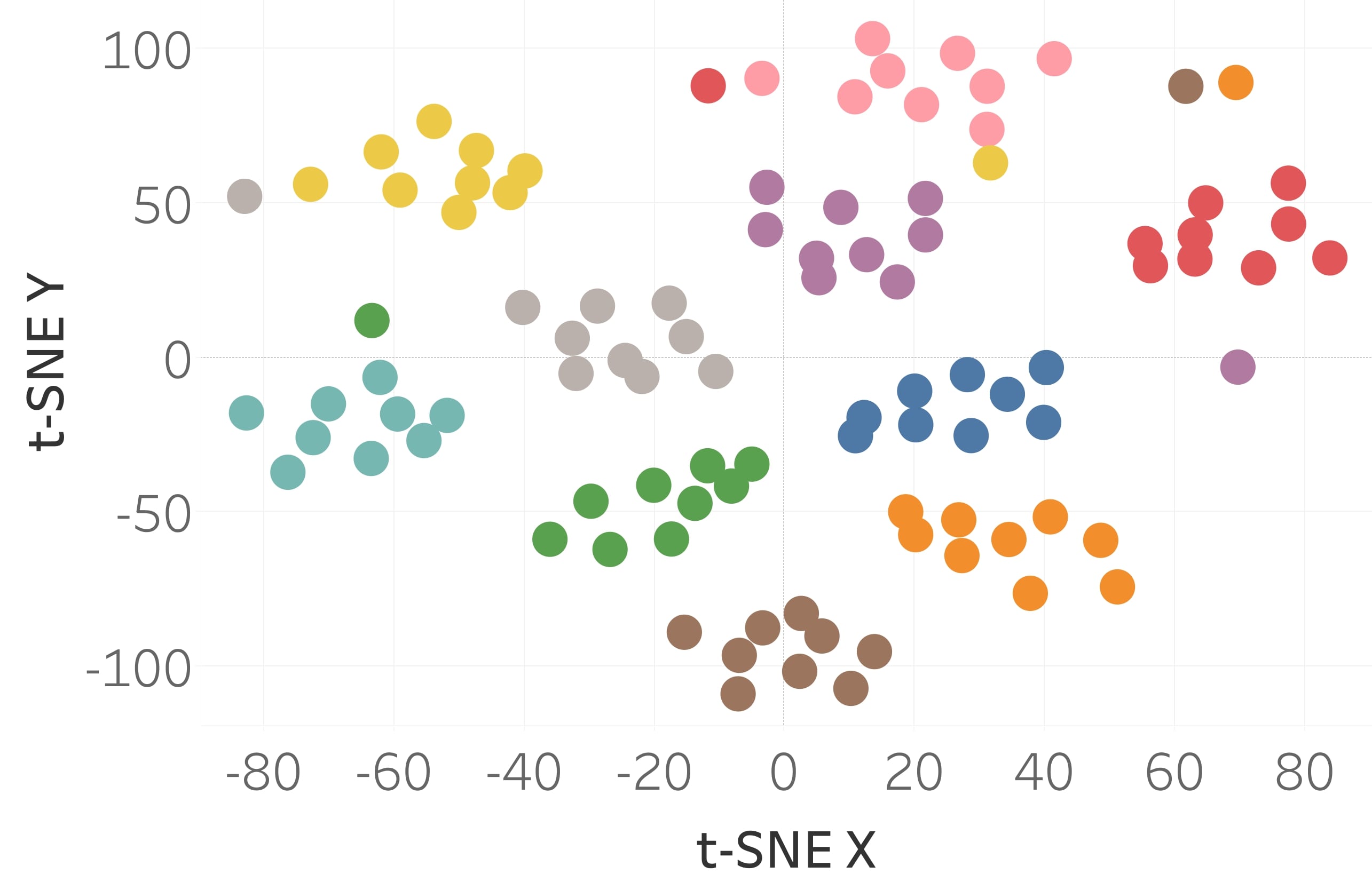}
        \caption{VGG16, epoch 18, colored according to $c$.}
    \end{subfigure}%
    ~
    \begin{subfigure}[t]{0.19\textwidth}
        \centering
        \includegraphics[width=1\textwidth]{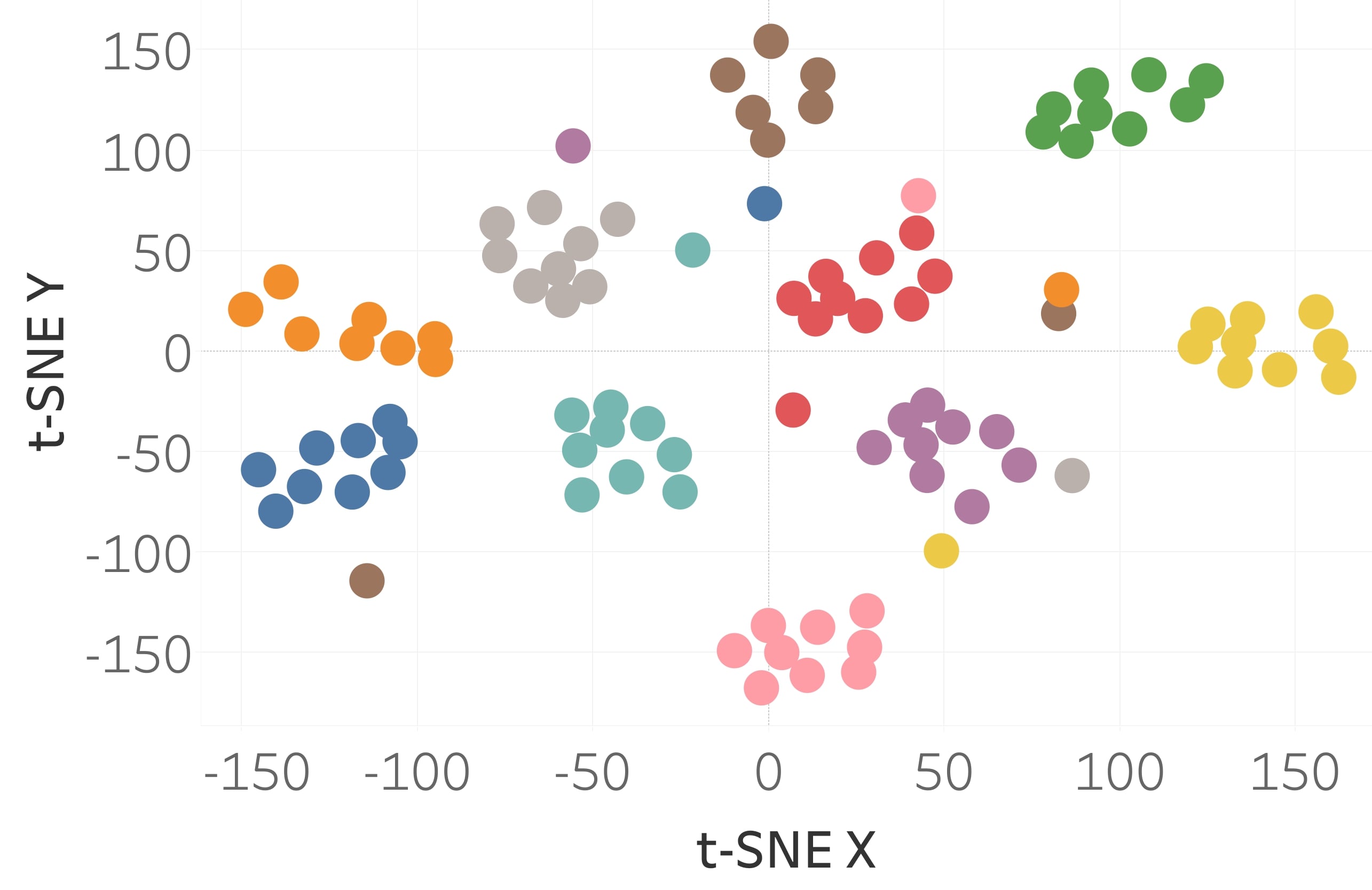}
        \caption{VGG16, epoch 33, colored according to $c$.}
    \end{subfigure}%
    ~
    \begin{subfigure}[t]{0.19\textwidth}
        \centering
        \includegraphics[width=1\textwidth]{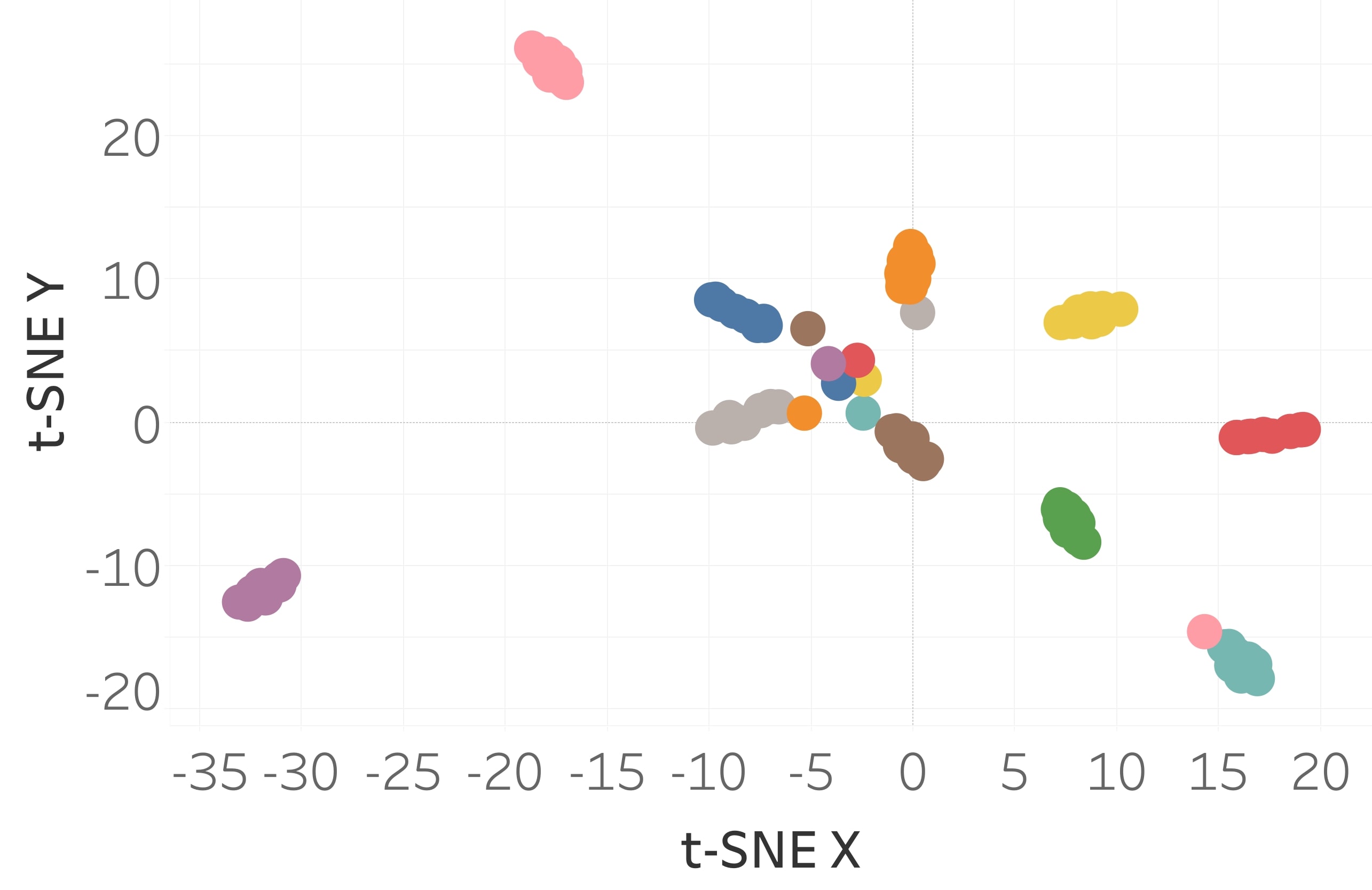}
        \caption{VGG16, epoch 350, colored according to $c$.}
    \end{subfigure}%
    \caption{\textit{t-SNE visualization of the hierarchical structure in ImageNet as a function of epoch.} Each row of panels corresponds to a different architecture, and each column to a different epoch. Each panel depicts the two-dimensional t-SNE embedding of the cluster members $\{ \delta_{c,c'} \}_{c,c'}$. ResNet50 was trained on $600$ examples per class, while VGG16 on the full dataset. For computational reasons, we subset randomly ten classes. The smaller epochs are colored according to the logit coordinate $c'$, while the larger according to the true class $c$.  A transition occurs around epoch $18$ for ResNet50 and epoch $10$ for VGG16 -- from cluster members clustering according to the logit coordinate $c'$, to cluster members clustering according to the true class $c$.
    } \label{fig:imagenet_epochs}
\end{figure*}

We summarize below our main deliverables:
\begin{enumerate}
    \item We show that the outliers in the spectrum of the Hessian, previously attributed to $G$, are due to $G$ being a second moment matrix and not a Covariance.
    \item We show the columns of $\Delta$, the matrix of logit derivatives that goes to form $G = \frac{1}{n} \Delta \Delta^T$, can be grouped into $C^2$ groups, which can then be grouped into $C$ clusters.
    \item We show how to approximate the top-$C$ outliers in the spectrum of $G$ from the Gram of cluster centers.
    \item We show that, empirically, the variation within each cluster is small compared to the variations between the clusters. That quantitative observation is responsible for the fact that the outliers in $G$ are attributable to the Gram of the cluster centers.
    \item We investigate the hierarchical structure throughout the epochs of SGD, showing that initially the group means are clustered according to the logit coordinate $c'$ and only afterwards according to the true class $c$.
    \item We verify empirically our claims across various datasets, networks and sample sizes.
    \item We observe a deviation between the $C$ outliers and our approximations and draw connections to RMT.
\end{enumerate}

\section{G is a second moment matrix}
We begin by observing the dimensions of the components constituting $G$,
\begin{equation}
    \Ave_{i,c} \Bigg\{
    \underbrace{\pdv{f(x_{i,c};\theta)}{\theta}^T}_{p \times C} \underbrace{\pdv[2]{\ell ( z, y_c)}{z} \Bigg|_{z = f(x_{i,c}; \theta)}}_{C \times C} \underbrace{\pdv{f(x_{i,c};\theta)}{\theta}}_{C \times p}
    \Bigg\}.
\end{equation}
In the following steps, we will decompose the $C \times C$ matrix $\pdv[2]{\ell ( z, y_c)}{z}$ into an outer product of length $C$ vectors. Notice that $\pdv[2]{\ell ( z, y_c)}{z}$ is the Hessian of multinomial logistic regression. In \cite{bohning1992multinomial} it was shown to be equal to
\begin{equation}
    \pdv[2]{\ell ( z, y_c)}{z}
    = \diag(p(x_{i,c}; \theta)) - p(x_{i,c}; \theta) p(x_{i,c}; \theta)^T,
\end{equation}
where $p(x_{i,c}; \theta)$ are the probabilities obtained from applying softmax to the logits of $x_{i,c}$.
\begin{lemma}
The Hessian of multinomial logistic regression can be equivalently written as follows:
\begin{align}
    \pdv[2]{\ell ( z, y_c)}{z} = & (I - \one p(x_{i,c}; \theta)^T)^T \diag(p(x_{i,c}; \theta)) \nonumber \\
    \times & (I - \one p(x_{i,c}; \theta)^T).
\end{align}
\end{lemma}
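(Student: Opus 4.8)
The plan is simply to expand the claimed right-hand side and verify that it collapses to the known expression $\diag(p) - p p^T$ recorded above from \cite{bohning1992multinomial}, where I write $p = p(x_{i,c};\theta)$ for brevity. The only structural fact I will need beyond routine matrix algebra is that $p$ is a probability vector, so its entries sum to one, i.e. $\one^T p = 1$; this normalization (the softmax output summing to one) is exactly what makes the two expressions agree, and it is worth flagging that the factorization is \emph{not} valid for an arbitrary vector.

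First I would transpose the outer factor, using $(I - \one p^T)^T = I - p \one^T$, so the target becomes $(I - p\one^T)\,\diag(p)\,(I - \one p^T)$. Next I would push the diagonal matrix into the right factor. The key identity here is $\diag(p)\,\one = p$, which gives $\diag(p)(I - \one p^T) = \diag(p) - p p^T$.

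Then I would apply the left factor $(I - p\one^T)$ to this intermediate expression, expanding into four terms. Two simplifications drive the collapse: $\one^T \diag(p) = p^T$, so that the cross term $p\,\one^T\diag(p)$ becomes $p p^T$; and $\one^T p = 1$, so that the remaining term $p\,\one^T p\,p^T$ reduces to $p p^T$. Collecting everything yields $\diag(p) - pp^T - pp^T + pp^T = \diag(p) - pp^T$, which matches the B\"ohning form and proves the lemma.

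The computation is entirely routine and I do not anticipate any real obstacle; the only subtlety is being careful that the normalization $\one^T p = 1$ is genuinely invoked, since it is precisely what cancels the surplus $pp^T$ term. The payoff of this reformulation is conceptual rather than technical: it exhibits $\pdv[2]{\ell ( z, y_c)}{z}$ in the symmetric factored form $A^T \diag(p)\, A$ with $A = I - \one p^T$, which is exactly the shape needed in the next section to absorb the middle $C \times C$ factor into the logit derivatives and write $G$ as an outer product $\frac{1}{n}\Delta\Delta^T$.
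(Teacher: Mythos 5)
Your proof is correct and follows essentially the same route as the paper's: a direct expansion of the product into four terms, with the normalization $\one^T p = 1$ cancelling the surplus $pp^T$ term (the paper phrases this as $\sqrt{p}^T\sqrt{p}=1$ after first splitting $\diag(p)$ into its square roots, but that is a cosmetic difference). Your version is, if anything, slightly cleaner since it avoids the square-root factorization entirely.
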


\begin{proof}
    Denote $p_i = p(x_{i,c};\theta)$ and $\sqrt{p_i}$ an element-wise square root of $p_i$. Then,
    \begin{align}
        & (I - \one p_i^T)^T \diag(p_i) (I - \one p_i^T) \\
        = & (I - \one p_i^T)^T \sqrt{\diag(p_i)} \sqrt{\diag(p_i)} (I - \one p_i^T) \\
        = & (\sqrt{\diag(p_i)} - p_i \sqrt{p_i}^T) (\sqrt{\diag(p_i)} - \sqrt{p_i} p_i^T) \\
        = & \diag(p_i)
        - p_i p_i^T
        - p_i p_i^T
        + p_i \sqrt{p_i}^T \sqrt{p_i} p_i^T \\
        = & \diag(p_i)
        - p_i p_i^T
        = \pdv[2]{\ell ( z, y_c)}{z},
    \end{align}
    proving our desired claim.
\end{proof}

Plugging the above into the $G$ term, we obtain:
\begin{align}
    G = & \Ave_{i,c} \left\{ \pdv{f(x_{i,c};\theta)}{\theta}^T (I - p(x_{i,c};\theta) \one^T) \right. \\
    \times & \left. \diag(p(x_{i,c};\theta)) (I - \one p(x_{i,c};\theta)^T) \pdv{f(x_{i,c};\theta)}{\theta} \right\}. \nonumber
\end{align}
Let $\Delta_{i,c}$ denote the matrix associated with fixed $i$ and $c$ and varying $c'$,
\begin{equation} \label{eq:centering_scaling}
    \Delta_{i,c}^T = \diag \left( \sqrt{p(x_{i,c};\theta)} \right) (I - \one p(x_{i,c};\theta)^T) \pdv{f(x_{i,c};\theta)}{\theta}.
\end{equation}
The above is a product of three matrices. The first is a matrix of logit derivatives $\pdv{f(x_{i,c};\theta)}{\theta}$, which contains in its $c'$-th row the $c'$-th logit derivative. The second is a centering matrix; the term $p(x_{i,c};\theta)^T \pdv{f(x_{i,c};\theta)}{\theta}$ is a weighted average of the $C$ logit derivatives and the vector $\one$ duplicates this mean $C$ times. The third is a diagonal matrix with the square root of the probabilities. The whole expression can be interpreted as centering the logit derivatives by subtracting their mean and then weighting the result by $\sqrt{p(x_{i,c};\theta)}$.

Returning to the derivation, using the definition of $\Delta_{i,c}$,
\begin{equation}
    G = \Ave_{i,c} \left\{ \Delta_{i,c} \Delta_{i,c}^T \right\}.
\end{equation}
Concatenating the matrices $\Delta_{i,c} \in \R^{p \times C}$ into a single matrix $\Delta \in \R^{p \times C n}$, we get
\begin{equation}
    G = \frac{1}{n} \Delta \Delta^T.
\end{equation}
Note that no mean term has been subtracted; hence, $G$ is a second moment matrix of logit derivatives, defined in Equation \ref{eq:centering_scaling}.

\subsection{$\bm{G}$ is a second moment of logit derivatives, indexed by three integers}
Note that $\Delta_{i,c} \in \R^{p \times C}$ and denote by $\delta_{i,c,c'}$ its $c'$-th column. Using this definition, we can decompose $\Delta_{i,c} \Delta_{i,c}^T$ into a summation over $C$ elements, obtaining
\begin{equation} \label{eq:G_delta_iccp}
    G = \Ave_{i,c} \left\{ \sum_{c'=1}^C \delta_{i,c,c'} \delta_{i,c,c'}^T \right\}.
\end{equation}
Hence, $G$ is a second moment matrix of logit derivatives, which can be indexed by three integers, $(i,c,c')$.

\subsection{Relation to the gradients of the loss.}
We now consider the relation between $G$ (equivalently $\Delta_{i,c}$) and the gradients of the loss. Recalling Equation \eqref{eq:loss}, the gradient of the $i$-th example can be written as follows,
\begin{equation}
    \pdv{\ell ( f(x_{i,c}; \theta), y_c)}{\theta}^T = \pdv{\ell(z, y_c)}{z}^T \Bigg|_{z=f(x_{i,c}; \theta)} \pdv{f(x_{i,c}; \theta)}{\theta}.
\end{equation}
In \cite{bohning1992multinomial} it was shown that the gradient of multinomial logistic regression is given by,
\begin{equation}
    \pdv{\ell(z, y_c)}{z} = y_c - p(x_{i,c}; \theta).
\end{equation}
Hence,
\begin{equation} \label{eq:grad}
    \pdv{\ell ( f(x_{i,c}; \theta), y_c)}{\theta}^T = (y_c - p(x_{i,c}; \theta))^T \pdv{f(x_{i,c}; \theta)}{\theta}.
\end{equation}
Using Equation \eqref{eq:centering_scaling} and the definition of $\delta_{i,c,c'}$, we get
\begin{equation} \label{eq:delta_i,c}
    \delta_{i,c,c}^T = \sqrt{p_c(x_{i,c};\theta)} (y_c - p(x_{i,c};\theta))^T \pdv{f(x_{i,c};\theta)}{\theta},
\end{equation}  
where $p_c(x_{i,c};\theta)$ is the $c$-th element of $p(x_{i,c};\theta)$. Comparing Equations \eqref{eq:grad} and \eqref{eq:delta_i,c}, we observe that $\delta_{i,c,c}$ and the gradient of the loss are equal up to a scalar. Note, however, that $G= \frac{1}{n} \Delta \Delta^T$ contains in addition to the outer products of $\delta_{i,c,c}$, outer products of $\delta_{i,c,c'}$ for $c' \neq c$. Hence, $G$ is not a second moment of the \textit{gradients of the loss}; instead it is a second moment of the \textit{logit derivatives}.

\section{Decomposing $\bm{G}$ into $\bm{C^2}$ populations}
Having established that $G$ is a second moment matrix, our goal in this section is to decompose it into two components: one associated with its mean and the other with its variance. Denoting
\begin{align}
    \delta_{c,c'} = & \Ave_i \left\{ \delta_{i,c,c'} \right\} \\
    \Sigma_{c,c'} = & \Ave_i \left\{ (\delta_{i,c,c'} - \delta_{c,c'}) (\delta_{i,c,c'} - \delta_{c,c'})^T \right\},
\end{align}
we can decompose $G$ in Equation \eqref{eq:G_delta_iccp} as follows\footnote{We assume here the classes are balanced. Otherwise, $G_{1+2}$ would be a weighted sum, with weights proportional to the number of examples in each class.}:
\begin{equation} \label{eq:L1+2}
    G = \underbrace{\frac{1}{C} \sum_{c,c'} \delta_{c,c'} \delta_{c,c'}^T}_{G_{1+2}}
    + \underbrace{\frac{1}{C} \sum_{c,c'} \Sigma_{c,c'}}_{G_3}.
\end{equation}
In the context of Figure \ref{fig:definitions}, note that $G_{1+2}$ corresponds to the aggregate of both the red circle and the blue one, while $G_3$ corresponds to the green circle.

Our original motivation for decomposing $G$ into its mean and variance terms was to isolate the component that was creating the outliers in the spectrum. Previous observations \cite{sagun2016eigenvalues,sagun2017empirical,papyan2018full} suggest the existence of $C$ dominant outliers in the spectrum of $G$. On the other hand, the first summation in the above expression, being the outer product of $C^2$ elements, could be of rank $C^2$. This, in turn, would lead to $C^2$ outliers in the spectrum. We explain this purported contradiction by noting that while there exist $C^2$ outliers, $C$ of them are significantly more dominant than the others. In the next section, we show how to extract the $C$ dominant outliers.

\section{The means themselves have structure}
In this section we focus on further decomposing the $G_{1+2}$ term. For reasons that will become clear later, we separate the elements that correspond to $c=c'$ from the rest,
\begin{equation}
    G_{1+2} = \Ave_c \{ \delta_{c,c} \delta_{c,c}^T \} + \frac{1}{C} \sum_c \sum_{c' \neq c} \delta_{c,c'} \delta_{c,c'}^T.
\end{equation}
Denoting
\begin{align} \label{eq:delta_c}
    \delta_c = & \Ave_{c' \neq c} \{ \delta_{c,c'} \} \\
    \Sigma_c = & \Ave_{c' \neq c} \{ (\delta_{c,c'} - \delta_c) (\delta_{c,c'} - \delta_c)^T \},
\end{align}
we can further decompose $G_{1+2}$ into:
\begin{align}
    G_{1+2}
    = \Ave_c \{ \delta_{c,c} \delta_{c,c}^T \}
    + & (C-1) \Ave_c \left\{ \delta_c \delta_c^T \right\} \nonumber \\
    + & (C-1) \Ave_c \left\{ \Sigma_c \right\}.
\end{align}
Plugging the above expression into Equation \ref{eq:L1+2}, we obtain
\begin{align}
    G = & \underbrace{\Ave_c \{ \delta_{c,c} \delta_{c,c}^T \}}_{G_0}
    + \underbrace{(C-1) \Ave_c \left\{ \delta_c \delta_c^T \right\}}_{G_1} \nonumber \\
    + & \underbrace{(C-1) \Ave_c \left\{ \Sigma_c \right\}}_{G_2}
    + \underbrace{\frac{1}{C} \sum_{c,c'} \Sigma_{c,c'}}_{G_3}. \label{eq:G}
\end{align}
Compare Equation \eqref{eq:G} with Figure \ref{fig:definitions}. The expressions given here implement the structure depicted in Figure \ref{fig:definitions}.

\section{Experiments}
We train VGG11 \cite{simonyan2014very}, ResNet18 \cite{he2016deep} and DenseNet40 \cite{huang2017densely} on the MNIST \cite{lecun2010mnist}, Fashion MNIST \cite{xiao2017fashion} and CIFAR10 \cite{krizhevsky2009learning} datasets. We use stochastic gradient descent with $0.9$ momentum, $5{\times}10^{-4}$ weight decay and $128$ batch size. The initial learning rate is annealed by a factor of $10$ at $1/3$ and $2/3$ of the number of epochs. We train for $200$ epochs on MNIST and Fashion MNIST and $350$ for CIFAR10. For each dataset and network, we sweep over $100$ logarithmically spaced initial learning rates in the range $[0.25,0.0001]$ and pick the one that results in the best test error in the last epoch. For each dataset and network, we repeat the previous experiments on $20$ training sample sizes logarithmically spaced in the range $[10, 5000]$. The total number of experiments ran:
\begin{align}
    & \text{3 datasets} \times \text{3 networks} \times \text{20 sample sizes} \nonumber \\
    \times & \text{100 learning rates} = \text{18,000 experiments}.
\end{align}

We also train VGG16 and ResNet50 on ImageNet \cite{deng2009imagenet}, using the same parameters described above, except for the following differences. We use a batch size of $512$, with an initial learning rate of $0.01$ and $350$ epochs. We train ResNet50 on $600$ examples per class and VGG16 on the full dataset.

We compute the eigenvalues of $G_1$, $G_2$ and $G_{1+2}$ using the \textsc{eig} function available in modern standard libraries such as SciPy. However, instead of computing the eigenavlues of $G_1 = (C-1) \sum_c \delta_c \delta_c^T$, for example, we compute the eigenvalues of the corresponding $C \times C$ Gram matrix.

We summarize our results in Figures \ref{fig:tSNE}, \ref{fig:imagenet}, \ref{fig:imagenet_epochs}, \ref{fig:DOS} and \ref{fig:SI_SB_SW_ST}, and discuss their implications in the captions. We plan to publish our code with the publication of this paper.

\begin{figure*}
    \centering
    \begin{subfigure}[t]{0.33\textwidth}
        \centering
        \includegraphics[width=1\textwidth]{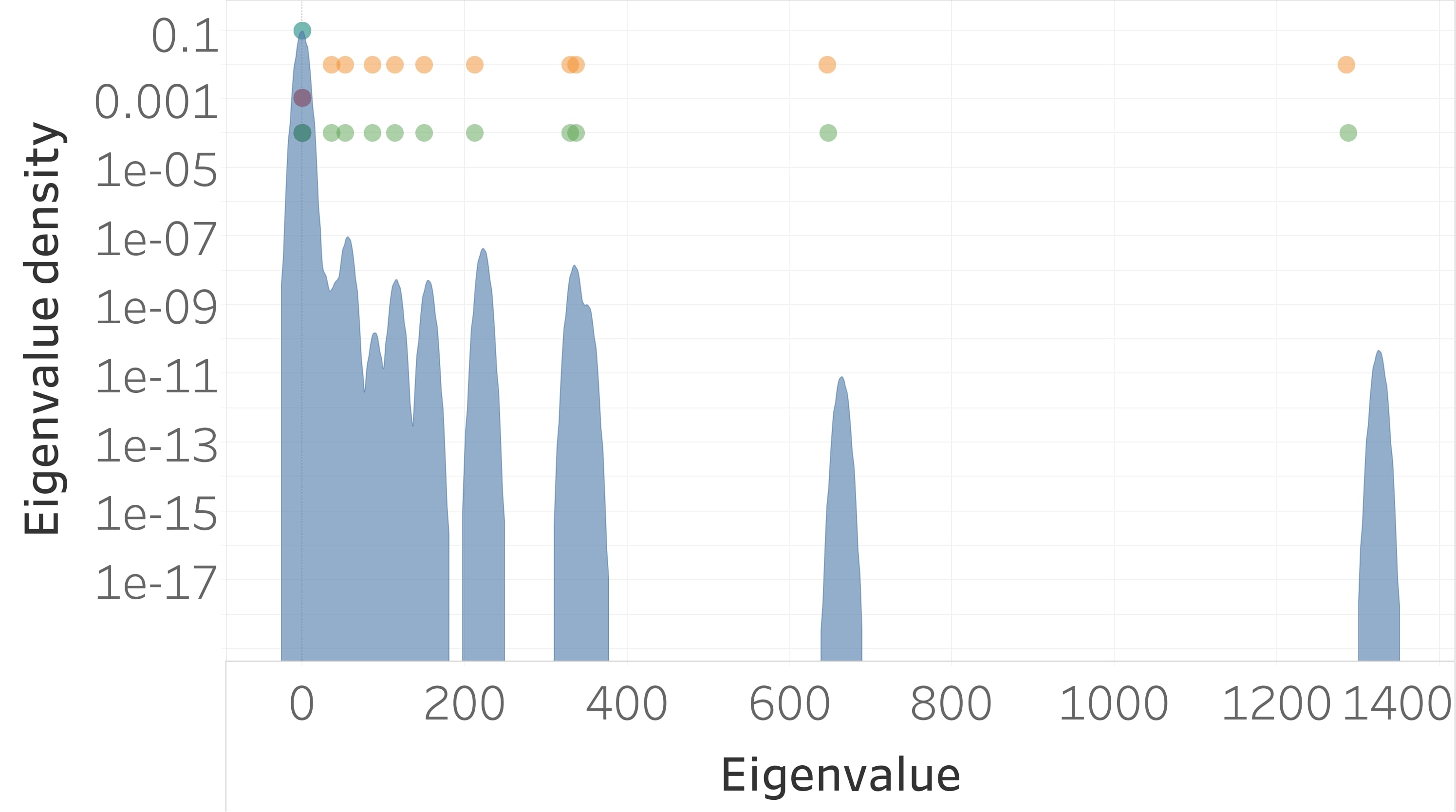}
        \caption{MNIST, 10 examples per class.}
    \end{subfigure}%
    ~
    \begin{subfigure}[t]{0.33\textwidth}
        \centering
        \includegraphics[width=1\textwidth]{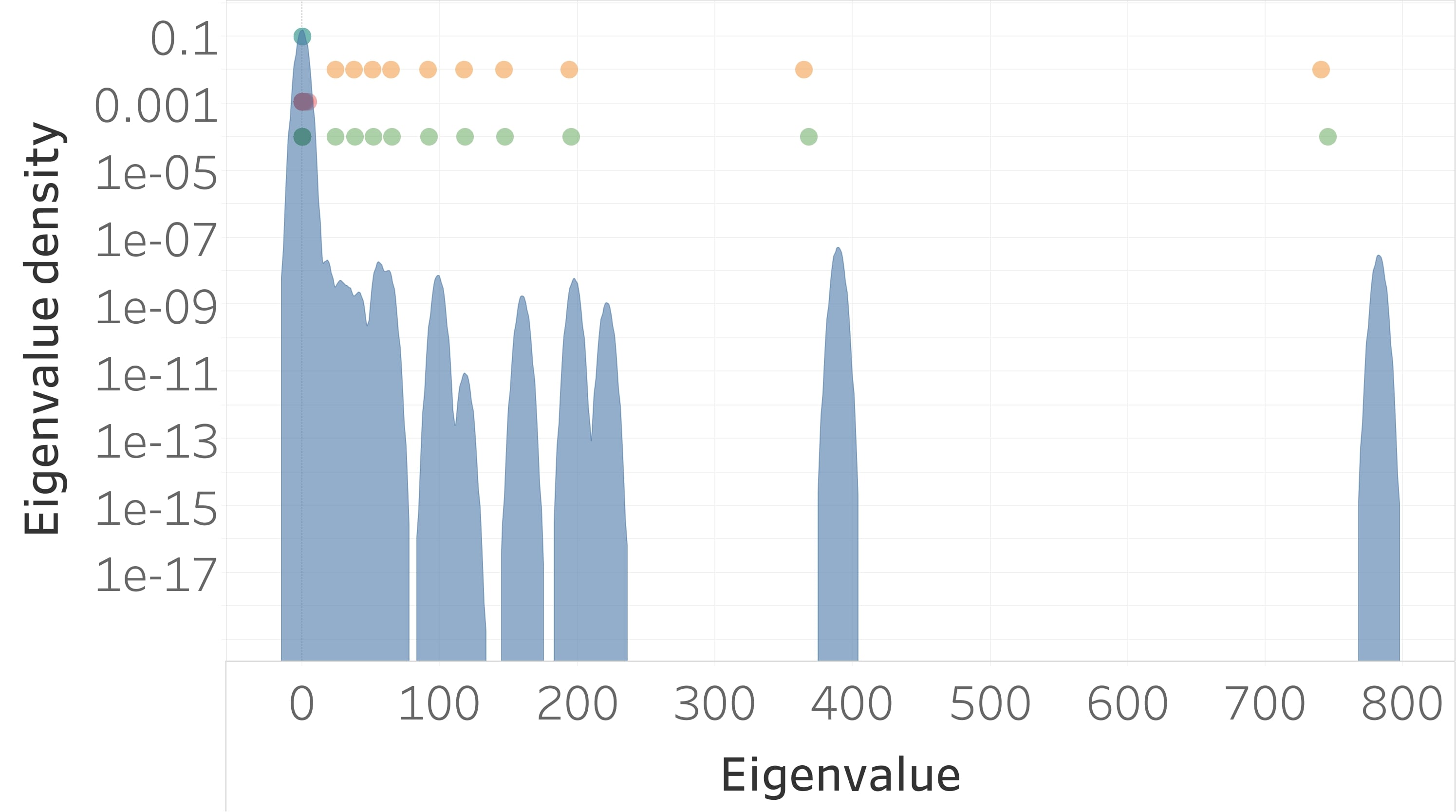}
        \caption{Fashion MNIST, 10 examples per class.}
    \end{subfigure}%
    ~
    \begin{subfigure}[t]{0.33\textwidth}
        \centering
        \includegraphics[width=1\textwidth]{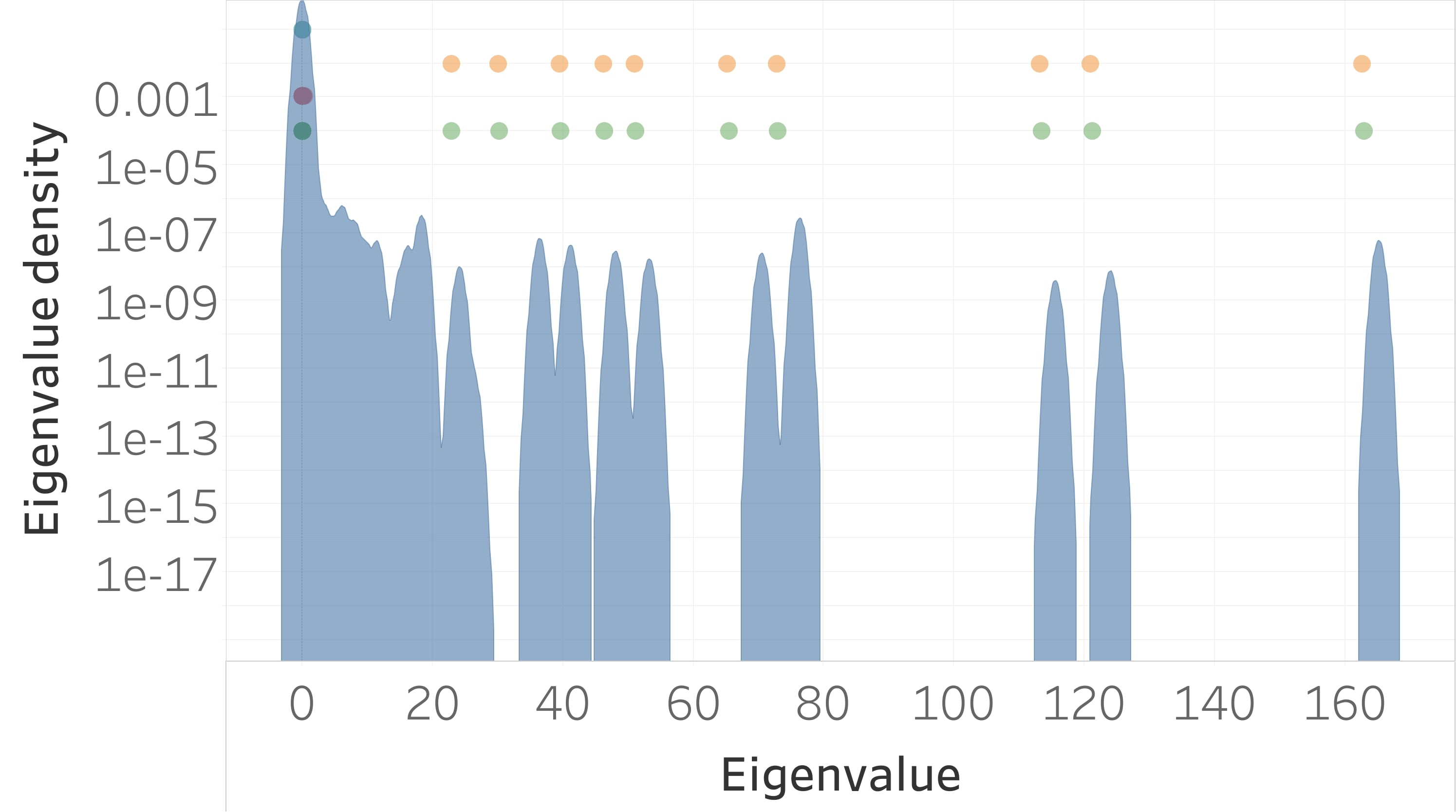}
        \caption{CIFAR10, 10 examples per class.}
    \end{subfigure}
    
    \vspace{0.25cm}
    \centering
    \begin{subfigure}[t]{0.33\textwidth}
        \centering
        \includegraphics[width=1\textwidth]{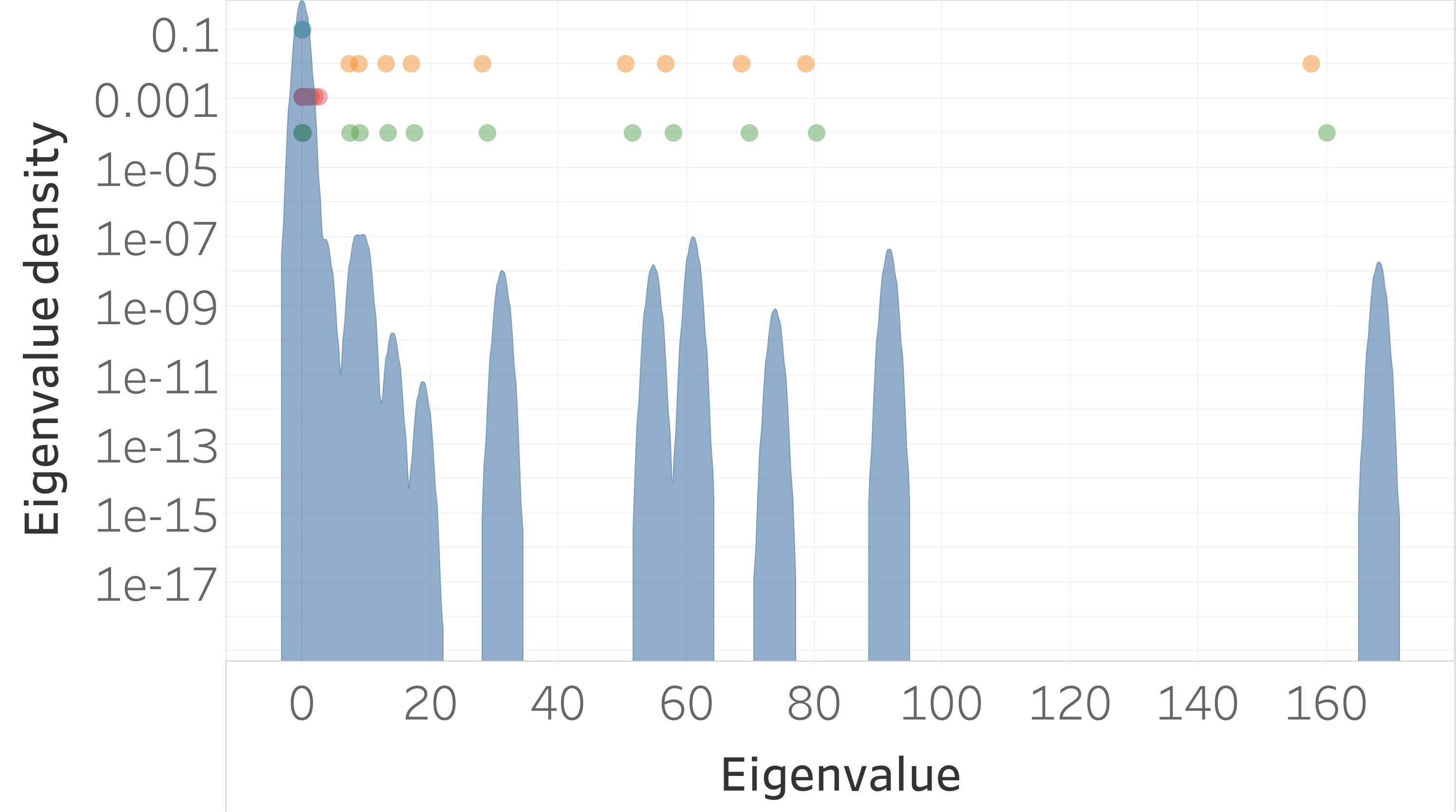}
        \caption{MNIST, 136 examples per class.}
    \end{subfigure}%
    ~
    \begin{subfigure}[t]{0.33\textwidth}
        \centering
        \includegraphics[width=1\textwidth]{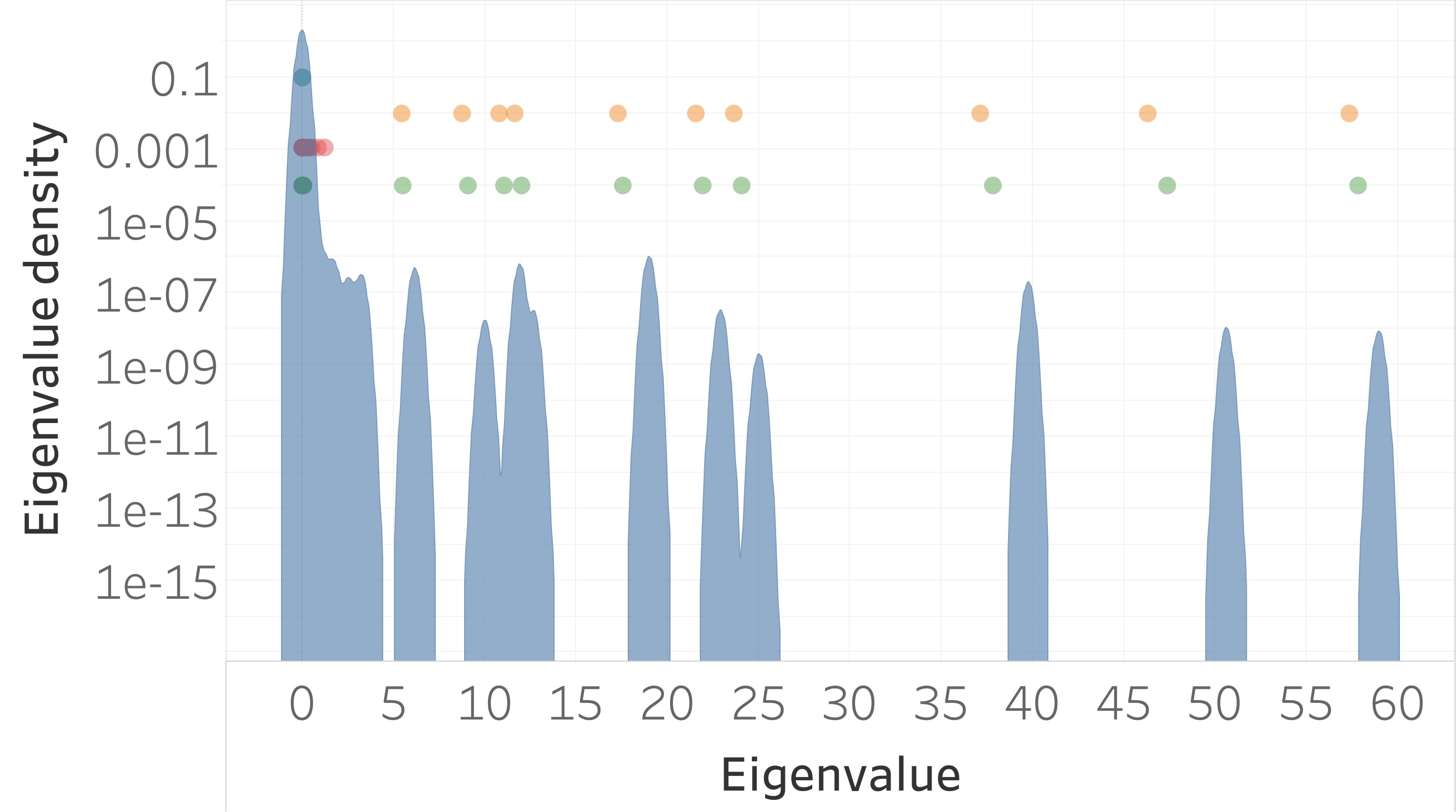}
        \caption{Fashion MNIST, 136 examples per class.}
    \end{subfigure}%
    ~
    \begin{subfigure}[t]{0.33\textwidth}
        \centering
        \includegraphics[width=1\textwidth]{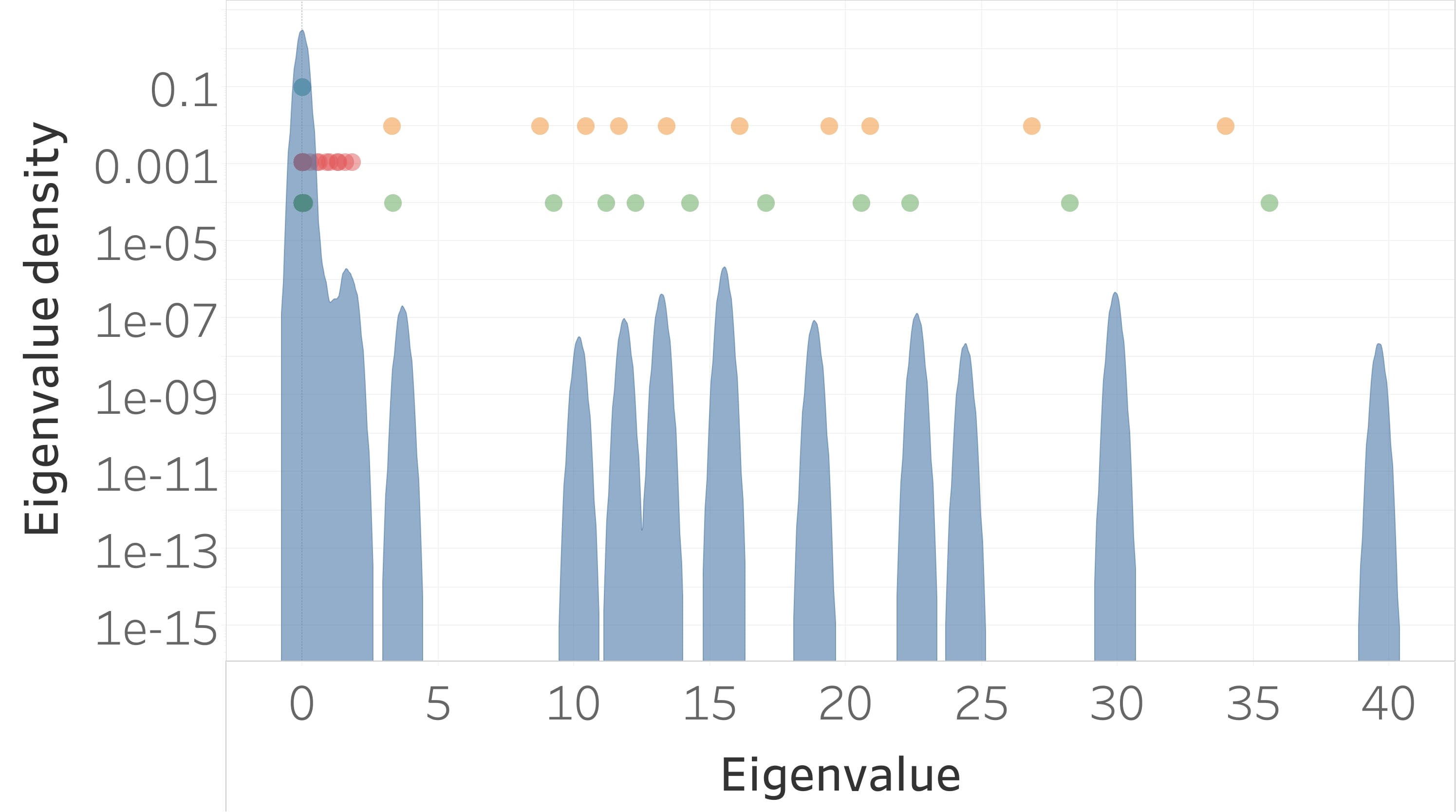}
        \caption{CIFAR10, 136 examples per class.}
    \end{subfigure}
    
    \vspace{0.25cm}
    
    \centering
    \begin{subfigure}[t]{0.33\textwidth}
        \centering
        \includegraphics[width=1\textwidth]{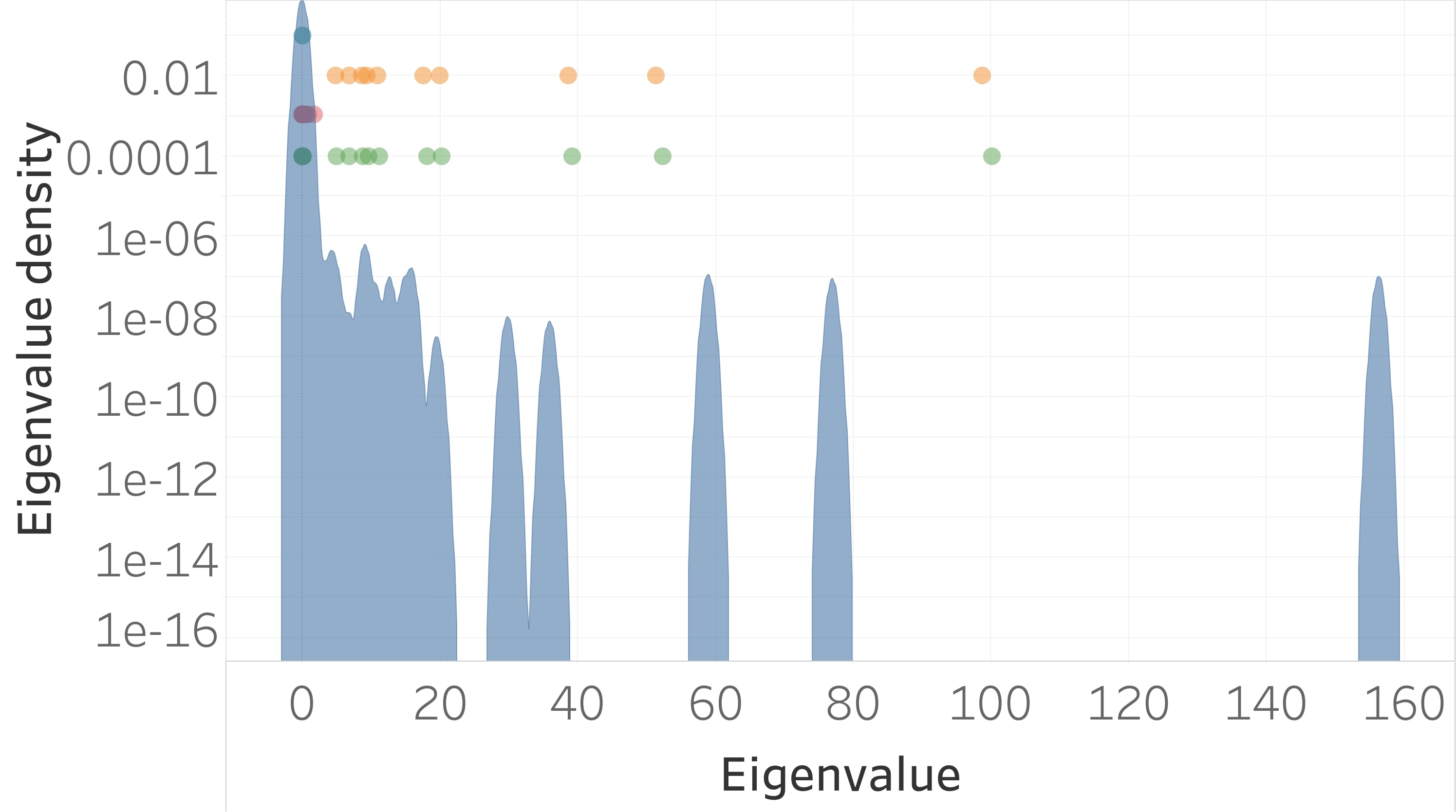}
        \caption{MNIST, 365 examples per class.}
    \end{subfigure}%
    ~
    \begin{subfigure}[t]{0.33\textwidth}
        \centering
        \includegraphics[width=1\textwidth]{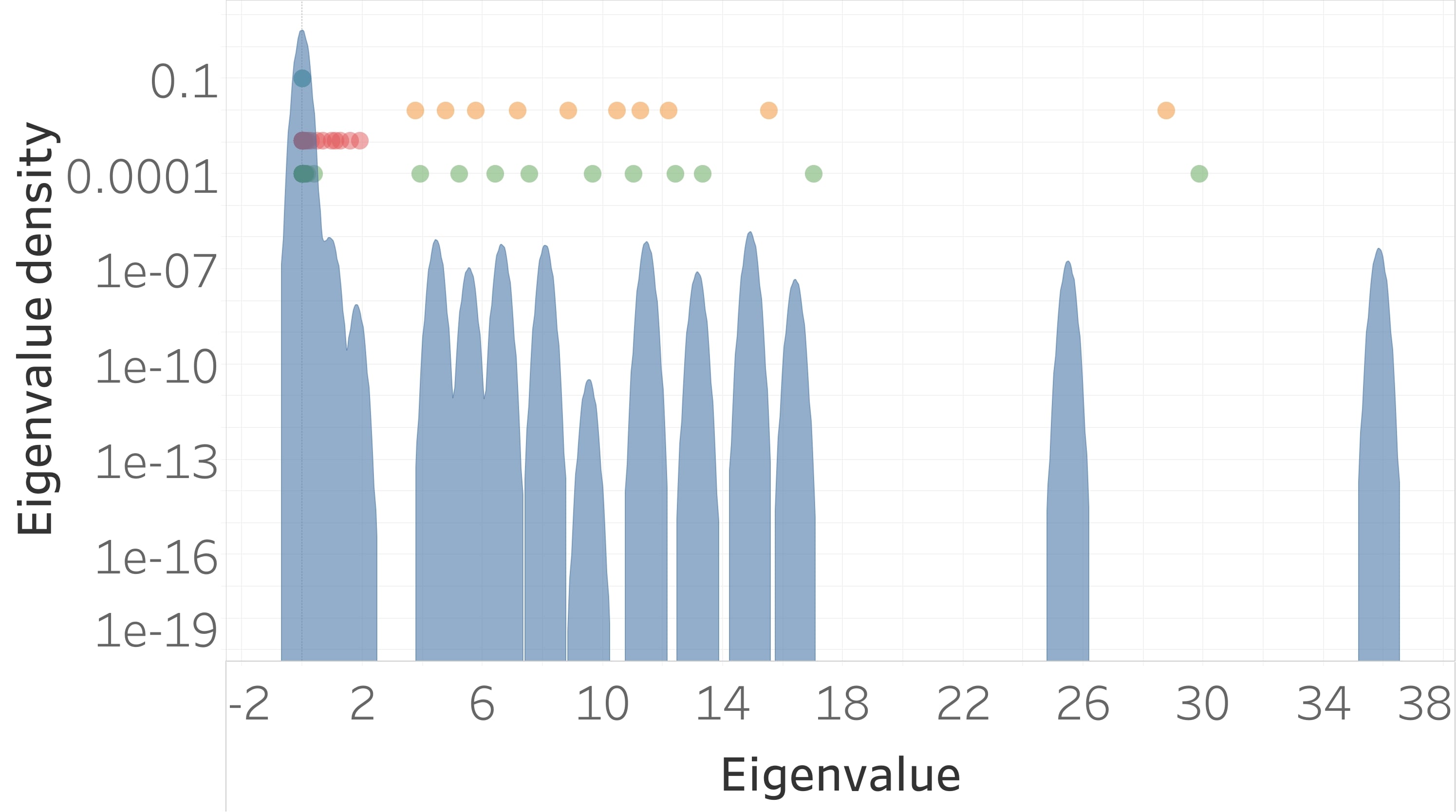}
        \caption{Fashion MNIST, 365 examples per class.}
    \end{subfigure}%
    ~
    \begin{subfigure}[t]{0.33\textwidth}
        \centering
        \includegraphics[width=1\textwidth]{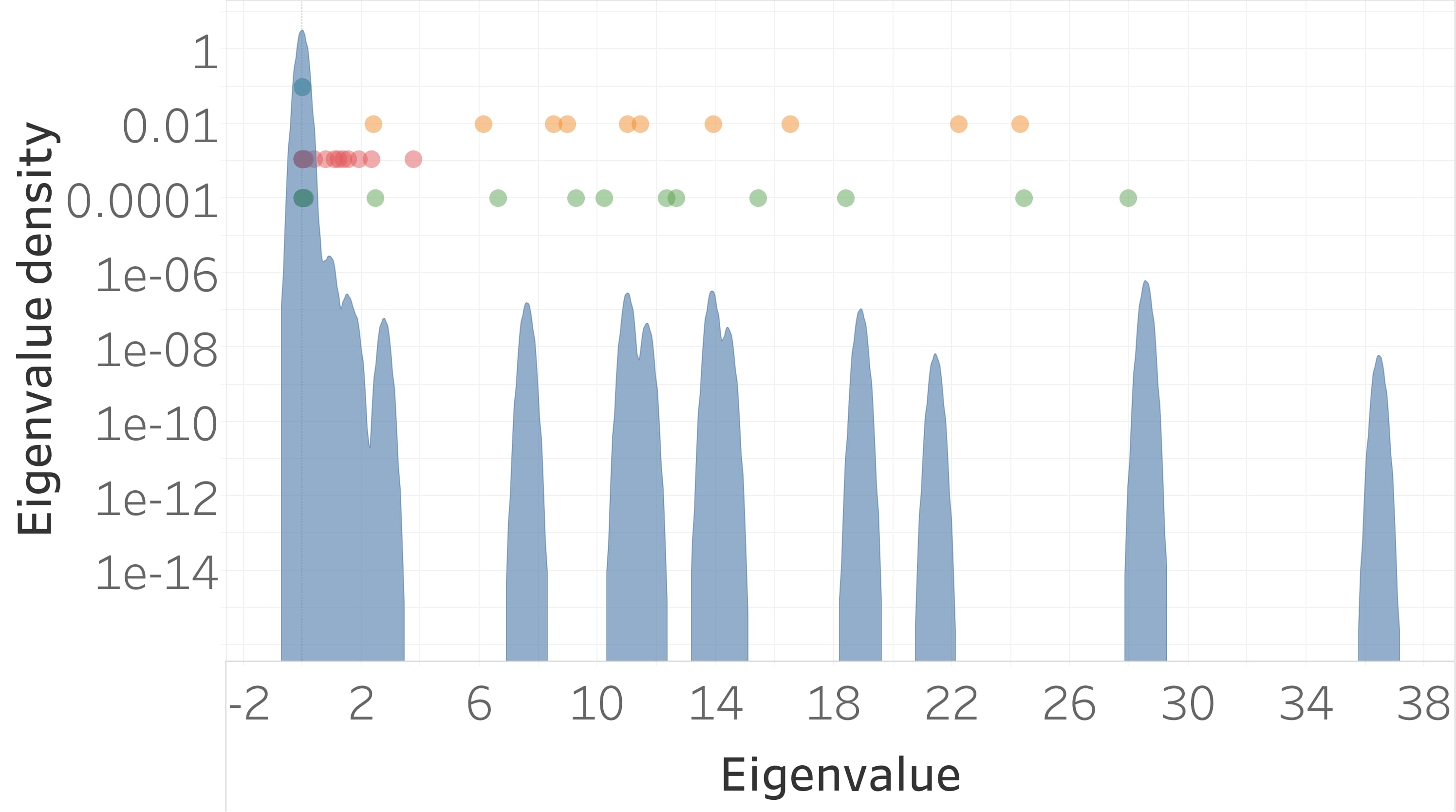}
        \caption{CIFAR10, 365 examples per class.}
    \end{subfigure}
    
    \vspace{0.25cm}
    
    \centering
    \begin{subfigure}[t]{0.33\textwidth}
        \centering
        \includegraphics[width=1\textwidth]{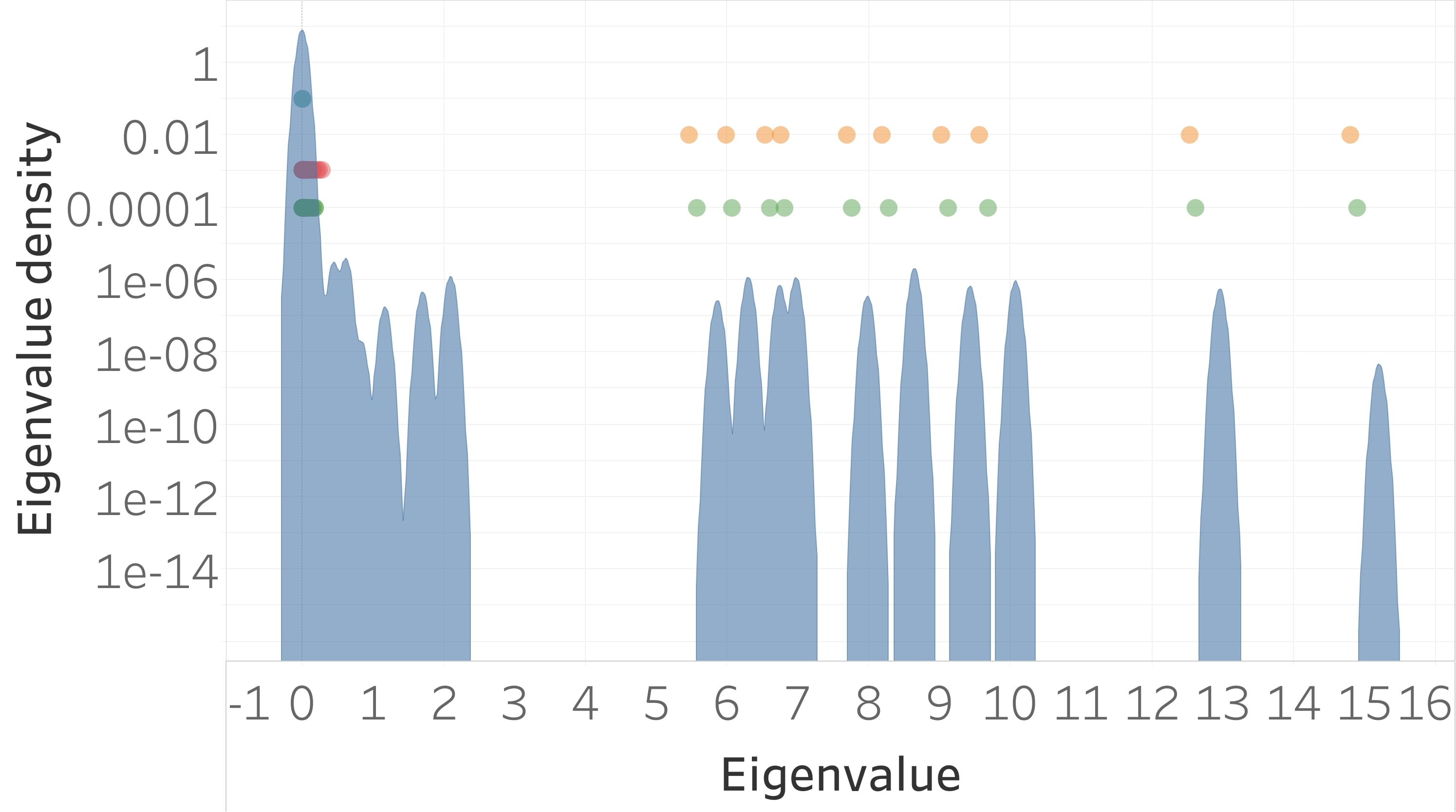}
        \caption{MNIST, 702 examples per class.}
    \end{subfigure}%
    ~
    \begin{subfigure}[t]{0.33\textwidth}
        \centering
        \includegraphics[width=1\textwidth]{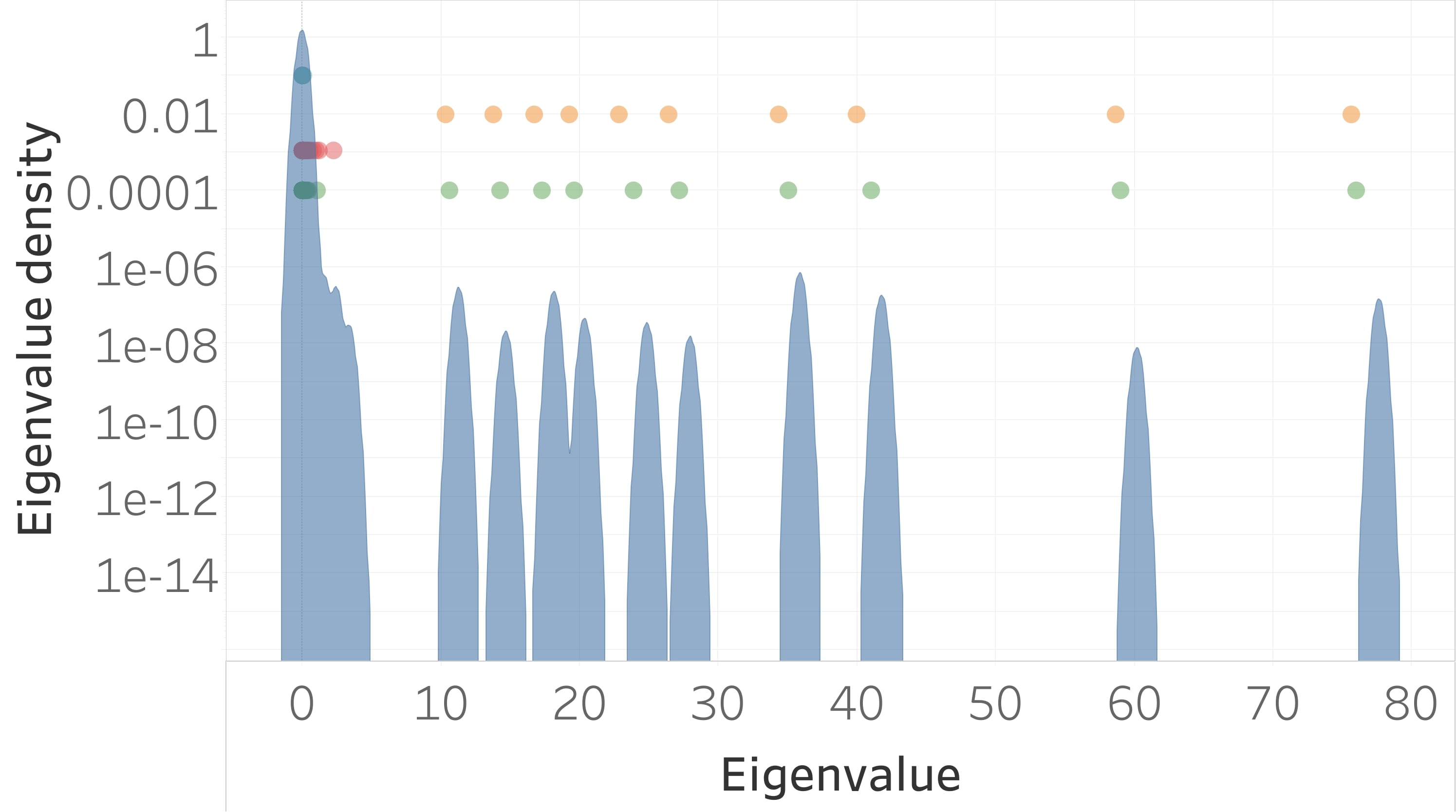}
        \caption{Fashion MNIST, 702 examples per class.}
    \end{subfigure}%
    ~
    \begin{subfigure}[t]{0.33\textwidth}
        \centering
        \includegraphics[width=1\textwidth]{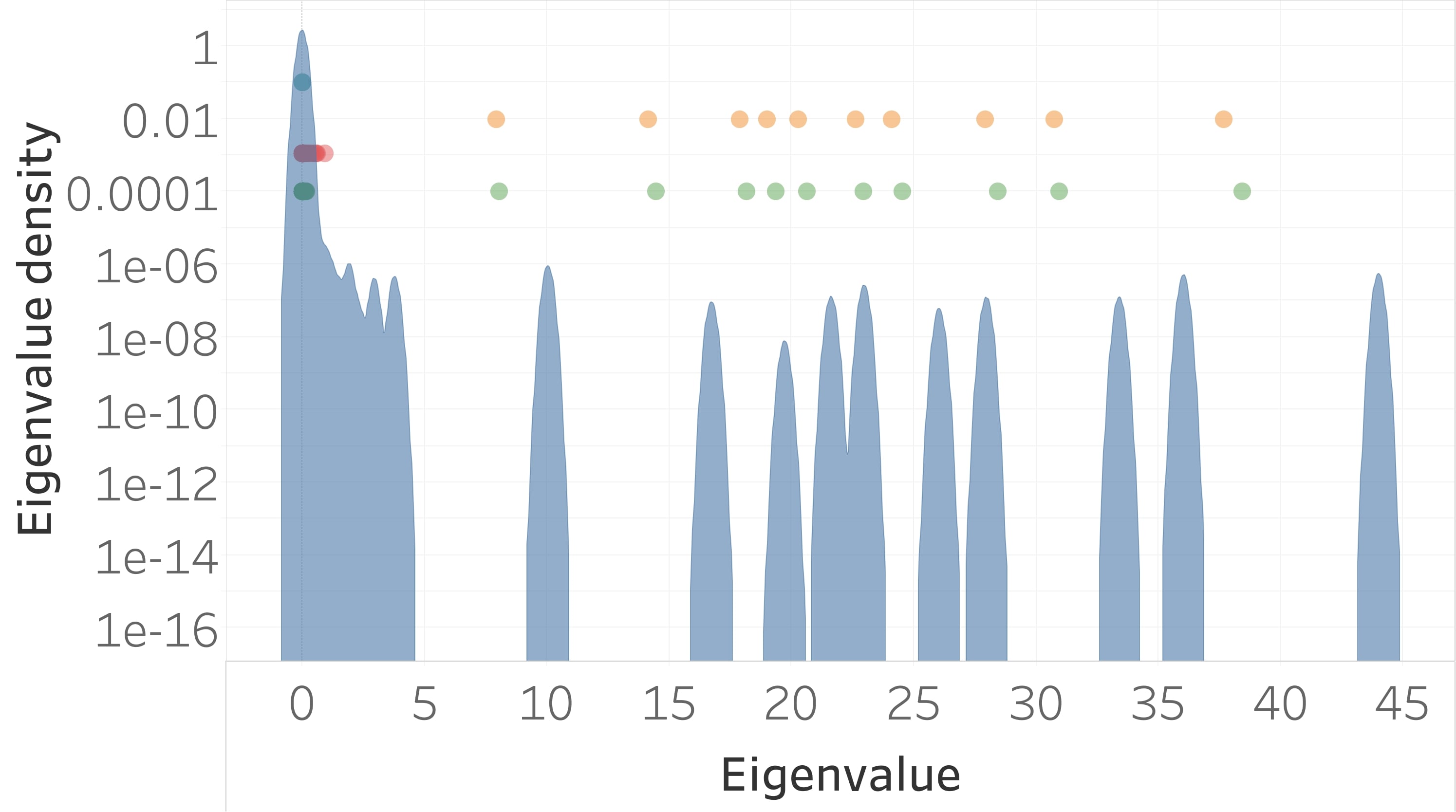}
        \caption{CIFAR10, 702 examples per class.}
    \end{subfigure}
    
    \vspace{0.25cm}
    
    \centering
    \begin{subfigure}[t]{0.33\textwidth}
        \centering
        \includegraphics[width=1\textwidth]{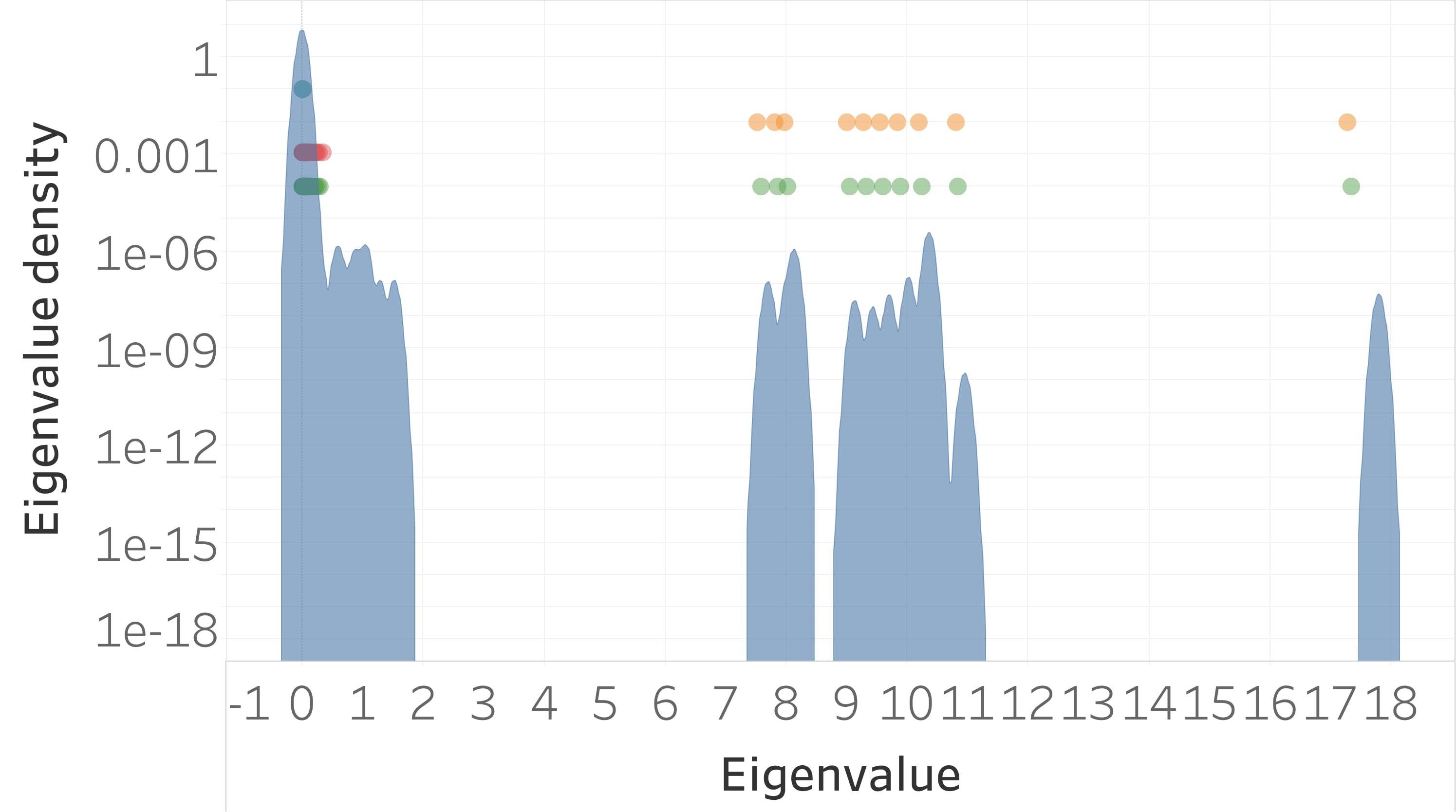}
        \caption{MNIST, 2599 examples per class.}
    \end{subfigure}%
    ~
    \begin{subfigure}[t]{0.33\textwidth}
        \centering
        \includegraphics[width=1\textwidth]{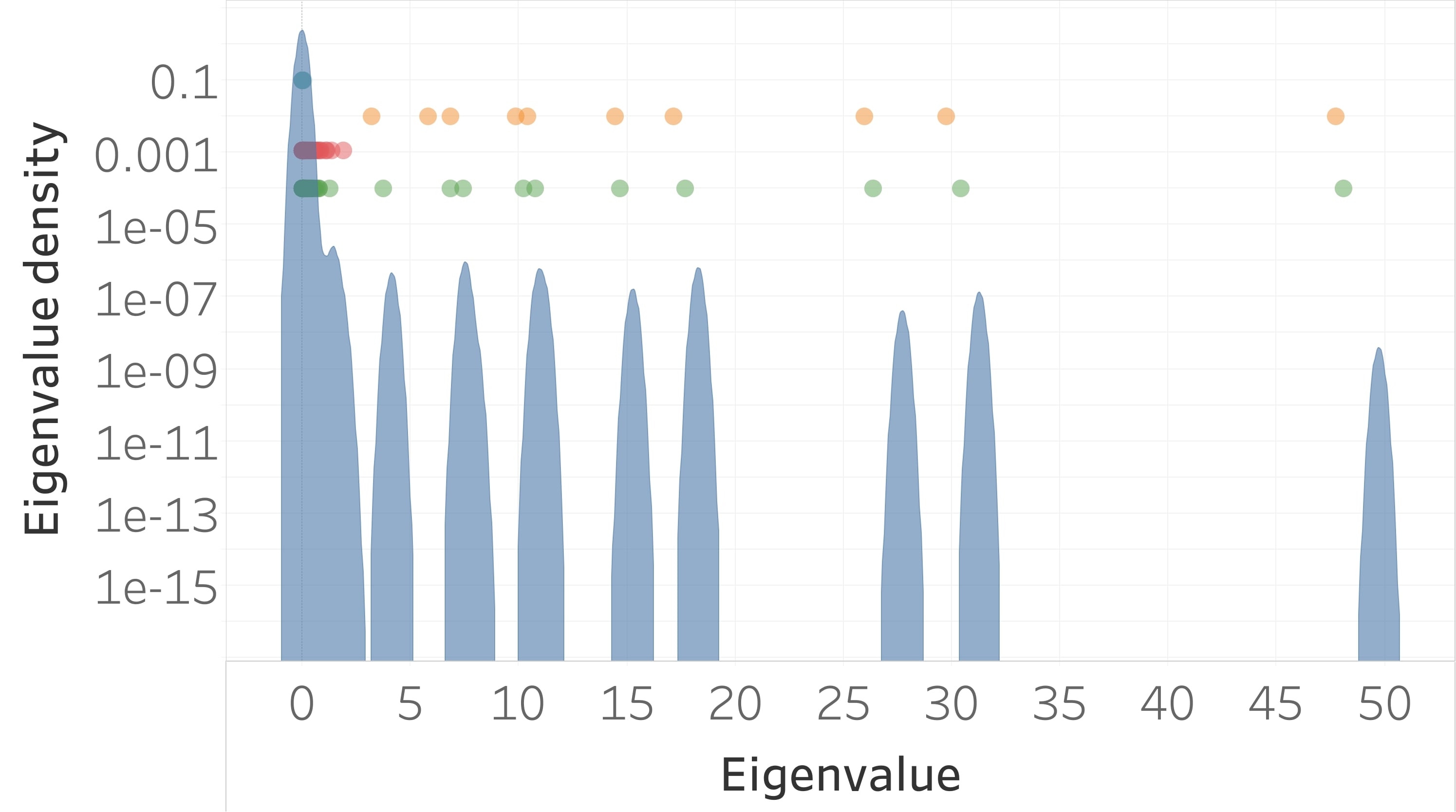}
        \caption{Fashion MNIST, 2599 examples per class.}
    \end{subfigure}%
    ~
    \begin{subfigure}[t]{0.33\textwidth}
        \centering
        \includegraphics[width=1\textwidth]{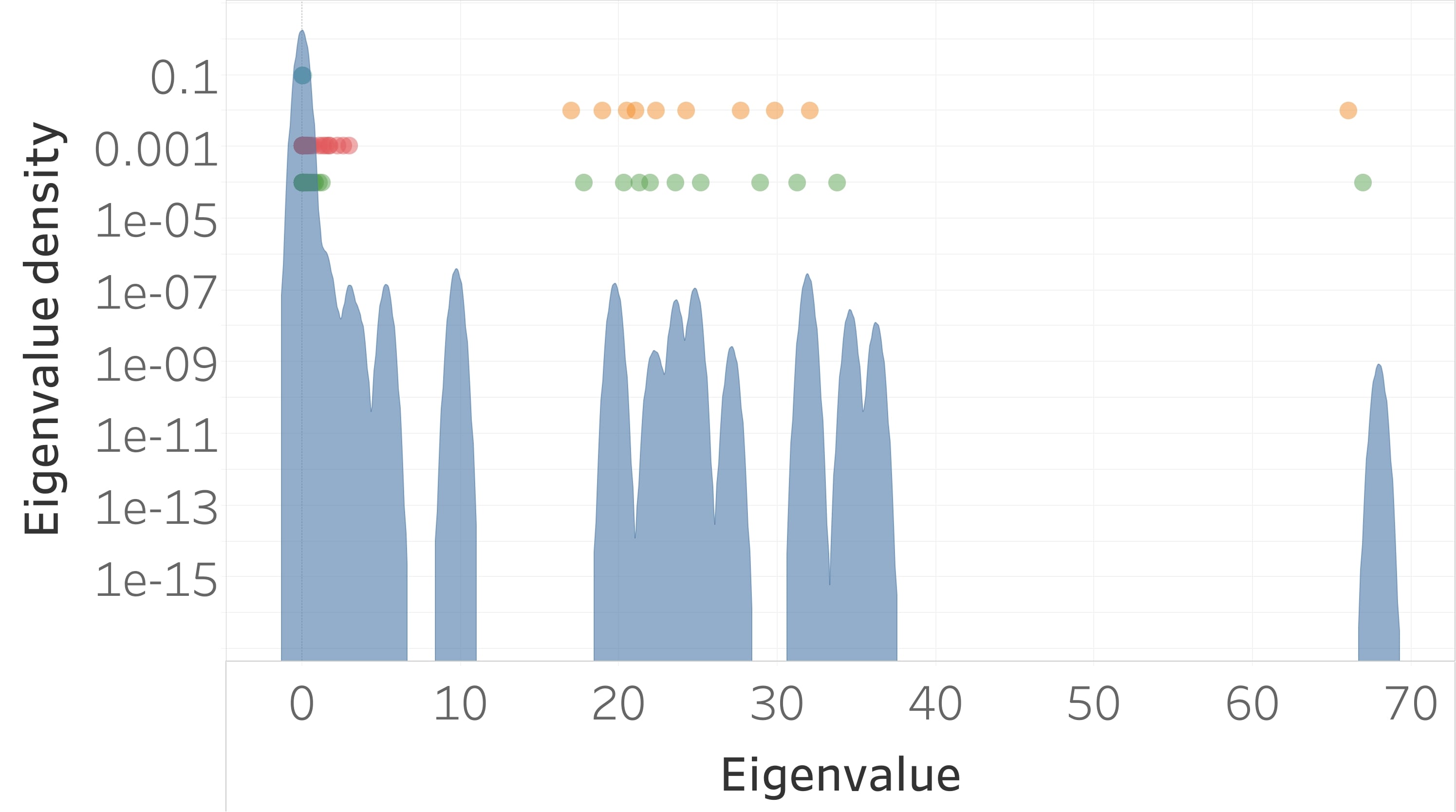}
        \caption{CIFAR10, 1351 examples per class.}
    \end{subfigure}
    \caption{\textit{Decomposing $G$ into its constituent components for the ResNet18 architecture.} Each column of panels corresponds to a different dataset, and each row to a different sample size. Each panel depicts the density of the spectrum of $G$ in \textbf{{\color{MidnightBlue}blue}}, where the y-axis is on a logarithmic scale. The density was approximated using the \textsc{FastLanczos} method presented in \cite{papyan2018full}. Each panel also plots the eigenvalues of $G_0$ in \textbf{{\color{Cyan}cyan}}, $G_1$ in \textbf{{\color{Orange}orange}}, $G_2$ in \textbf{{\color{Red}red}} and $G_{1+2}$ in \textbf{{\color{ForestGreen}green}}. The obtained results corroborate certain predictions made throughout our analysis. Specifically, the outliers in the second moment matrix $G$ can be attributed to the eigenvalues of $G_{1+2}$. The top-$C$ eigenvalues of $G_{1+2}$ are dominant compared to the others and they match those of $G_1$. The eigenvalues of $G_0$ are negligible compared to the spread of the others. They correspond to a single cyan point in the main lobe of each plot.
    } \label{fig:DOS}
\end{figure*}

\begin{figure*}[t!]
    \centering
    \begin{subfigure}[t]{0.33\textwidth}
        \centering
        \includegraphics[width=1\textwidth]{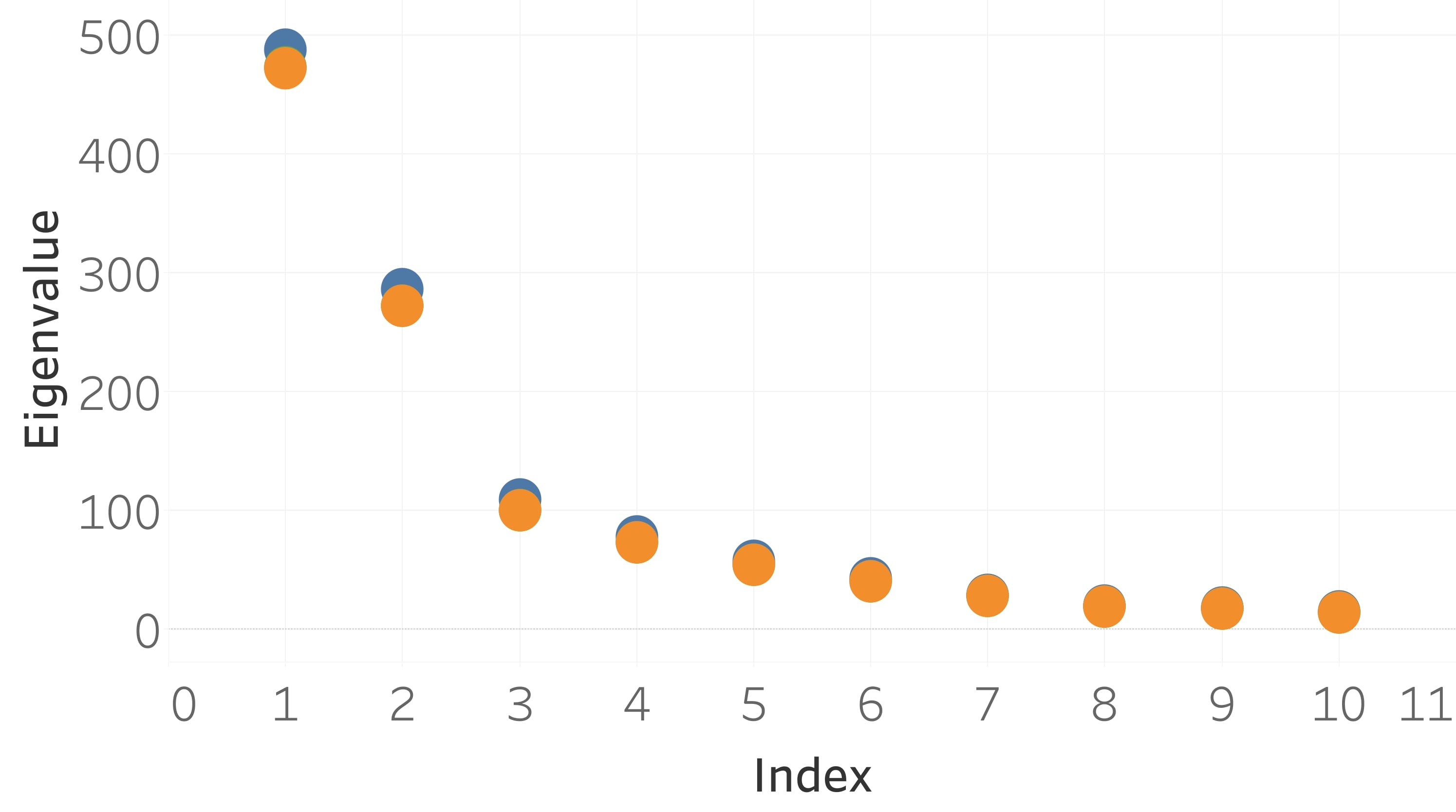}
        \caption{MNIST, 10 examples per class.}
    \end{subfigure}%
    ~
    \begin{subfigure}[t]{0.33\textwidth}
        \centering
        \includegraphics[width=1\textwidth]{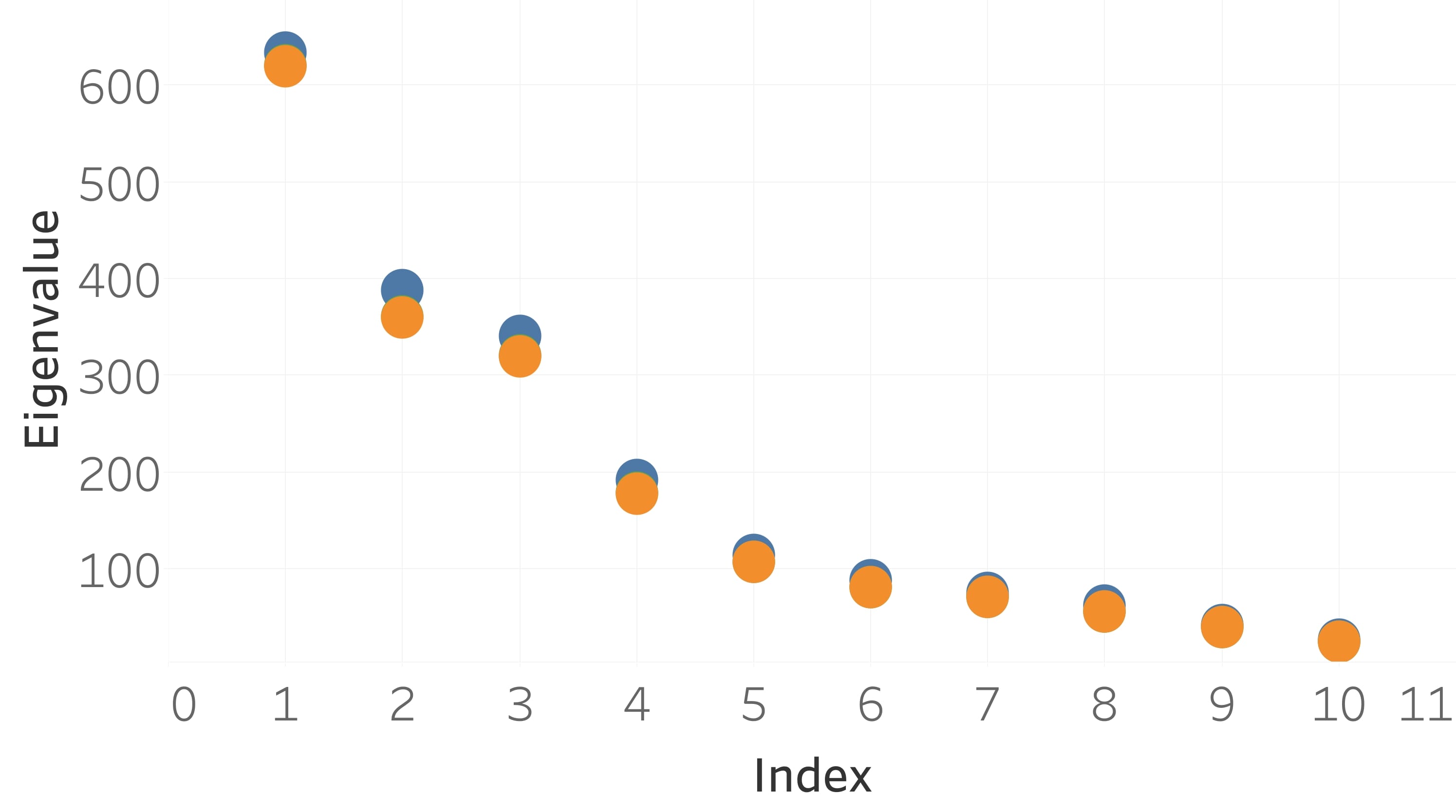}
        \caption{Fashion MNIST, 10 examples per class.}
    \end{subfigure}%
    ~
    \begin{subfigure}[t]{0.33\textwidth}
        \centering
        \includegraphics[width=1\textwidth]{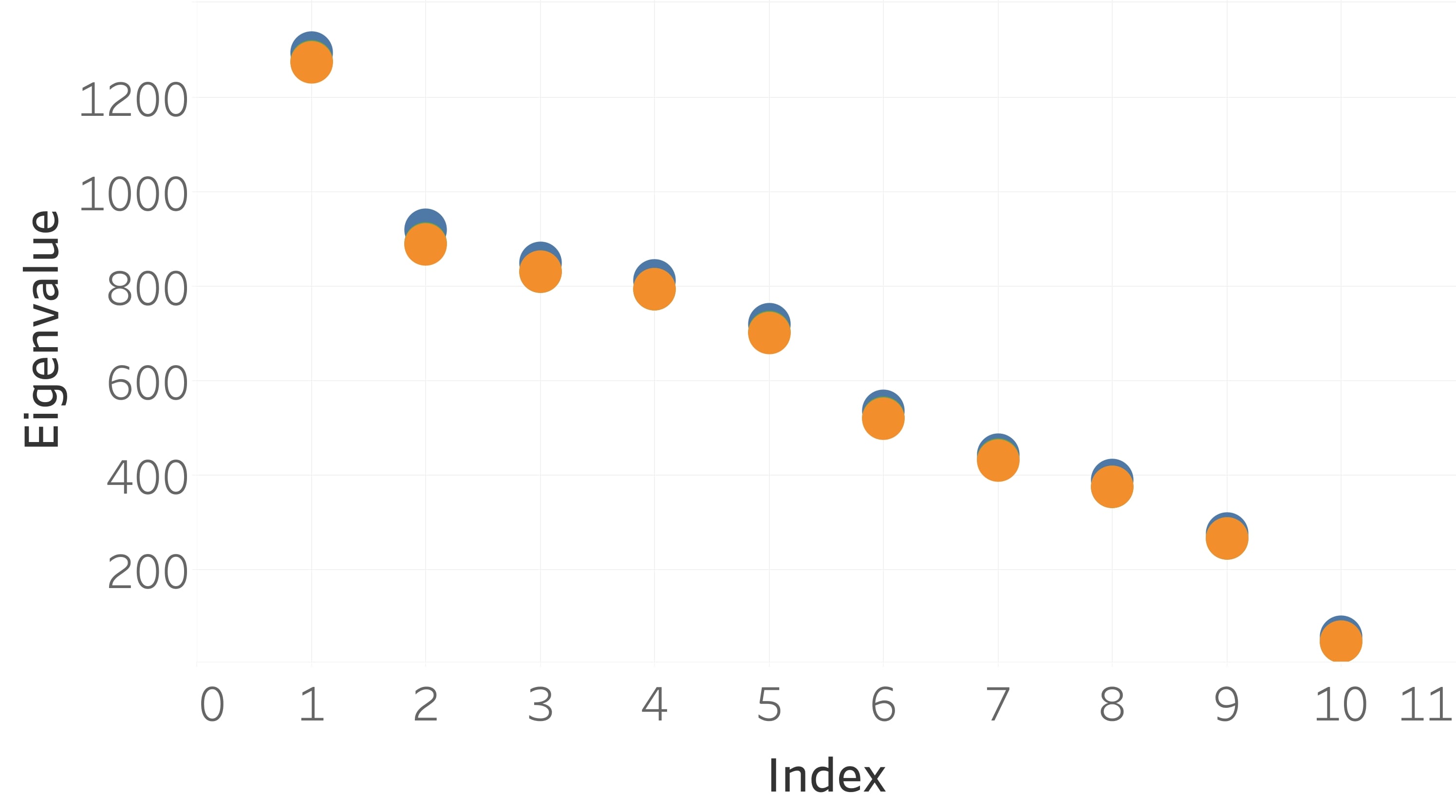}
        \caption{CIFAR10, 10 examples per class.}
    \end{subfigure}
    
    \vspace{0.1cm}
    
    \centering
    \begin{subfigure}[t]{0.33\textwidth}
        \centering
        \includegraphics[width=1\textwidth]{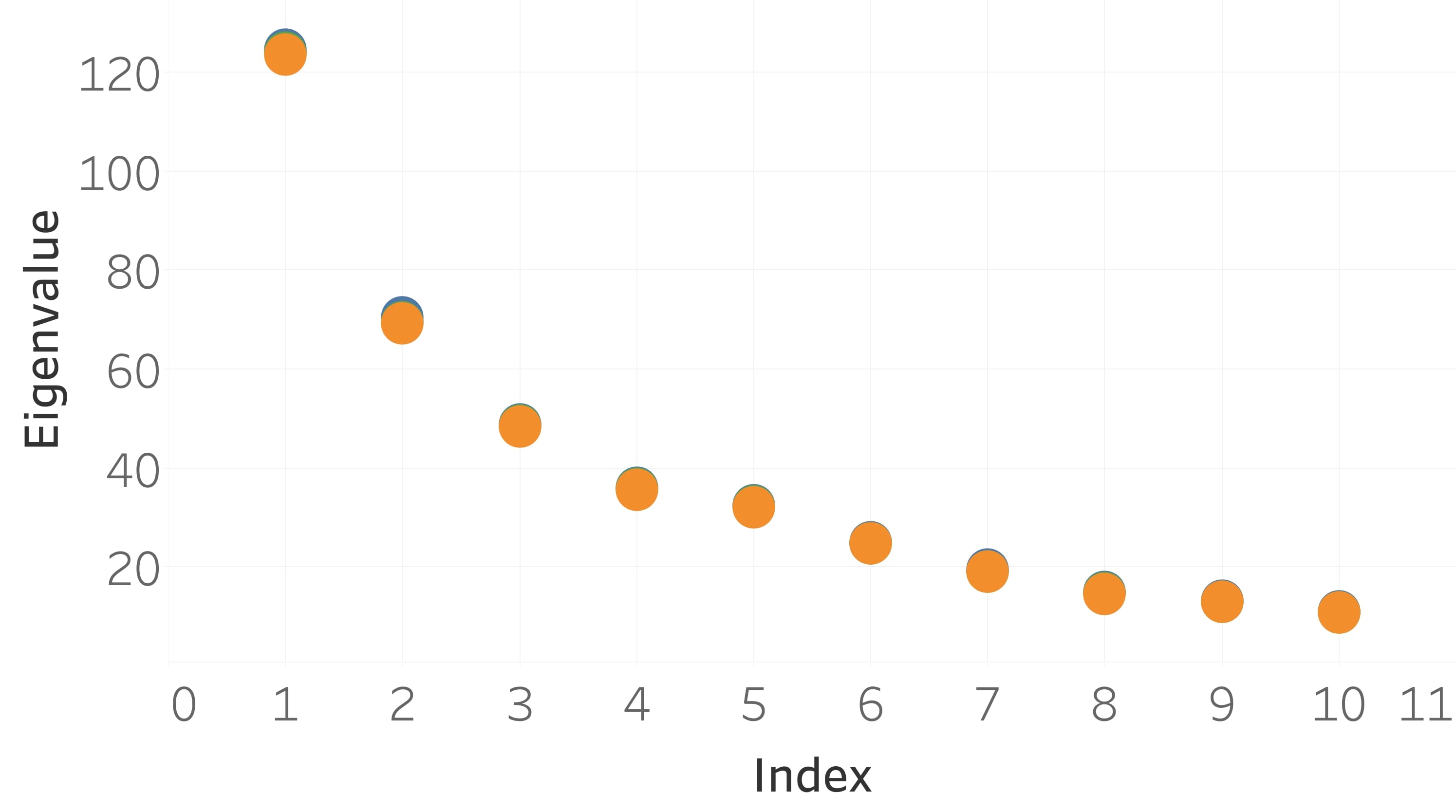}
        \caption{MNIST, 136 examples per class.}
    \end{subfigure}%
    ~
    \begin{subfigure}[t]{0.33\textwidth}
        \centering
        \includegraphics[width=1\textwidth]{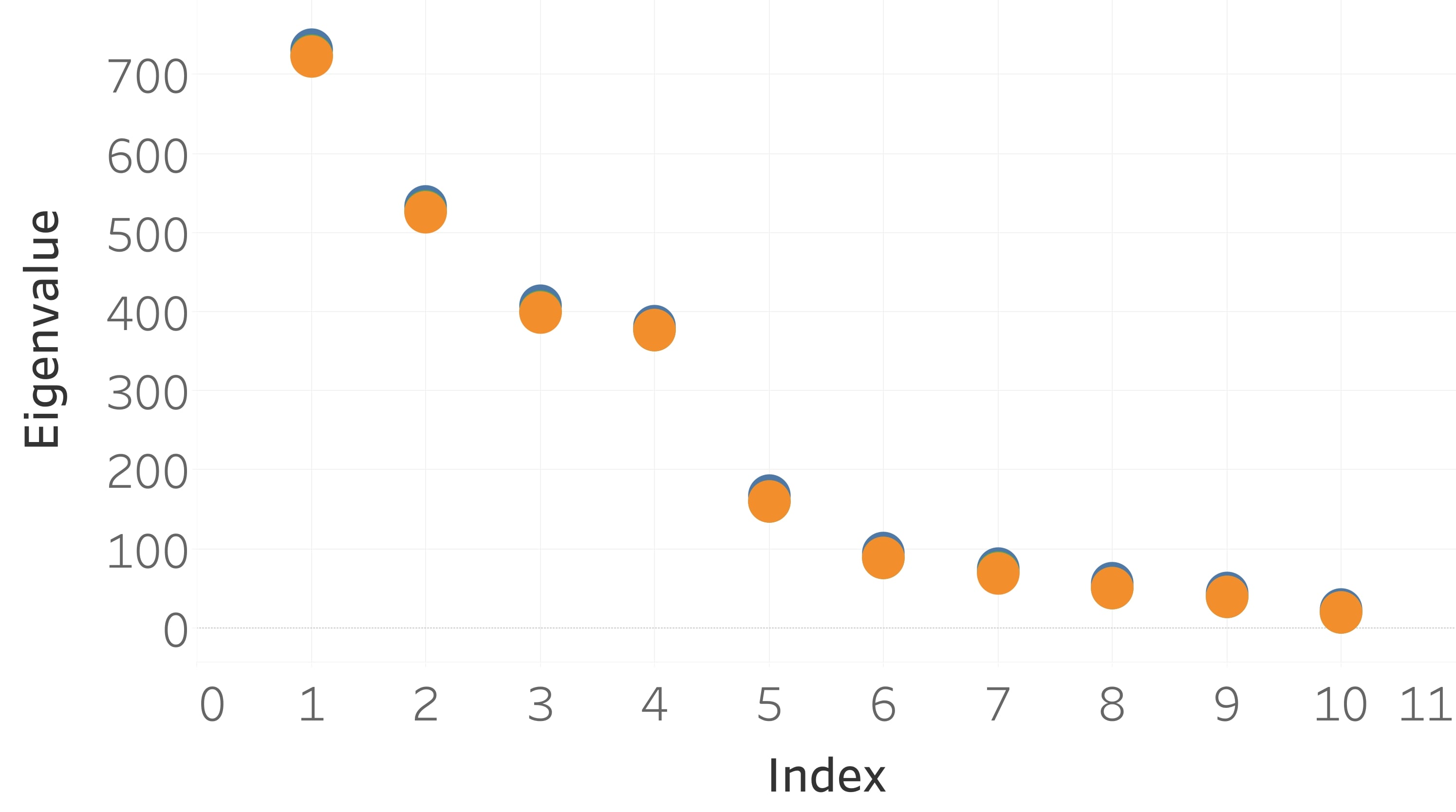}
        \caption{Fashion MNIST, 136 examples per class.}
    \end{subfigure}%
    ~
    \begin{subfigure}[t]{0.33\textwidth}
        \centering
        \includegraphics[width=1\textwidth]{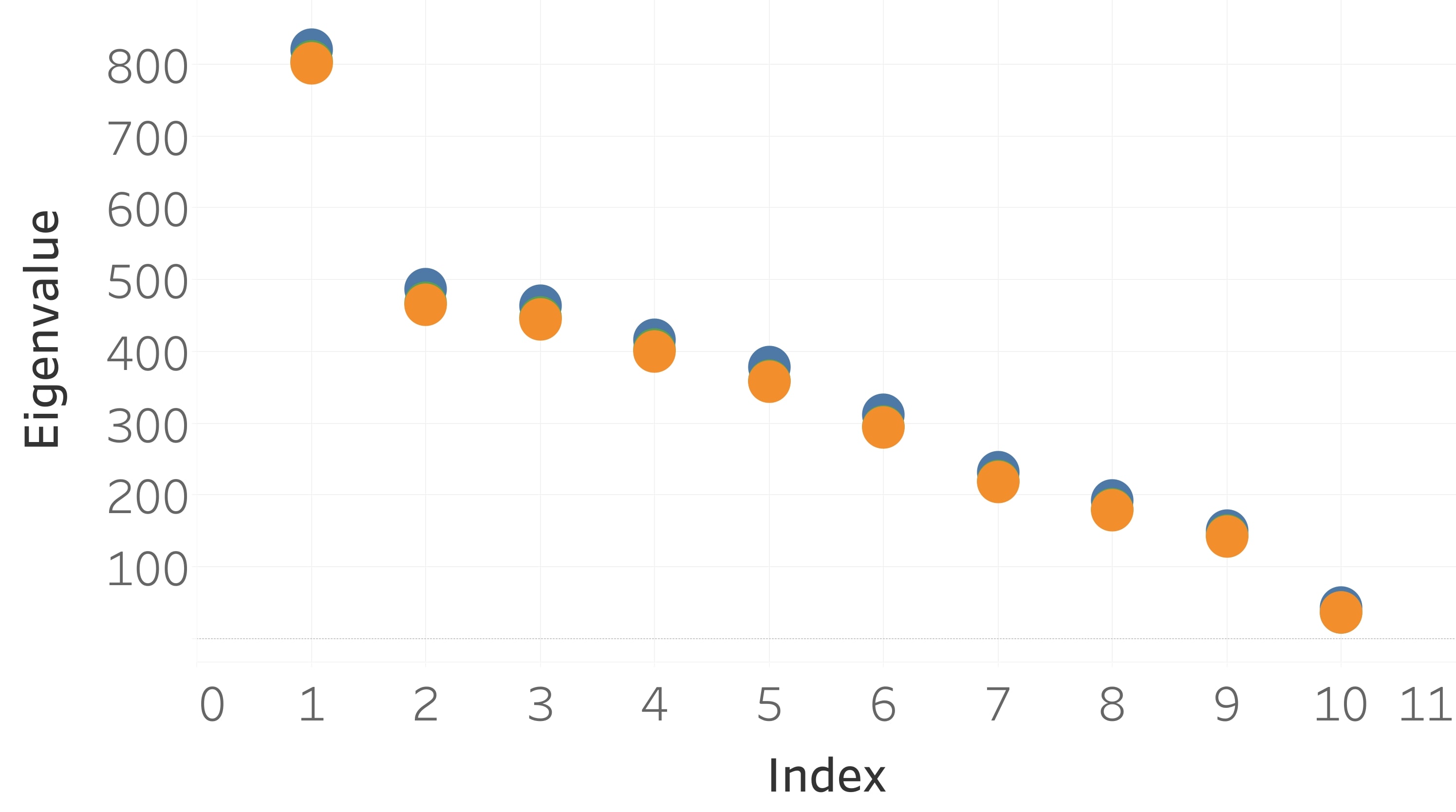}
        \caption{CIFAR10, 136 examples per class.}
    \end{subfigure}
    
    % \vspace{0.1cm}
    
    % \centering
    % \begin{subfigure}[t]{0.33\textwidth}
    %     \centering
    %     \includegraphics[width=1\textwidth]{graphics/SI_SB_SW_ST/MNIST_VGG11_506_epc.jpg}
    %     \caption{MNIST, 506 examples per class.}
    % \end{subfigure}%
    % ~
    % \begin{subfigure}[t]{0.33\textwidth}
    %     \centering
    %     \includegraphics[width=1\textwidth]{graphics/SI_SB_SW_ST/Fashion_VGG11_506_epc.jpg}
    %     \caption{Fashion MNIST, 506 examples per class.}
    % \end{subfigure}%
    % ~
    % \begin{subfigure}[t]{0.33\textwidth}
    %     \centering
    %     \includegraphics[width=1\textwidth]{graphics/SI_SB_SW_ST/CIFAR10_VGG11_506_epc.jpg}
    %     \caption{CIFAR10, 506 examples per class.}
    % \end{subfigure}
    
    \vspace{0.1cm}
    
    \centering
    \begin{subfigure}[t]{0.33\textwidth}
        \centering
        \includegraphics[width=1\textwidth]{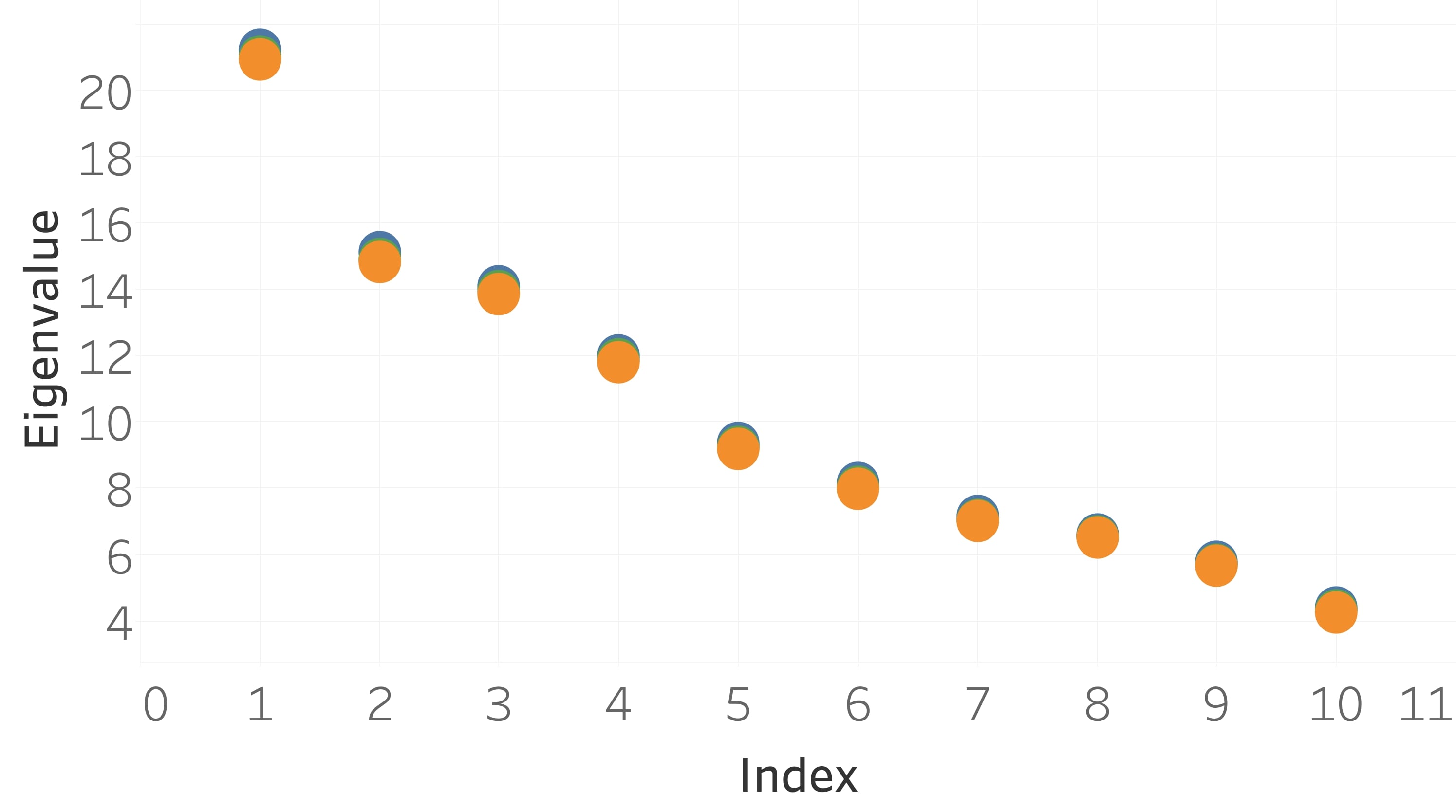}
        \caption{MNIST, 1351 examples per class.}
    \end{subfigure}%
    ~
    \begin{subfigure}[t]{0.33\textwidth}
        \centering
        \includegraphics[width=1\textwidth]{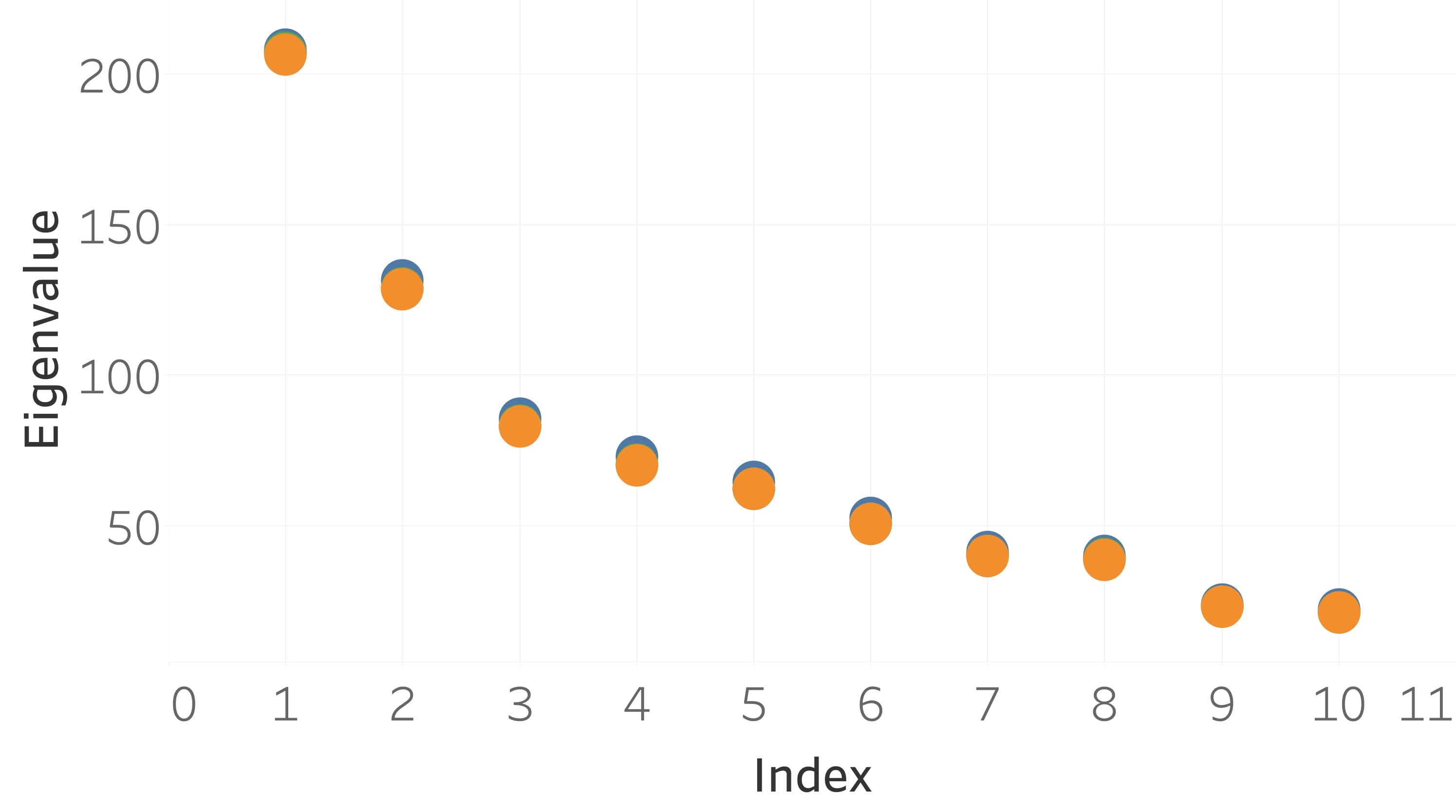}
        \caption{Fashion MNIST, 1351 examples per class.}
    \end{subfigure}%
    ~
    \begin{subfigure}[t]{0.33\textwidth}
        \centering
        \includegraphics[width=1\textwidth]{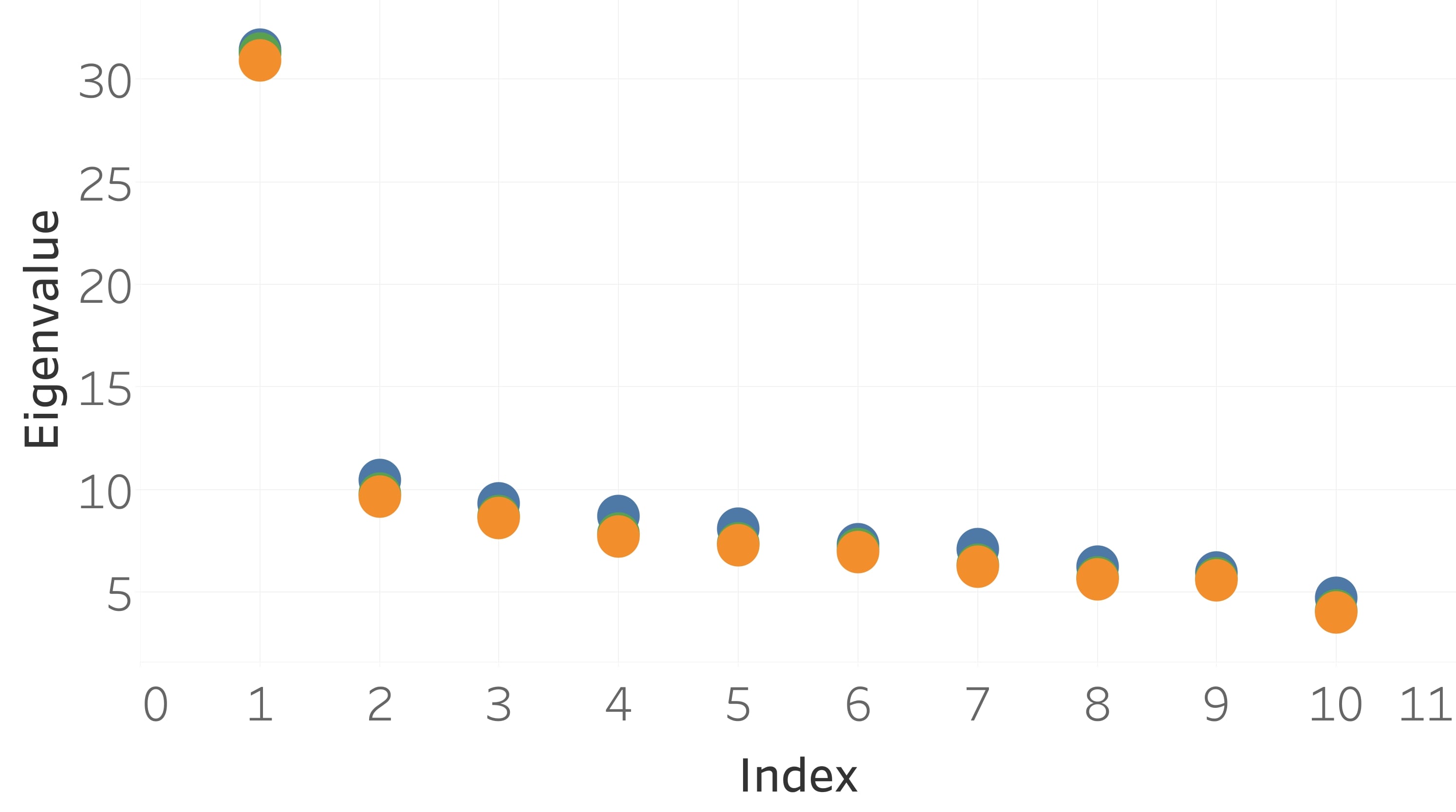}
        \caption{CIFAR10, 1351 examples per class.}
    \end{subfigure}
    
    \vspace{0.1cm}
    
    \centering
    \begin{subfigure}[t]{0.33\textwidth}
        \centering
        \includegraphics[width=1\textwidth]{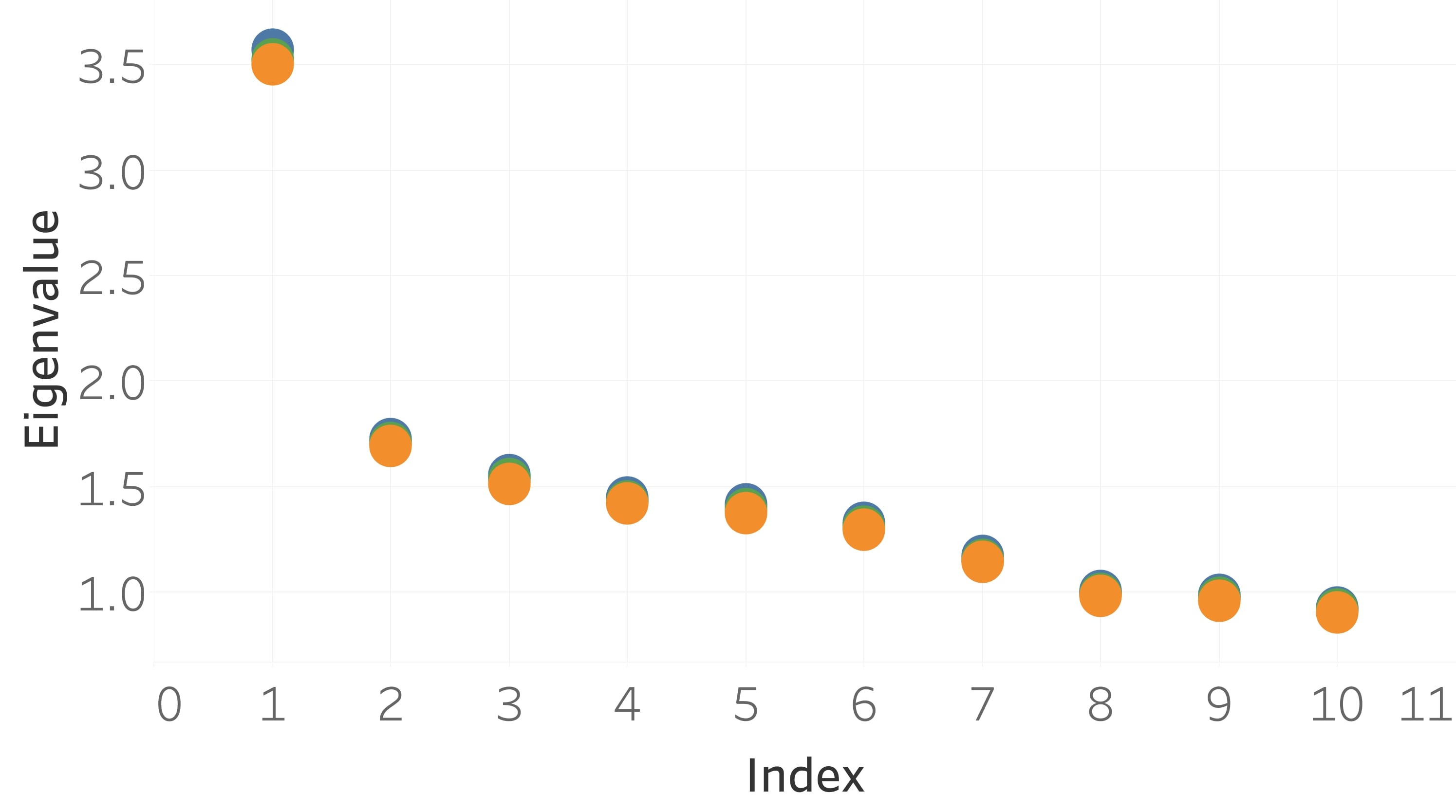}
        \caption{MNIST, 3605 examples per class.}
    \end{subfigure}%
    ~
    \begin{subfigure}[t]{0.33\textwidth}
        \centering
        \includegraphics[width=1\textwidth]{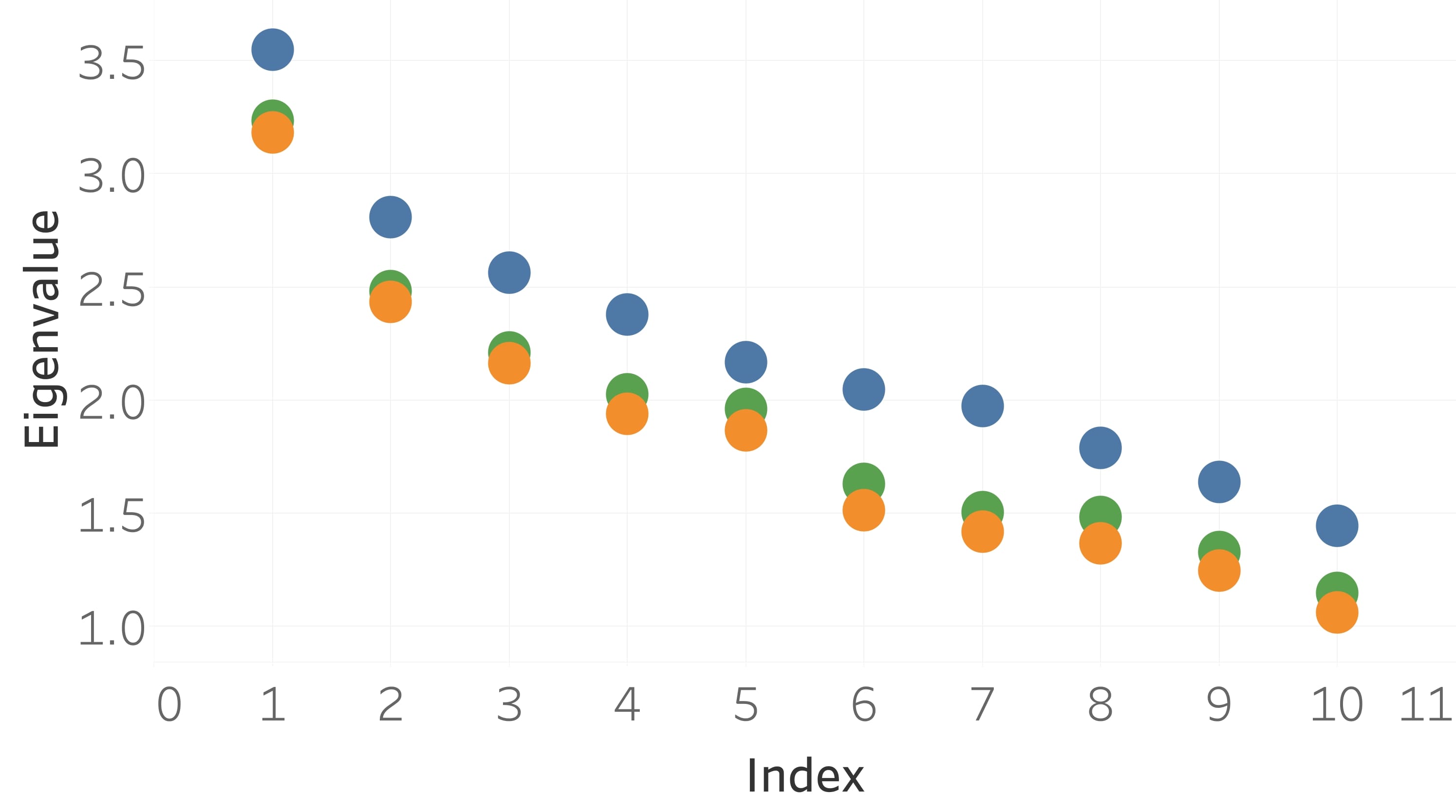}
        \caption{Fashion MNIST, 3605 examples per class.}
    \end{subfigure}%
    ~
    \begin{subfigure}[t]{0.33\textwidth}
        \centering
        \includegraphics[width=1\textwidth]{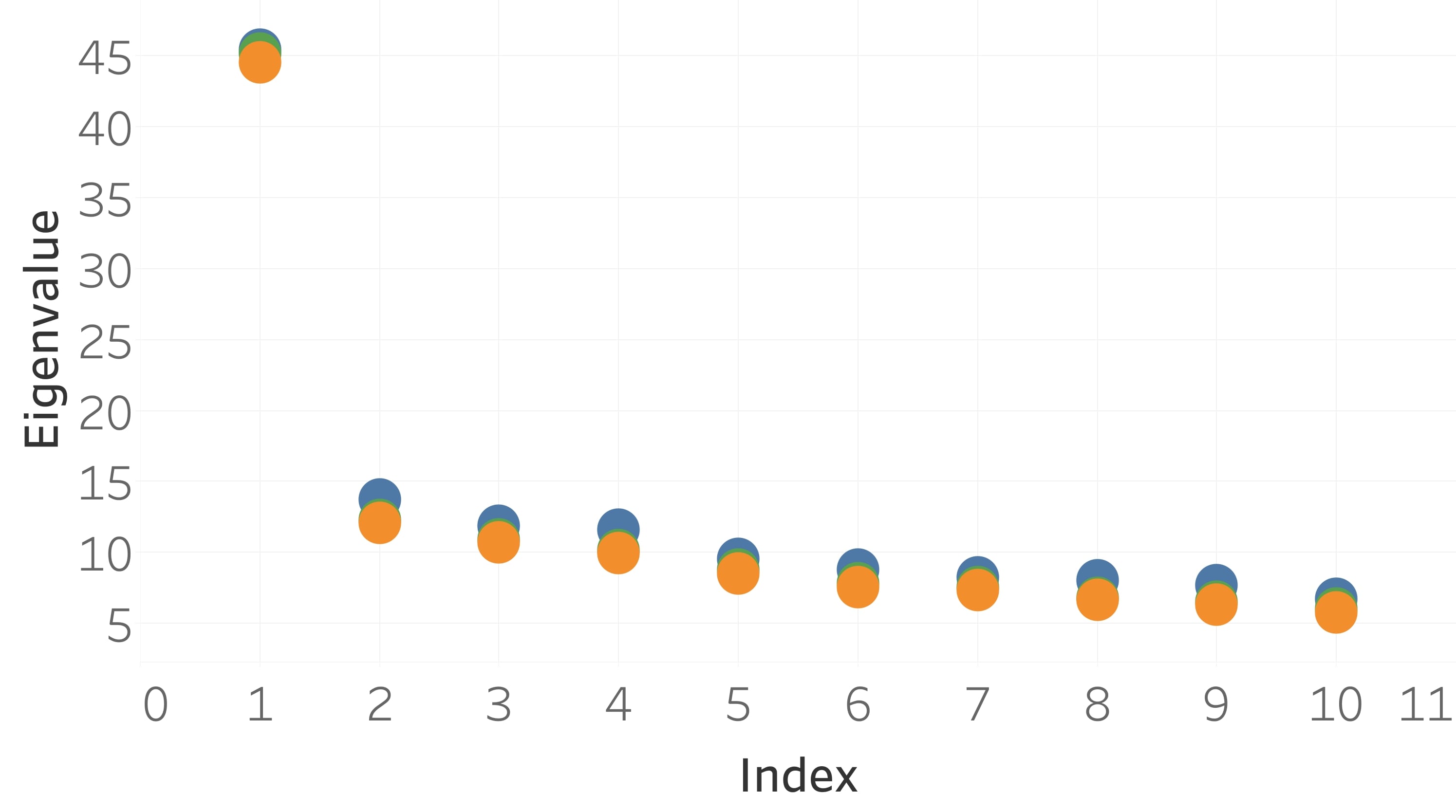}
        \caption{CIFAR10, 3605 examples per class.}
    \end{subfigure}
    \caption{\textit{Scree plots of $G_1$, $G_{1+2}$ and $G$ for the VGG11 architecture.} Each column of panels corresponds to a different dataset, and each row to a different sample size. Each panel plots the top-$C$ eigenvalues of $G_1$ in \textbf{{\color{Orange}orange}}, $G_{1+2}$ in \textbf{{\color{ForestGreen}green}} and $G$ in \textbf{{\color{MidnightBlue}blue}} (following the same color code as in Figure \ref{fig:DOS}). The top eigenvalues in $G$ -- which correspond to the outliers in the approximated spectrum of $G$ in Figure \ref{fig:DOS} -- were computed using the \textsc{LowRankDeflation} procedure in \cite{papyan2018full}. For every $1\leq c \leq C$, we have $\lambda_c(G) \geq \lambda_c(G_{1+2}) \geq \lambda_c (G_1)$. Moreover, $\lambda_c(G_{1+2})$ and $\lambda_c (G_1)$ are usually very close.
    } \label{fig:SI_SB_SW_ST}
\end{figure*}

\subsection{A note on stochasticity}
Deep learning practitioners often insert randomness into their architectures. The most common examples are preprocessing the input data, for example using random flips and crops, or using dropout \cite{srivastava2014dropout} layers. These sources of randomness complicate the analysis of the Hessian and its components in that they turn them into random variables. This, in turn, complicates the usage of the methods we employ in this paper -- such as Lanczos, subspace iteration and SVD -- all of which assume deterministic linear operators. To circumvent these nuisances, we do not employ any preprocessing on the input data and we replace the dropout layers in the VGG architecture with batch normalization layers.

\section{Conclusion}
Outliers sticking beyond the bulk edge were previously observed in the spectrum of the Hessian of deep networks. This paper described an organization of the ingredients of the Hessian which explains the outliers. The structuring we introduce here offers a novel three-level hierarchical decomposition. This provides an approximation for the outliers, which was proven empirically across many scenarios. Moreover, deviations between the two were found to exist, as might have been predicted by RMT.

\bibliography{example_paper}

\begin{thebibliography}{25}
\providecommand{\natexlab}[1]{#1}
\providecommand{\url}[1]{\texttt{#1}}
\expandafter\ifx\csname urlstyle\endcsname\relax
  \providecommand{\doi}[1]{doi: #1}\else
  \providecommand{\doi}{doi: \begingroup \urlstyle{rm}\Url}\fi

\bibitem[B{\"o}hning(1992)]{bohning1992multinomial}
B{\"o}hning, D.
\newblock Multinomial logistic regression algorithm.
\newblock \emph{Annals of the institute of Statistical Mathematics},
  44\penalty0 (1):\penalty0 197--200, 1992.

\bibitem[Chaudhari et~al.(2016)Chaudhari, Choromanska, Soatto, LeCun, Baldassi,
  Borgs, Chayes, Sagun, and Zecchina]{chaudhari2016entropy}
Chaudhari, P., Choromanska, A., Soatto, S., LeCun, Y., Baldassi, C., Borgs, C.,
  Chayes, J., Sagun, L., and Zecchina, R.
\newblock Entropy-sgd: Biasing gradient descent into wide valleys.
\newblock \emph{arXiv preprint arXiv:1611.01838}, 2016.

\bibitem[Deng et~al.(2009)Deng, Dong, Socher, Li, Li, and
  Fei-Fei]{deng2009imagenet}
Deng, J., Dong, W., Socher, R., Li, L.-J., Li, K., and Fei-Fei, L.
\newblock Imagenet: A large-scale hierarchical image database.
\newblock In \emph{Computer Vision and Pattern Recognition, 2009. CVPR 2009.
  IEEE Conference on}, pp.\  248--255. Ieee, 2009.

\bibitem[Dinh et~al.(2017)Dinh, Pascanu, Bengio, and Bengio]{dinh2017sharp}
Dinh, L., Pascanu, R., Bengio, S., and Bengio, Y.
\newblock Sharp minima can generalize for deep nets.
\newblock \emph{arXiv preprint arXiv:1703.04933}, 2017.

\bibitem[Geiger et~al.(2018)Geiger, Spigler, d'Ascoli, Sagun, Baity-Jesi,
  Biroli, and Wyart]{geiger2018jamming}
Geiger, M., Spigler, S., d'Ascoli, S., Sagun, L., Baity-Jesi, M., Biroli, G.,
  and Wyart, M.
\newblock The jamming transition as a paradigm to understand the loss landscape
  of deep neural networks.
\newblock \emph{arXiv preprint arXiv:1809.09349}, 2018.

\bibitem[Gur-Ari et~al.(2018)Gur-Ari, Roberts, and Dyer]{gur2018gradient}
Gur-Ari, G., Roberts, D.~A., and Dyer, E.
\newblock Gradient descent happens in a tiny subspace.
\newblock \emph{arXiv preprint arXiv:1812.04754}, 2018.

\bibitem[He et~al.(2016)He, Zhang, Ren, and Sun]{he2016deep}
He, K., Zhang, X., Ren, S., and Sun, J.
\newblock Deep residual learning for image recognition.
\newblock In \emph{Proceedings of the IEEE conference on computer vision and
  pattern recognition}, pp.\  770--778, 2016.

\bibitem[Hochreiter \& Schmidhuber(1997)Hochreiter and
  Schmidhuber]{hochreiter1997flat}
Hochreiter, S. and Schmidhuber, J.
\newblock Flat minima.
\newblock \emph{Neural Computation}, 9\penalty0 (1):\penalty0 1--42, 1997.

\bibitem[Hoffer et~al.(2017)Hoffer, Hubara, and Soudry]{hoffer2017train}
Hoffer, E., Hubara, I., and Soudry, D.
\newblock Train longer, generalize better: closing the generalization gap in
  large batch training of neural networks.
\newblock In \emph{Advances in Neural Information Processing Systems}, pp.\
  1731--1741, 2017.

\bibitem[Huang et~al.(2017)Huang, Liu, Van Der~Maaten, and
  Weinberger]{huang2017densely}
Huang, G., Liu, Z., Van Der~Maaten, L., and Weinberger, K.~Q.
\newblock Densely connected convolutional networks.
\newblock In \emph{CVPR}, volume~1, pp.\ ~3, 2017.

\bibitem[Jastrzkebski et~al.(2018)Jastrzkebski, Kenton, Ballas,
  Fischer, Bengio, and Storkey]{jastrzkebski2018relation}
Jastrzkebski, S., Kenton, Z., Ballas, N., Fischer, A., Bengio, Y., and
  Storkey, A.
\newblock On the relation between the sharpest directions of dnn loss and the
  sgd step length.
\newblock 2018.

\bibitem[Keskar et~al.(2016)Keskar, Mudigere, Nocedal, Smelyanskiy, and
  Tang]{keskar2016large}
Keskar, N.~S., Mudigere, D., Nocedal, J., Smelyanskiy, M., and Tang, P. T.~P.
\newblock On large-batch training for deep learning: Generalization gap and
  sharp minima.
\newblock \emph{arXiv preprint arXiv:1609.04836}, 2016.

\bibitem[Krizhevsky \& Hinton(2009)Krizhevsky and
  Hinton]{krizhevsky2009learning}
Krizhevsky, A. and Hinton, G.
\newblock Learning multiple layers of features from tiny images.
\newblock Technical report, Citeseer, 2009.

\bibitem[LeCun et~al.(2010)LeCun, Cortes, and Burges]{lecun2010mnist}
LeCun, Y., Cortes, C., and Burges, C.
\newblock Mnist handwritten digit database.
\newblock \emph{AT\&T Labs [Online]. Available: http://yann. lecun.
  com/exdb/mnist}, 2, 2010.

\bibitem[Maaten \& Hinton(2008)Maaten and Hinton]{maaten2008visualizing}
Maaten, L. v.~d. and Hinton, G.
\newblock Visualizing data using t-sne.
\newblock \emph{Journal of machine learning research}, 9\penalty0
  (Nov):\penalty0 2579--2605, 2008.

\bibitem[Papyan(2018)]{papyan2018full}
Papyan, V.
\newblock The full spectrum of deep net hessians at scale: Dynamics with sample
  size.
\newblock \emph{arXiv preprint arXiv:1811.07062}, 2018.

\bibitem[Pennington \& Bahri(2017)Pennington and Bahri]{pennington2017geometry}
Pennington, J. and Bahri, Y.
\newblock Geometry of neural network loss surfaces via random matrix theory.
\newblock In \emph{International Conference on Machine Learning}, pp.\
  2798--2806, 2017.

\bibitem[Pennington \& Worah(2018)Pennington and Worah]{pennington2018spectrum}
Pennington, J. and Worah, P.
\newblock The spectrum of the fisher information matrix of a
  single-hidden-layer neural network.
\newblock In \emph{Advances in Neural Information Processing Systems}, pp.\
  5415--5424, 2018.

\bibitem[Sagun et~al.(2016)Sagun, Bottou, and LeCun]{sagun2016eigenvalues}
Sagun, L., Bottou, L., and LeCun, Y.
\newblock Eigenvalues of the hessian in deep learning: Singularity and beyond.
\newblock \emph{arXiv preprint arXiv:1611.07476}, 2016.

\bibitem[Sagun et~al.(2017)Sagun, Evci, Guney, Dauphin, and
  Bottou]{sagun2017empirical}
Sagun, L., Evci, U., Guney, V.~U., Dauphin, Y., and Bottou, L.
\newblock Empirical analysis of the hessian of over-parametrized neural
  networks.
\newblock \emph{arXiv preprint arXiv:1706.04454}, 2017.

\bibitem[Simonyan \& Zisserman(2014)Simonyan and Zisserman]{simonyan2014very}
Simonyan, K. and Zisserman, A.
\newblock Very deep convolutional networks for large-scale image recognition.
\newblock \emph{arXiv preprint arXiv:1409.1556}, 2014.

\bibitem[Spigler et~al.(2018)Spigler, Geiger, d'Ascoli, Sagun, Biroli, and
  Wyart]{spigler2018jamming}
Spigler, S., Geiger, M., d'Ascoli, S., Sagun, L., Biroli, G., and Wyart, M.
\newblock A jamming transition from under-to over-parametrization affects loss
  landscape and generalization.
\newblock \emph{arXiv preprint arXiv:1810.09665}, 2018.

\bibitem[Srivastava et~al.(2014)Srivastava, Hinton, Krizhevsky, Sutskever, and
  Salakhutdinov]{srivastava2014dropout}
Srivastava, N., Hinton, G., Krizhevsky, A., Sutskever, I., and Salakhutdinov,
  R.
\newblock Dropout: a simple way to prevent neural networks from overfitting.
\newblock \emph{The Journal of Machine Learning Research}, 15\penalty0
  (1):\penalty0 1929--1958, 2014.

\bibitem[Xiao et~al.(2017)Xiao, Rasul, and Vollgraf]{xiao2017fashion}
Xiao, H., Rasul, K., and Vollgraf, R.
\newblock Fashion-mnist: a novel image dataset for benchmarking machine
  learning algorithms.
\newblock \emph{arXiv preprint arXiv:1708.07747}, 2017.

\bibitem[Yaida(2018)]{yaida2018fluctuation}
Yaida, S.
\newblock Fluctuation-dissipation relations for stochastic gradient descent.
\newblock \emph{arXiv preprint arXiv:1810.00004}, 2018.

\end{thebibliography}
\bibliographystyle{icml2019}

\end{document}